\def\eqref#1{equation~\ref{#1}}
\def\1{\bm{1}}
\DeclareMathAlphabet{\mathsfit}{\encodingdefault}{\sfdefault}{m}{sl}
\SetMathAlphabet{\mathsfit}{bold}{\encodingdefault}{\sfdefault}{bx}{n}
\newtheorem{theorem}{Theorem}
\numberwithin{theorem}{section}
\newtheorem{corollary}[theorem]{Corollary}
\newtheorem{lemma}[theorem]{Lemma}
\newtheorem{assumption}[theorem]{Assumption}
\newtheorem{proposition}[theorem]{Proposition}
\newtheorem{definition}[theorem]{Definition}
\newenvironment{customthm}[1]
  {\innercustomthm}
  {\endinnercustomthm}
\newenvironment{customcor}[1]
  {\innercustomcor}
  {\endinnercustomcor}
\newenvironment{customprop}[1]
  {\innercustomprop}
  {\endinnercustomprop}
\title{Generalization through the Lens of Leave-one-out Error}
\author{Gregor Bachmann \\
Department of Computer Science\\
ETH Zürich\\
Switzerland \\
\texttt{gregor.bachmann@inf.ethz.ch} \\
\And
Thomas Hofmann \\
Department of Computer Science\\
ETH Zürich\\
Switzerland \\
\texttt{thomas.hofmann@inf.ethz.ch} \\
\And
Aur\'elien Lucchi  \\
Department of Mathematics and Computer Science \\
University of Basel \\
Switzerland\\
\texttt{aurelien.lucchi@unibas.ch} 
}
\begin{document}

\maketitle

\begin{abstract}
Despite the tremendous empirical success of deep learning models to solve various learning tasks, our theoretical understanding of their generalization ability is very limited. Classical generalization bounds based on tools such as the VC dimension or Rademacher complexity, are so far unsuitable for deep models and it is doubtful that these techniques can yield tight bounds even in the most idealistic settings~\citep{nagarajan2019uniform}. In this work, we instead revisit the concept of leave-one-out (LOO) error to measure the generalization ability of deep models in the so-called kernel regime. While popular in statistics, the LOO error has been largely overlooked in the context of deep learning. By building upon the recently established connection between neural networks and kernel learning, we leverage the closed-form expression for the leave-one-out error, giving us access to an efficient proxy for the test error. We show both theoretically and empirically that the leave-one-out error is capable of capturing various phenomena in generalization theory, such as double descent, random labels or transfer learning.
Our work therefore demonstrates that the leave-one-out error provides a tractable way to estimate the generalization ability of deep neural networks in the kernel regime, opening the door to potential, new research directions in the field of generalization.
%Our work therefore demonstrates that the leave-one-out error is a valid alternative to other generalization techniques, opening the door to potentially new research directions in the field of generalization.
%While deep learning has led to tremendous empirical advances in many learning tasks, theoretical understanding and guarantees are still heavily limited. Even worse, currently studied techniques might even be too weak to describe the performance \citep{nagarajan2019uniform}. In this work we revisit the leave-one-out error, a very popular tool from statistics.  Through the recently established connection between neural networks and kernel learning, we leverage the closed-form expression for the leave-one-out error, enabling us to analyze a direct, unbiased proxy for the test error. We show both theoretically and empirically that the leave-one-out error and its extension to accuracy are capable of capturing various phenomena in generalization theory associated with neural networks, ranging from double descent to random labels. As a consequence, we advocate that the leave-one-out error might be a better tool to understand generalization in deep learning. 
\end{abstract}

\section{Introduction}

Neural networks have achieved astonishing performance across many learning tasks such as in computer vision \citep{he2015deep}, natural language processing \citep{devlin2019bert} and graph learning \citep{kipf2017semisupervised} among many others. Despite the large overparametrized nature of these models, they have shown a surprising ability to generalize well to unseen data. The theoretical understanding of this generalization ability has been an active area of research where contributions have been made using diverse analytical tools~\citep{arora2018stronger, bartlett2017nearlytight, bartlett2017spectrallynormalized, neyshabur2015normbased, neyshabur2018pacbayesian}. Yet, the theoretical bounds derived by these approaches typically suffer from one or several of the following limitations: i) they are known to be loose or even vacuous~\citep{jiang2019fantastic}, ii) they only apply to randomized networks ~\citep{dziugaite2018data, dziugaite2017computing}, and iii) they rely on the concept of uniform convergence which was shown to be non-robust against adversarial perturbations~\citep{nagarajan2019uniform}.\\
In this work, we revisit the concept of leave-one-out error (LOO) and demonstrate its surprising ability to predict generalization for deep neural networks in the so-called kernel regime.
This object is an important statistical estimator of the generalization performance of an algorithm that is theoretically motivated by a connection to the concept of uniform stability \citep{Pontil2002LeaveoneoutEA}. It has also recently been advocated by~\citet{nagarajan2019uniform} as a potential substitute for uniform convergence bounds. Despite being popular in classical statistics~\citep{CAWLEY20032585, Vehtari_2016, 19039, 6789675}, LOO has largely been overlooked in the context of deep learning, likely due to its high computational cost.
However, recent advances from the neural tangent kernel perspective~\citep{jacot2020neural} render the LOO error all of a sudden tractable thanks to the availability of a closed-form expression due to \citet[Eq.~3.13]{loo_formula}. While the role of the LOO error as an estimator of the generalization performance is debated in the statistics community \citep{ZHANG201595, bengio_cross, breiman_loo, 6790877}, the recent work by \citet{pmlr-v130-patil21a} established a consistency result in the case of ridge regression, ensuring that LOO converges to the generalization error in the large sample limit, under mild assumptions on the data distribution. \\ Inspired by these recent advances, in this work, we investigate the use of LOO as a generalization measure for deep neural networks in the kernel regime. We find that LOO is a surprisingly rich descriptor of the generalization error, capturing a wide range of phenomena such as random label fitting, double descent and transfer learning. Specifically, we make the following contributions:
\begin{itemize}
    \item We extend the LOO expression for the multi-class setting to the case of zero regularization and derive a new closed-form formula for the resulting LOO accuracy. %complementing the known formula for the LOO loss in the binary-class setting
    %\item We extend the LOO loss and accuracy closed-form formula to a multi-class setting
    \item We investigate both the LOO loss and accuracy in the context of deep learning through the lens of the NTK, and demonstrate empirically that they both capture the generalization ability of networks in the kernel regime in a variety of settings.
    \item We showcase the utility of LOO for practical networks  by accurately predicting their transfer learning performance.
    \item We build on the mathematically convenient form of the LOO loss to derive some novel insights into double descent and the role of regularization.
\end{itemize}
\vspace{2mm}
Our work is structured as follows. We give an overview of related works in Section \ref{sec:related}. In Section \ref{sec:setting} we introduce the setting along with the LOO error and showcase our extensions to the multi-class case and LOO accuracy. Then, in Section \ref{sec:generalization} we analyze the predictive power of LOO in various settings and present our theoretical results on double descent. We discuss our findings and future work in Section \ref{sec:discussion}. We release the code for our numerical experiments on \textit{Github}\footnote{\url{https://github.com/gregorbachmann/LeaveOneOut}}.

\section{Related Work}
\label{sec:related}
Originally introduced by \citet{loo1, loo2}, the leave-one-out error is a standard tool in the field of statistics that has been used to study and improve a variety of models, ranging from support vector machines \citep{Weston1999LeaveOneOutSV}, Fisher discriminant analysis \citep{CAWLEY20032585}, linear regression \citep{hastie2020surprises, pmlr-v130-patil21a} to non-parametric density estimation \citep{19039}.
Various theoretical perspectives have justified the use of the LOO error, including for instance  the framework of uniform stability~\citep{bousquet2002stability} or generalization bounds for KNN models~\citep{devroye1979distribution}. ~\cite{elisseeff2003leave} focused on the stability of a learning algorithm to demonstrate a formal link between LOO error and generalization. 
The vanilla definition of the LOO requires access to a large number of trained models which is typically computationally expensive. Crucially, \citet{loo_formula} derived an elegant closed-form formula for kernels, completely removing the need for training multiple models. In the context of deep learning, the LOO error remains largely unexplored. \citet{shekkizhar2020deepnnk} propose to fit polytope interpolators to a given neural network and estimate its resulting LOO error. \citet{alaa2020discriminative} rely on a Jackknife estimator, applied in a LOO fashion to obtain better confidence intervals for Bayesian neural networks. Finally, we want to highlight the concurrent work of \citet{zhang2021rethinking} who analyze the related concept of influence functions in the context of deep models.  \\[2mm]
Recent works have established a connection between kernel regression and deep learning. Surprisingly, it is shown that an infinitely-wide, fully-connected network behaves like a kernel both at initialization \citep{Neal1996,lee2018deep, matthews2018gaussian} as well as during gradient flow training \citep{jacot2020neural}. Follow-up works have extended this result to the non-asymptotic setting \citep{arora2019exact, lee2019wide}, proving that wide enough networks trained with small learning rates are in the so-called kernel regime, i.e. essentially evolving like their corresponding tangent kernel. Moreover, the analysis has also been adapted to various architectures \citep{arora2019exact, huang2020deep, du2019graph, hron2020infinite, yang2020tensor} as well as discrete gradient descent \citep{lee2019wide}.
The direct connection to the field of kernel regression enables the usage of the closed-form expression for LOO error in the context of deep learning, which to the best of our knowledge has not been explored yet. 

\section{Setting and Background}
\label{sec:setting}
In the following, we establish the notations required to describe the problem of interest and the setting we consider. We study the standard learning setting, where we are given a dataset of input-target pairs $\mathcal{S} = \{(\bm{x}_1, \bm{y}_1), \dots, (\bm{x}_n, \bm{y}_n)\}$ where each $(\bm{x}_i, \bm{y}_i) \stackrel{i.i.d.}{\sim} \mathcal{D}$ for $i=1, \dots, n$ is distributed according to some probability measure $\mathcal{D}$. We refer to $\bm{x}_i \in \mathcal{X} \subset \mathbb{R}^{d}$ as the input and to $\bm{y} \in \mathcal{Y} \subset \mathbb{R}^{C}$ as the target. We will often use matrix notations to obtain more compact formulations, thus summarizing all inputs and targets as matrices $\bm{X} \in \mathbb{R}^{n \times d}$ and $\bm{Y} \in \mathbb{R}^{n \times C}$. We will mainly be concerned with the task of multiclass classification, where targets $\bm{y}_i$ are encoded as one-hot vectors. We consider a \textbf{function space} $\mathcal{F}$ and an associated \textbf{loss function} $L_f:\mathcal{X} \times \mathcal{Y} \xrightarrow[]{} \mathbb{R}$ that measures how well a given function $f \in \mathcal{F}$ performs at predicting the targets.  More concretely, we define the  regularized empirical error as
$$\hat{L}^{\lambda}_{\mathcal{S}}: \mathcal{F} \xrightarrow[]{} \mathbb{R}, \hspace{2mm} f \mapsto  \frac{1}{n} \sum_{i=1}^{n}L_f(\bm{x}_i, \bm{y}_i) + \lambda \Omega(f),$$
where $\lambda >0$ is a regularization parameter and $\Omega: \mathcal{F} \xrightarrow[]{} \mathbb{R}$ is a functional.  For any vector $\bm{v} \in \mathbb{R}^{C}$, let $v^{*} = \operatorname{argmax}_{k\leq C}v_k$. Using this notation, we let ${a}_f(\bm{x}, \bm{y}) := \mathds{1}_{\{f^{*}(\bm{x}) = y^{*}\}}$. We then define the \textbf{empirical accuracy} $\hat{{A}}_{\mathcal{S}}$ as the average number of instances for which $f \in \mathcal{F}$ provides the correct prediction, i.e.
\[\hat{A}_{\mathcal{S}}: \mathcal{F} \xrightarrow[]{} [0, 1], \hspace{2mm} f \mapsto \frac{1}{n} \sum_{i=1}^{n}{a}_f(\bm{x}_i, \bm{y}_i)\]
Finally, we introduce a \textbf{learning algorithm} $\mathcal{Q}_{\mathcal{F}}: \left(\mathcal{X} \times \mathcal{Y}\right)^{n} \xrightarrow[]{} \mathcal{F}$, that given a function class $\mathcal{F}$ and a training set $\mathcal{S} \in \left(\mathcal{X} \times \mathcal{Y}\right)^n$, chooses a function $f \in \mathcal{F}$. We will exclusively be concerned with learning through empirical risk minimization, i.e.
 $$\mathcal{Q}^{\lambda}_{\mathcal{F}}(\mathcal{S}) := \hat{f}^{\lambda}_{\mathcal{S}} := \operatorname{argmin}_{f \in \mathcal{F}}\hat{L}^{\lambda}_{\mathcal{S}}(f),$$ and use the shortcut $\hat{f}_{\mathcal{S}}:=\hat{f}_{\mathcal{S}}^{\lambda=0}$.  In practice however, the most important measures are given by the \textbf{generalization} loss and accuracy of the model, defined as
$$L: \mathcal{F} \xrightarrow[]{} \mathbb{R},\hspace{2mm} f \mapsto \mathbb{E}_{(\bm{x}, \bm{y}) \sim \mathcal{D}}\left[L_f(\bm{x}, \bm{y})\right], \hspace{5mm} A: \mathcal{F} \xrightarrow[]{} [0,1],\hspace{2mm} f \mapsto \mathbb{P}_{(\bm{x}, \bm{y}) \sim \mathcal{D}}\left(f^{*}(\bm{x})={y}^{*}\right).$$

A central goal in machine learning is to control the difference between the empirical error $\hat{L}_{\mathcal{S}}(f)$ and the generalization error $L(f)$. In this work, we mainly focus on the mean squared loss, $$L_f(\bm{x}, \bm{y}) = \frac{1}{2}\sum_{k=1}^{C}(f_k(\bm{x}) -y_k)^2,$$
and also treat classification as a regression task, as commonly done in the literature \citep{lee2018deep, chen2020labelaware, arora2019exact, hui2021evaluation, bachmann2021uniform}.
Finally, we consider the function spaces induced by kernels and neural networks, presented in the following.

\subsection{Kernel Learning}
A kernel ${K}: \mathcal{X} \times \mathcal{X} \xrightarrow[]{} \mathbb{R}$ is a symmetric, positive semi-definite function, i.e. ${K}(\bm{x}, \bm{x}') = {K}(\bm{x}', \bm{x}) \hspace{2mm} \forall \hspace{1mm} \bm{x}, \bm{x}' \in \mathcal{X}$ and for any set $\{\bm{x}_1, \dots, \bm{x}_n\} \subset \mathcal{X}$, the matrix $\bm{K}\in \mathbb{R}^{n \times n}$ with entries $K_{ij} = {K}(\bm{x}_i, \bm{x}_j)$ is positive semi-definite. It can be shown that a kernel induces a \textit{reproducing kernel Hilbert space}, a function space equipped with an inner product $\langle \cdot, \cdot \rangle_{\mathcal{H}}$, containing elements 
$$\mathcal{H} = \operatorname{cl}\Big{(}\big{\{}f : f_k(\cdot) = \sum_{i=1}^{n}\alpha_{ik} {K}(\cdot, \bm{x}_i) \text{ for } \bm{\alpha} \in \mathbb{R}^{n \times C}, \bm{x}_i \in \mathcal{X}\big{\}}\Big{)}$$
where $\operatorname{cl}$ is the closure operator. Although $\mathcal{H}$ is an infinite dimensional space, it turns out that the minimizer $\hat{f}^{\lambda}_{\mathcal{S}}$ can be obtained in closed form for mean squared loss, given by 
$$\hat{f}_{\mathcal{S}}^{\lambda}(\bm{x}) = \bm{K}_{\bm{x}}^{T}\left(\bm{K}+ \lambda \bm{1}_{n}\right)^{-1}\bm{Y} \xrightarrow[]{\lambda \xrightarrow[]{}0}\hat{f}_{\mathcal{S}}(\bm{x}) := \bm{K}_{\bm{x}}^{T}\bm{K}^{\dagger}\bm{Y}$$
where the regularization functional is induced by the inner product of the space, $\Omega(f) = ||f{||}_{\mathcal{H}}^2$,  $\bm{K} \in \mathbb{R}^{n \times n}$ has entries $K_{ij} = {K}(\bm{x}_i, \bm{x}_j)$ and $\bm{K}_{\bm{x}} \in \mathbb{R}^{n}$ has entries $\left(K_{\bm{x}}\right)_i = {K}(\bm{x}, \bm{x}_i)$.
$\bm{A}^{\dagger}$ denotes the pseudo-inverse of $\bm{A} \in \mathbb{R}^{n \times n}$.
We will analyze both the regularized ($\hat{f}_{\mathcal{S}}^{\lambda}$) and unregularized model ($\hat{f}_{\mathcal{S}}$) and their associated LOO errors in the following. For a more in-depth discussion of kernels, we refer the reader to \citet{kernel_hofmann, rkhs}.

\subsection{Neural Networks and Associated Kernels}
Another function class that has seen a rise in popularity is given by the family of fully-connected neural networks of depth $L \in \mathbb{N}$,
$$\mathcal{F}_{NN} = \Big{\{}f_{\bm{\theta}}: f_{\bm{\theta}}(\bm{x}) = \bm{W}^{(L)}\sigma\left(\bm{W}^{(L-1)} \dots \sigma\left(\bm{W}^{(1)}\bm{x}\right)\dots\right), \hspace{2mm} \text{for } \bm{W}^{(l)} \in \mathbb{R}^{d_{l} \times d_{l-1}} \Big{\}},$$
 for $d_l \in \mathbb{N}$, $d_0=d$, $d_L=C$ and $\sigma: \mathbb{R} \xrightarrow[]{} \mathbb{R}$ is an element-wise non-linearity. We denote by $\bm{\theta} \in \mathbb{R}^{m}$ the concatenation of all parameters $\left(\bm{W}^{(1)}, \dots, \bm{W}^{(L)}\right)$, where $m=\sum_{i=0}^{L-1}d_id_{i+1}$ is the number of parameters of the model. The parameters are initialized randomly according to $W_{ij}^{(l)} \sim \mathcal{N}(0, \sigma_{w}^2)$ for $\sigma_{w} > 0$ and then adjusted to optimize the empirical error $\hat{{L}}_{\mathcal{S}}$, usually through an iterative procedure using a variant of gradient descent. Typically, the entire vector $\bm{\theta}$ is optimized via gradient descent, leading to a highly non-convex problem which does not admit a closed-form solution $\hat{f}$, in contrast to kernel learning.
%While neural networks in their standard setting deviate strongly from kernels, there are several variations that give rise again to a kernel.
Next, we will review settings under which the output of a deep neural network can be approximated using a kernel formulation.

\paragraph{Random Features.} Instead of training all the parameters $\bm{\theta}$, we restrict the function space by only optimizing over the top layer $\bm{W}^{(L)}$ and freezing the other variables $\bm{W}^{(1)}, \dots, \bm{W}^{(L-1)}$ to their value at initialization. This restriction makes the model linear in its parameters and induces a finite-dimensional feature map 
$$\phi_{\text{RF}}: \mathbb{R}^{d} \xrightarrow[]{} \mathbb{R}^{d_{L-1}}, \hspace{2mm} \bm{x} \mapsto \sigma\left(\bm{W}^{(L-1)} \dots \sigma\left(\bm{W}^{(1)}\bm{x}\right)\dots\right).$$
Originally introduced by \citet{randomfeatures} to enable more efficient kernel computations, the random feature model has recently seen a strong spike in popularity and has been the topic of many theoretical works \citep{jacot2020implicit, mei2020generalization}.

\paragraph{Infinite Width.} Another, only recently established correspondence emerges in the infinite-width scenario. By choosing $\sigma_w = \frac{1}{\sqrt{d_l}}$ for each layer, \citet{lee2018deep} showed that at initialization, the network exhibits a Gaussian process behaviour in the infinite-width limit. More precisely, it holds that
$f_k \stackrel{i.i.d.}{\sim} \mathcal{GP}\left(0, \Sigma^{(L)}\right)$,
for $k=1, \dots, C$, where $\Sigma^{(L)}: \mathbb{R}^{d} \times \mathbb{R}^{d} \xrightarrow[]{} \mathbb{R}$ is a kernel, coined NNGP, obeying the recursive relation
$$\Sigma^{(l)}(\bm{x}, \bm{x'}) =  \mathbb{E}_{\bm{z} \sim \mathcal{N}\left(\bm{0},{\bm{\Sigma}}^{(l-1)}|_{\bm{x}, \bm{x}'}\right)}\big[\sigma(z_1)\sigma(z_2)\big]$$
with base $\Sigma^{(1)}(\bm{x},\bm{x}') = \frac{1}{\sqrt{d}}\bm{x}^{T}\bm{x}'$ and $\bm{\Sigma}^{(l-1)}|_{\bm{x}, \bm{x}'} \in \mathbb{R}^{2 \times 2}$ is ${\Sigma}^{(l-1)}$ evaluated on the set $\{{\bm{x}, \bm{x}'}\}$. To reason about networks trained with gradient descent, \citet{jacot2020neural} extended this framework and proved that the empirical neural tangent kernel converges to a constant kernel $\Theta$. The so-called neural tangent kernel $\Theta: \mathbb{R}^{d} \times \mathbb{R}^{d} \xrightarrow[]{} \mathbb{R}$ is also available through a recursive relation involving the NNGP:
$$\Theta^{(l+1)}(\bm{x}, \bm{x}') = \Theta^{(l)}(\bm{x}, \bm{x}') \dot{\Sigma}^{(l+1)}(\bm{x}, \bm{x}') + \Sigma^{(l+1)}(\bm{x}, \bm{x}'),$$
with base $\Theta^{(1)}(\bm{x}, \bm{x}') = \Sigma^{(1)}(\bm{x}, \bm{x}')$ and $\dot{\Sigma}^{(l)}(\bm{x}, \bm{x}') = \mathbb{E}_{\bm{z} \sim \mathcal{N}(\bm{0},\bm{\Sigma}^{(l-1)}|_{\bm{x}, \bm{x}'}}\big[\dot{\sigma}(z_1)\dot{\sigma}(z_2)\big]$. Training infinitely-wide networks with gradient descent thus reduces  to kernel learning with the NTK $\Theta^{(L)}$. \citet{arora2019exact, lee2019wide} extended this result to a non-asymptotic setting, showing that very wide networks trained with small learning rates become indistinguishable from kernel learning. %Finally, notice that we have the following relationships between random features, linearizations and infinite width:
%$$K_{\text{RF}} \xrightarrow[]{} \Sigma^{(L)}, \hspace{5mm} \hat{\Theta}_0 \xrightarrow[]{} \Theta^{(L)} \hspace{5mm} \text{ as } \hspace{2mm} d_1, \dots, d_L \xrightarrow[]{} \infty.$$

\subsection{Leave-one-out Error}
%In the following, we will focus on the task of multiclass classification. As commonly done in the literature, we will treat it as a regression task in order to leverage the convenient closed-form solutions associated with the MSE. To calculate accuracies based on "hard" predictions, we take the maximum score over classes, i.e. $\operatorname{argmax}_{k} f_k(\bm{x})$ for $f \in \mathcal{F}$ and $\bm{x} \in \mathbb{R}^{d}$. 
The goal of learning is to choose a model $f \in \mathcal{F}$ with minimal generalization error $L(f)$. Since the data distribution $\mathcal{D}$ is usually not known, practitioners obtain an estimate for $L(f)$ by using an additional test set $\mathcal{S}_{\text{test}}=\{\left(\bm{x}_1^{t}, \bm{y}_1^t\right), \dots, \left(\bm{x}_{b}^{t}, \bm{y}_b^t\right)\}$  not used as input to the learning algorithm $\mathcal{Q}_{\mathcal{F}}$, and calculate
\begin{equation}
    \label{eq:test-loss}
    L_{\text{test}} = \frac{1}{b} \sum_{i=1}^{b}L_{f}(\bm{x}_i^{t}, \bm{y}_i^{t}), \hspace{5mm} A_{\text{test}} = \frac{1}{b} \sum_{i=1}^{b}a_{f}(\bm{x}_i^{t}, \bm{y}_i^{t}) 
\end{equation}

While this is a simple approach, practitioners cannot always afford to put a reasonably sized test set aside. This observation led the field of statistics to develop the idea of a leave-one-out error (LOO), allowing one to obtain an estimate of $L(f)$ without having to rely on additional data \citep{loo1, loo2}. We give a formal definition below.
\begin{definition}
Consider a learning algorithm $\mathcal{Q}_{\mathcal{F}}$ and a training set $\mathcal{S}=\{(\bm{x}_i, \bm{y}_i)\}_{i=1}^{n}$. Let $\mathcal{S}_{-i} = \mathcal{S} \setminus \{(\bm{x}_i, \bm{y}_i)\}$ be the training set without the $i$-th data point $(\bm{x}_i, \bm{y}_i)$ and define $\hat{f}^{-i} := \mathcal{Q}_{\mathcal{F}}\left(\mathcal{S}_{-i}\right)$, the corresponding model obtained by training on $\mathcal{S}_{-i}$. The leave-one-out loss/accuracy of $\mathcal{Q}_{\mathcal{F}}$ on $\mathcal{S}$ is then defined as the average loss/accuracy of $\hat{f}^{-i}$ incurred on the left out point $(\bm{x}_i, \bm{y}_i)$, given by
$$L_{\text{LOO}}\left(\mathcal{Q}_{\mathcal{F}};\mathcal{S}\right) = \frac{1}{n}\sum_{i=1}^{n}L_{\hat{f}^{-i}}(\bm{x}_i, \bm{y}_i), \hspace{5mm} A_{\text{LOO}}(\mathcal{Q}_{\mathcal{F}}; \mathcal{S}) = \frac{1}{n}\sum_{i=1}^{n}a_{\hat{f}^{-i}}\left(\bm{x}_i, \bm{y}_i\right)$$
\end{definition}
%\paragraph{Remark.} Notice that $L_{\text{LOO}}$ and $A_{\text{LOO}}$ provide an estimate of the generalization error of a learning algorithm, function space pair, trained on $n-1$ points, instead of $n$. 

Notice that calculating $L_{\text{LOO}}$ can indeed be done solely based on the training set. On the other hand, using the training set to estimate $L(f)$ comes at a cost. Computing $\hat{f}^{-i}$ for $i=1, \dots, n$ requires fitting the given model class $n$ times. While this might be feasible for smaller models such as random forests, the method clearly reaches its limits for larger models and datasets used in deep learning. For instance, evaluating $L_{\text{LOO}}$ for ResNet-152 on \textit{ImageNet} requires fitting a model with $6 \times 10^{7}$ parameters $10^{7}$ times, which clearly becomes extremely inefficient. At first glance, the concept of leave-one-out error therefore seems to be useless in the context of deep learning. However, we will now show that the recently established connection between deep networks and kernel learning gives us an efficient way to compute $L_{\text{LOO}}$. Another common technique is $K$-fold validation, where the training set is split into $K$ equally sized folds and the model is evaluated in a leave-one-out fashion on folds. While more efficient than the vanilla leave-one-out error, $K$-fold does not admit a closed-form solution, making the leave-one-out error more attractive in the setting we study.  \\[2mm]
In the following, we will use the shortcut $\hat{f}_{\mathcal{S}}^{\lambda} := \hat{f}^{\lambda}$. Surprisingly, it turns out that the space of kernels using $\mathcal{Q}^{\lambda}_{\mathcal{F}}$ with mean-squared loss admits a closed-form expression for the leave-one-out error, only relying on a \emph{single model} obtained on the full training set $(\bm{X}, \bm{Y})$:
\begin{tcolorbox}
\begin{theorem}
\label{closed-form-loo}
Consider a kernel ${K}:\mathbb{R}^{d} \times \mathbb{R}^{d} \xrightarrow[]{} \mathbb{R}$ and the associated objective with regularization parameter $\lambda>0$ under mean squared loss. Define $\bm{A} = \bm{K}\left(\bm{K} + \lambda \bm{1}_n\right)^{-1}$. Then it holds that
\begin{equation}
    \label{eq:loo}
    L_{\text{LOO}}\left(\mathcal{Q}^{\lambda}_{\mathcal{F}}\right) = \frac{1}{n}\sum_{i=1}^{n}\sum_{k=1}^{C}\left(\Delta_{ik}^{\lambda}\right)^2, \hspace{5mm} A_{\text{LOO}}\left(\mathcal{Q}^{\lambda}_{\mathcal{F}}\right) = \frac{1}{n}\sum_{i=1}^{n}\mathds{1}_{\big{\{}\left(\bm{y}_{i}-\Delta^{\lambda}_{i\bullet}\right)^{*}=\bm{y}_{i}^{*}\big{\}}}
\end{equation}
where the residual $\Delta_{ik}^{\lambda} \in \mathbb{R}$ for $i=1, \dots, n$, $k=1, \dots, C$ is given by
$\Delta_{ik}^{\lambda} = \frac{Y_{ik} - \hat{f}^{\lambda}_{k}(\bm{x}_i)}{1-A_{ii}}$.
\end{theorem}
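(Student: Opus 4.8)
The plan is to exploit the fact that for the mean-squared loss with a kernel, everything is linear, so the leave-one-out predictor $\hat f^{-i}$ has a closed form that can be related to the full-data predictor $\hat f^\lambda$ via a rank-one update. Concretely, fix an index $i$ and note that $\hat f^{-i}$ solves the kernel ridge regression problem on $\mathcal{S}_{-i}$. Rather than invert a smaller kernel matrix, I would use the standard ``leave-one-out trick'': introduce the auxiliary dataset $\tilde{\mathcal{S}}$ which agrees with $\mathcal{S}$ on all points $j\neq i$ but replaces the target $\bm{y}_i$ by the \emph{prediction} $\hat f^{-i}(\bm{x}_i)$. The key observation is that $\hat f^{-i}$ is also the ridge regression solution on $\tilde{\mathcal{S}}$, because the $i$-th residual in the modified problem is zero, so that point contributes nothing to the normal equations and the minimizer is unchanged. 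This lets me write $\hat f^{-i}$ using the \emph{same} matrix $\bm{A} = \bm{K}(\bm{K}+\lambda\bm{1}_n)^{-1}$ as the full predictor, just with a modified target vector.

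**Key steps in order:**

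First I would record, for any target matrix $\bm{Y}'\in\mathbb{R}^{n\times C}$, that the in-sample predictions of kernel ridge regression are $\hat f^\lambda_{[\bm{Y}']}(\bm{x}_j) = (\bm{A}\bm{Y}')_{j\bullet}$, i.e. $\bm{A}$ is the ``smoother'' (hat) matrix. Second, apply this with $\bm{Y}'$ equal to $\bm{Y}$ but with row $i$ overwritten by $\hat f^{-i}(\bm{x}_i)$; call that row $\bm{p}_i$. By the leave-one-out trick just described, the $i$-th prediction of this modified problem equals $\hat f^{-i}(\bm{x}_i) = \bm{p}_i$ itself, giving the fixed-point equation $\bm{p}_i = A_{ii}\bm{p}_i + \sum_{j\neq i} A_{ij}\bm{Y}_{j\bullet}$. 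Third, add and subtract $A_{ii}\bm{Y}_{i\bullet}$ to recognize $\sum_j A_{ij}\bm{Y}_{j\bullet} = \hat f^\lambda(\bm{x}_i)$, yielding $\bm{p}_i = \hat f^\lambda(\bm{x}_i) - A_{ii}\bm{Y}_{i\bullet} + A_{ii}\bm{p}_i$, hence $(1-A_{ii})(\bm{Y}_{i\bullet}-\bm{p}_i) = \bm{Y}_{i\bullet} - \hat f^\lambda(\bm{x}_i)$. Fourth, identify the LOO residual: $Y_{ik} - \hat f^{-i}_k(\bm{x}_i) = Y_{ik} - p_{ik} = \frac{Y_{ik}-\hat f^\lambda_k(\bm{x}_i)}{1-A_{ii}} = \Delta^\lambda_{ik}$. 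Plugging into the definitions of $L_{\text{LOO}}$ and $A_{\text{LOO}}$ gives the two displayed formulas, since $\hat f^{-i}(\bm{x}_i) = \bm{y}_i - \Delta^\lambda_{i\bullet}$ and $a_{\hat f^{-i}}(\bm{x}_i,\bm{y}_i) = \mathds{1}_{\{(\bm{y}_i - \Delta^\lambda_{i\bullet})^* = \bm{y}_i^*\}}$.

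**Main obstacle and remaining care:**

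I expect the main obstacle to be rigorously justifying the leave-one-out trick, i.e. that the ridge solution on $\tilde{\mathcal{S}}$ coincides with $\hat f^{-i}$. The cleanest route is to argue via the representer-theorem normal equations: the ridge objective on the $n$ points with targets $\bm{Y}'$ is minimized at coefficients $\bm\alpha$ solving $(\bm{K}+\lambda\bm{1}_n)\bm\alpha = \bm{Y}'$, so the fitted values are $\bm{K}\bm\alpha = \bm{A}\bm{Y}'$; meanwhile the objective restricted to $\mathcal{S}_{-i}$ (an $(n{-}1)$-point problem) and then re-evaluated has the same minimizer as the $n$-point problem with the $i$-th loss term set to zero, which is exactly what target $\bm{p}_i$ achieves. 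One must also check the edge case $A_{ii}=1$: under $\lambda>0$ and a genuine PSD kernel one has $0 \le A_{ii} < 1$ (since $\bm{A} = \bm{1}_n - \lambda(\bm{K}+\lambda\bm{1}_n)^{-1}$ and $(\bm{K}+\lambda\bm{1}_n)^{-1}$ is positive definite with all diagonal entries positive), so the division is well-defined; I would state this as a short remark. The accuracy formula then follows with no further work beyond unwinding the definition of $a_f$ and the $\operatorname{argmax}$ notation.
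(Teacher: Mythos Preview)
Your proposal is correct and follows essentially the same approach as the paper: both introduce the auxiliary dataset where $\bm{y}_i$ is replaced by $\hat f^{-i}(\bm{x}_i)$, argue that $\hat f^{-i}$ minimizes the full $n$-point ridge objective on this modified dataset (the paper does this via the inequality chain $L^\lambda_{\mathcal{Z}}(\hat f^{-i}) = L^\lambda_{\mathcal{S}_{-i}}(\hat f^{-i}) \le L^\lambda_{\mathcal{S}_{-i}}(f) \le L^\lambda_{\mathcal{Z}}(f)$, which is exactly the rigorous version of your ``zero residual contributes nothing'' intuition), and then solve the resulting fixed-point equation for $\hat f^{-i}(\bm{x}_i)$ to obtain both the loss and accuracy formulas. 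Your added remark that $A_{ii}<1$ whenever $\lambda>0$ is a nice touch the paper leaves implicit.
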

\end{tcolorbox}
For the remainder of this text, we will use the shortcut $L_{\text{LOO}}^{\lambda}:=L_{\text{LOO}}\left(Q_{\mathcal{F}}^{\lambda}\right)$.
While the expression for $L_{\text{LOO}}^{\lambda}$ for $C=1$ has been known in the statistics community for decades \citep{loo_formula} and more recently for general $C$ \citep{article, loo_multi}, the formula for $A_{\text{LOO}}^{\lambda}$ is to the best of our knowledge a novel result. We provide a proof of Theorem \ref{closed-form-loo} in the Appendix \ref{proof:closed-form}. Notice that Theorem \ref{closed-form-loo} not only allows for an efficient computation of $L_{\text{LOO}}^{\lambda}$ and $A^{\lambda}_{\text{LOO}}$ but also provides us with a simple formula to assess the generalization behaviour of the model. It turns out that $L_{\text{LOO}}^{\lambda}$ captures a great variety of generalization phenomena, typically encountered in deep learning. Leveraging the neural tangent kernel together with Theorem \ref{closed-form-loo} allows us to investigate their origin.\\[3mm]
As a first step, we consider the zero regularization limit $\lambda \xrightarrow[]{} 0$, i.e. $L_{\text{LOO}}^{0}$ since this is the standard setting considered in deep learning. A quick look at the formula reveals however that care has to be taken; as soon as the kernel is invertible, we have a $\frac{0}{0}$ situation. 
\begin{tcolorbox}
\begin{corollary}
\label{loo-no-reg}
Consider the eigendecomposition $\bm{K} = \bm{V}\operatorname{diag}(\bm{\omega})\bm{V}^{T}$ for $\bm{V} \in O(n)$ and $\bm{\omega} \in \mathbb{R}^{n}$. Denote its rank by $r=\operatorname{rank}(\bm{K})$. Then it holds that the residuals $\Delta_{ik}^{\lambda} \in \mathbb{R}$ can be expressed as
$$\Delta_{ik}^{\lambda}(r) = \sum_{l=1}^{n}Y_{lk}\frac{\sum_{j=r+1}^{n}V_{ij}V_{lj} + \sum_{j=1}^{r}\frac{\lambda}{\lambda + \omega_{j}}V_{ij}V_{lj}}{\sum_{j=r+1}^{n}V_{ij}^2 + \sum_{j=1}^{r}\frac{\lambda}{\lambda + \omega_{j}} V_{ij}^2} $$
Moreover in the zero regularization limit, i.e. $\lambda \xrightarrow[]{}0$, it holds that
$$\Delta_{ik}^{\lambda}(r) \xrightarrow[]{}\Delta_{ik}(r)=
\begin{dcases}
\sum_{l=1}^{n}Y_{lk}\frac{\sum_{j=r+1}^{n}V_{ij}V_{lj}}{\sum_{j=r+1}^{n}V_{ij}^2}  \hspace{4mm} \text{ if } r < n \\[0mm]
\sum_{l=1}^{n}Y_{lk} \frac{\sum_{j=1}^{n}\frac{1}{\omega_j}V_{ij}V_{lj}}{ \sum_{j=1}^n\frac{1}{ \omega_j}V_{ij}^2} \hspace{3mm} \text{ if } r = n
\end{dcases}$$
\end{corollary}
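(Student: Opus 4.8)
The plan is to start from the closed-form residual $\Delta_{ik}^{\lambda} = \frac{Y_{ik} - \hat{f}^{\lambda}_k(\bm{x}_i)}{1 - A_{ii}}$ given in Theorem~\ref{closed-form-loo} and simply rewrite both numerator and denominator in the eigenbasis of $\bm{K}$, then take the limit $\lambda \to 0$ carefully. Writing $\bm{K} = \bm{V}\operatorname{diag}(\bm{\omega})\bm{V}^T$ with eigenvalues ordered so that $\omega_1, \dots, \omega_r > 0$ and $\omega_{r+1} = \dots = \omega_n = 0$, the resolvent becomes $(\bm{K} + \lambda \bm{1}_n)^{-1} = \bm{V}\operatorname{diag}\big(\tfrac{1}{\omega_j + \lambda}\big)\bm{V}^T$, so $\bm{A} = \bm{K}(\bm{K}+\lambda\bm{1}_n)^{-1} = \bm{V}\operatorname{diag}\big(\tfrac{\omega_j}{\omega_j+\lambda}\big)\bm{V}^T$. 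Hence $1 - A_{ii} = \sum_{j=1}^n \big(1 - \tfrac{\omega_j}{\omega_j+\lambda}\big)V_{ij}^2 = \sum_{j=1}^n \tfrac{\lambda}{\omega_j+\lambda}V_{ij}^2 = \sum_{j=r+1}^{n}V_{ij}^2 + \sum_{j=1}^{r}\tfrac{\lambda}{\lambda+\omega_j}V_{ij}^2$, which is exactly the claimed denominator. For the numerator, $Y_{ik} - \hat f^\lambda_k(\bm x_i) = \big((\bm{1}_n - \bm{A})\bm{Y}\big)_{ik} = \sum_{l=1}^n \big(\sum_{j=1}^n \tfrac{\lambda}{\omega_j+\lambda}V_{ij}V_{lj}\big) Y_{lk}$, and splitting the $j$-sum at $r$ gives the claimed numerator. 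This establishes the first displayed formula for $\Delta_{ik}^\lambda(r)$.

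For the $\lambda \to 0$ limit I would treat the two cases separately. If $r < n$, then the $j \le r$ terms in both numerator and denominator carry a factor $\tfrac{\lambda}{\lambda+\omega_j} \to 0$, while the $j > r$ terms are $\lambda$-independent; provided $\sum_{j=r+1}^n V_{ij}^2 \neq 0$ the limit is obtained by simply dropping the vanishing terms, yielding $\Delta_{ik}(r) = \sum_l Y_{lk}\,\tfrac{\sum_{j=r+1}^n V_{ij}V_{lj}}{\sum_{j=r+1}^n V_{ij}^2}$. If $r = n$ the kernel is invertible and all the $j > r$ sums are empty, so numerator and denominator are both $O(\lambda)$; factoring out $\lambda$ from each and using $\tfrac{1}{\omega_j+\lambda} \to \tfrac{1}{\omega_j}$ term by term gives $\Delta_{ik}(n) = \sum_l Y_{lk}\,\tfrac{\sum_{j=1}^n \tfrac{1}{\omega_j}V_{ij}V_{lj}}{\sum_{j=1}^n \tfrac{1}{\omega_j}V_{ij}^2}$. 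One can note this second case is consistent with the interpolation picture: when $\bm{K}$ is invertible $\hat f_{\bm S}$ fits the training data exactly, so the naive numerator $Y_{ik} - \hat f_k(\bm x_i)$ is zero and one genuinely needs the $\tfrac{0}{0}$ resolution that the eigenbasis provides.

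The only real subtlety — and the step I would be most careful about — is the well-definedness of the limit, i.e. that the relevant denominators are nonzero. When $r < n$ one needs $\sum_{j=r+1}^n V_{ij}^2 > 0$, equivalently that the $i$-th standard basis vector has a nonzero component in $\ker \bm{K}$; this can genuinely fail for particular $i$, and I would either add this as a mild assumption, remark that it holds generically, or observe that when it fails the point $\bm{x}_i$ lies in a subspace where the residual has a different (still explicitly computable) form. The rest is a routine term-by-term limit of a ratio of finite sums of continuous functions of $\lambda$, so no uniformity or interchange-of-limit issues arise once the denominator is controlled. I would present the numerator/denominator identities as two short lemmas (or inline computations) and then dispatch both cases of the limit in a couple of lines each.
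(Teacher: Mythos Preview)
Your proposal is correct and follows essentially the same approach as the paper: diagonalize $\bm{A}=\bm{K}(\bm{K}+\lambda\bm{1}_n)^{-1}$ in the eigenbasis of $\bm{K}$, express numerator and denominator of $\Delta_{ik}^\lambda$ as sums over eigendirections, split at the rank $r$, and then take the $\lambda\to0$ limit case by case. The only cosmetic difference is that you work with $\bm{1}_n-\bm{A}$ directly to obtain both numerator and denominator, whereas the paper computes the diagonal $A_{ii}$ and off-diagonal $A_{ij}$ entries separately (using $\bm{V}_{i\bullet}\perp\bm{V}_{j\bullet}$ and $\|\bm{V}_{i\bullet}\|_2=1$) before recombining; your additional remark on the possible vanishing of $\sum_{j>r}V_{ij}^2$ is a genericity caveat the paper leaves implicit.
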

\end{tcolorbox}
\paragraph{Remark.} At first glance for $\lambda = 0$, a distinct phase transition appears to take place when moving from $r=n-1$ to $r=n$. Notice however that a small eigenvalue $\omega_n$ will allow for a smooth interpolation, i.e. $\Delta_{ik}(n) \xrightarrow[]{\omega_n \xrightarrow[]{}0} \Delta_{ik}(n-1)$ for $i=1, \dots, n$, $k=1, \dots, C$. \\[2mm]
A similar result for the special case of linear regression and $r=n$ has been derived in \citet{hastie2020surprises}. We now explore how well these formulas describe the complex behaviour encountered in deep models in the kernel regime.
%The expression for $\Delta_i^{\lambda}$ is very reminiscent of the effective dimensionality $N_{\text{eff}}(\lambda)= \sum_{k=1}^{n}\frac{\lambda}{\lambda + \omega_k}$ \citep{NIPS1991_c3c59e5f} which has been used in several recent works exploring generalization \citep{maddox2020rethinking, jacot2020implicit}. Interestingly for LOO we encounter a weighted version of $N_{\text{eff}}$. 

\section{Leave-One-Out as a Generalization Proxy}
\label{sec:generalization}
In this section, we study the derived formulas in the context of deep learning, through the lens of the neural tangent kernel and random feature models. Through empirical studies, we investigate LOO for varying sample sizes, different amount of noise in the targets as well as different regimes of overparametrization, in the case of infinitely-wide networks. We evaluate the models on the benchmark vision datasets \textit{MNIST} \citep{lecun-mnisthandwrittendigit-2010} and \textit{CIFAR10} \citep{Krizhevsky_2009_1771}. To estimate the true generalization error, we employ the test sets as detailed in equation \ref{eq:test-loss}. We provide theoretical insights into double descent, establishing the explosion in loss around the interpolation threshold. Finally, we demonstrate the utility of LOO for state-of-the-art networks by predicting their transferability to new tasks. We provide more experiments for varying depths and double descent with random labels in the Appendix \ref{more-exps}. All experiments are performed in Jax \citep{jax2018github} using the neural-tangents library \citep{neuraltangents2020}.
\subsection{Function of Sample Size}
While known to exhibit only a small amount of bias, the LOO error may have a high variance for small sample sizes  \citep{VAROQUAUX201868}. Motivated by the consistency results obtained for ridge regression in \citet{pmlr-v130-patil21a}, we study how the LOO error (and its variance) of an infinitely-wide network behaves as a function of the sample size $n$. In the following we thus interpret LOO as a function of $n$. In order to quantify how well LOO captures the generalization error for different sample regimes, we evaluate a $5$-layer fully-connected NTK model for varying sample sizes on \textit{MNIST} and \textit{CIFAR10}. We display the resulting test and LOO losses in Figure \ref{fig:sample-exp}. We plot the average result of $5$ runs along with confidence intervals. We observe that both  $L_{\text{LOO}}$ and $A_{\text{LOO}}$ closely follow their corresponding test quantity, even for very small sample sizes of around $n=500$. As we increase the sample size, the variance of the estimator decreases and the quality of both LOO loss and accuracy improves even further, offering a very precise estimate at $n=20000$.
\begin{figure}
    \centering
    \begin{subfigure}{0.23\textwidth}
        \includegraphics[width=\textwidth]{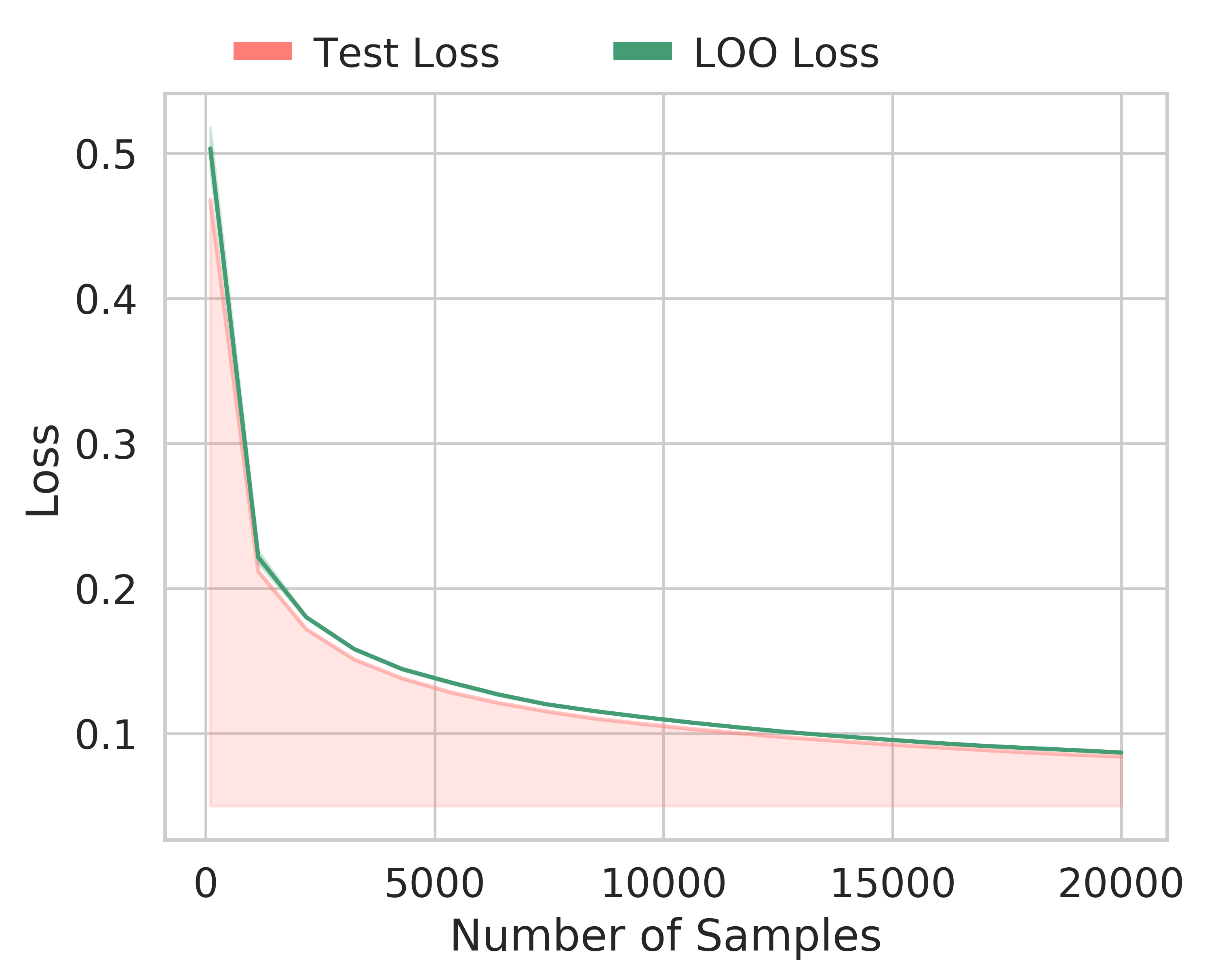}
        \caption{$L_{\text{LOO}}$, \textit{MNIST}}
    \end{subfigure}
    \begin{subfigure}{0.23\textwidth}
        \includegraphics[width=\textwidth]{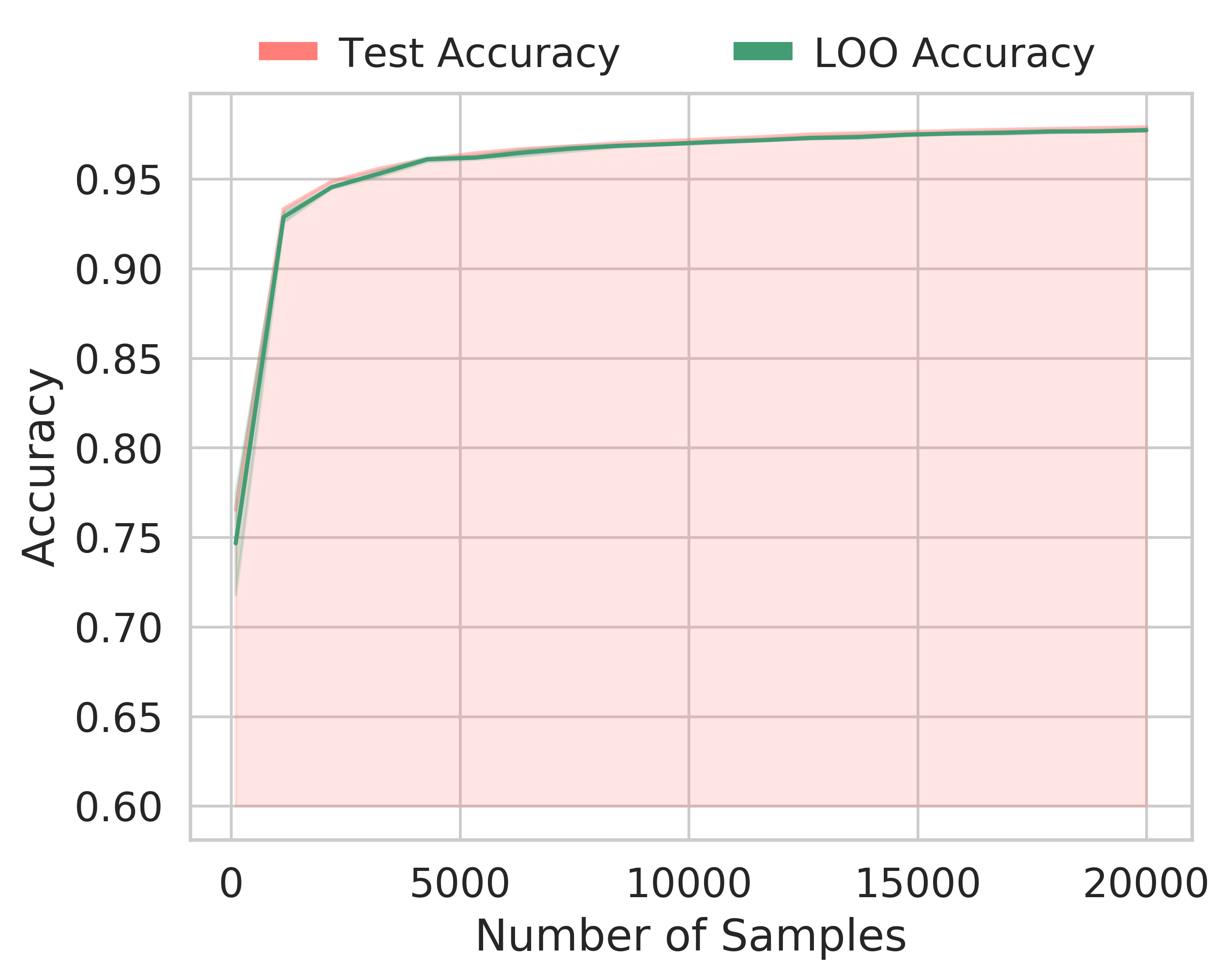}
        \caption{$A_{\text{LOO}}$, \textit{MNIST}}
    \end{subfigure}
    \begin{subfigure}{0.23\textwidth}
        \includegraphics[width=\textwidth]{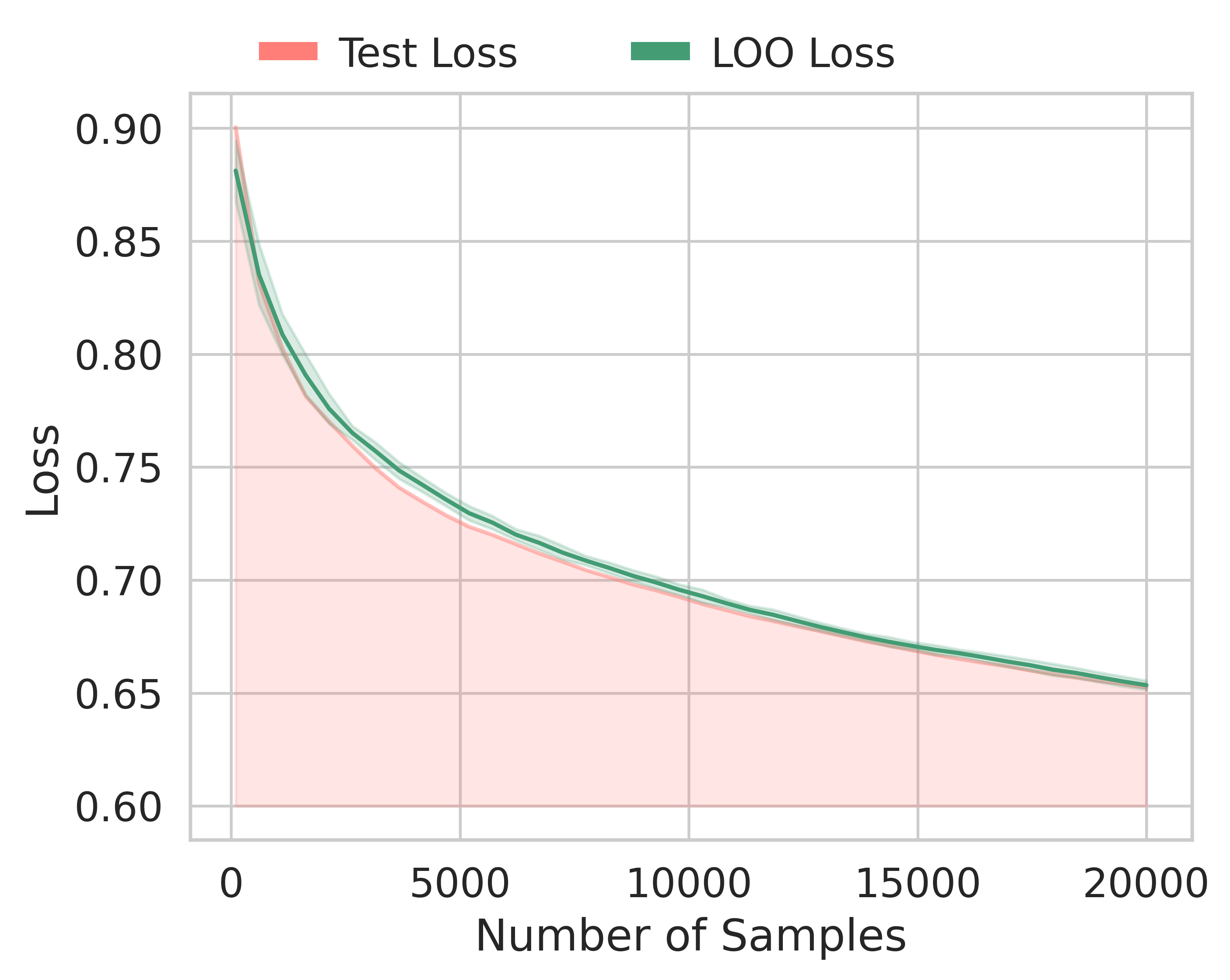}
        \caption{$L_{\text{LOO}}$, \textit{CIFAR10}}
    \end{subfigure}
    \begin{subfigure}{0.23\textwidth}
        \includegraphics[width=\textwidth]{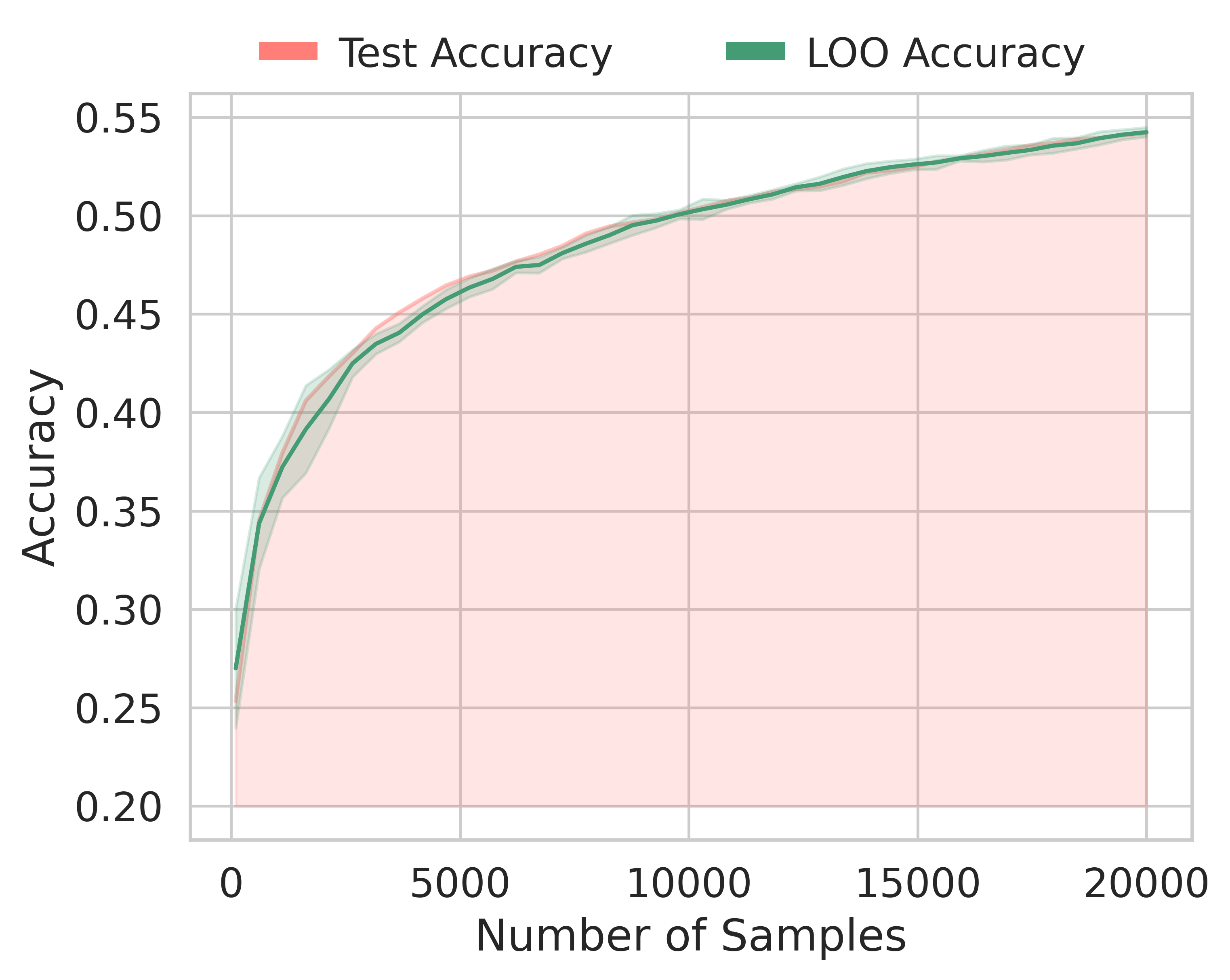}
        \caption{$A_{\text{LOO}}$, \textit{CIFAR10}}
    \end{subfigure}
    \caption{Test and LOO losses (a,c) and accuracies (b, d) as a function of sample size $n$. We use a $5$-layer fully-connected NTK model on \textit{MNIST} and \textit{CIFAR10}.}
    \label{fig:sample-exp}
\end{figure}
\subsection{Random Labels}
Next we study the behaviour of the generalization error when a portion of the labels is randomized. In  \citet{zhang2017understanding}, the authors investigated how the performance of deep neural networks is affected when the functional relationship between inputs $\bm{x}$ and targets $\bm{y}$ is destroyed by replacing $\bm{y}$ by a random label. More specifically, given a dataset $\bm{X} \in \mathbb{R}^{n \times d} $ and one-hot targets $\bm{y}=\bm{e}_k$ for $k \in \{1, \dots, C\}$, we fix a noise level $p \in [0, 1]$ and replace a subset of size $pn$ of the targets with random ones, i.e. $\bm{y} = \bm{e}_{\tilde{k}}$ for $\tilde{k} \sim \mathcal{U}(\{1, \dots, C\})$. We then train the model using the noisy targets $\tilde{\bm{y}}$ while measuring the generalization error with respect to the clean target distribution. As observed in \citet{zhang2017understanding, rolnick2018deep}, neural networks manage to achieve very small empirical error even for $p=1$ whereas their generalization consistently worsens with increasing $p$. Random labels has become a very popular experiment to assess the quality of generalization bounds \citep{arora2019finegrained,arora2018stronger, bartlett2017spectrallynormalized}.  We now show how the leave-one-out error exhibits the same behaviour as the test error under label noise, thus serving as an ideal test bed to analyze this phenomenon. In the following, we consider the leave-one-out error as a function of the labels, i.e. $L_{\text{LOO}} = L_{\text{LOO}}(\bm{y})$. The phenomenon of noisy labels is of particular interest when the model has enough capacity to achieve zero empirical error. For kernels, this is equivalent to $\operatorname{rank}(\bm{K}) = n$ and we will thus assume in this section that the kernel matrix has full rank. However, a direct application of Theorem \ref{closed-form-loo} is not appropriate, since randomizing some of the training labels also randomizes the ``implicit" test set. In other words, our aim is not to evaluate $\hat{f}^{-i}_{\tilde{\mathcal{S}}}$ against $\tilde{\bm{y}}_i$ since $\tilde{\bm{y}}_i$ might be one of the randomized labels. Instead, we want to compare $\hat{f}^{-i}_{\tilde{\mathcal{S}}}$ with the true label $\bm{y}_i$, in order to preserve a clean target distribution. To fix this issue, we derive a formula for the leave-one-out error for a model trained on labels $\tilde{\bm{Y}}$ but evaluated on $\bm{Y}$:
\begin{tcolorbox}
\begin{proposition}
\label{noisy-loo}
Consider a kernel with spectral decomposition $\bm{K} = \bm{V}\operatorname{diag}(\bm{\omega})\bm{V}^{T}$ for $\bm{V} \in \mathbb{R}^{n \times n}$ orthogonal and $\bm{\omega} \in \mathbb{R}^{n}$. Assume that $\operatorname{rank}(\bm{K}) = n$. Then it holds that the leave-one-out error $L_{\text{LOO}}(\tilde{\bm{Y}}; \bm{Y})$ for a model trained on $\tilde{\mathcal{S}} = \{\left(\bm{x}_i, \tilde{\bm{y}}_i\right)\}_{i=1}^{n}$ but evaluated on ${\mathcal{S}} = \{\left(\bm{x}_i, \bm{y}_i\right)\}_{i=1}^{n}$ is given by
$$L_{\text{LOO}}(\tilde{\bm{Y}};\bm{Y}) = \frac{1}{n}\sum_{i=1}^{n}\sum_{k=1}^{K}\left(\tilde{\Delta}_{ik} + Y_{ik} - \tilde{Y}_{ik}\right)^2, \hspace{5mm} A_{\text{LOO}}(\tilde{\bm{Y}};\bm{Y}) = \frac{1}{n}\sum_{i=1}^{n}\mathds{1}_{\big{\{}\left( \tilde{\bm{y}}_{i}-\tilde{\bm{{\Delta}}}_{i\bullet}\right)^{*}= \bm{y}_{i}^{*}\big{\}}}$$
where $\tilde{\Delta}_{ik} = {\Delta}_{ik}(\tilde{\bm{Y}}) \in \mathbb{R}$ is defined as in Corollary \ref{loo-no-reg}.
\end{proposition}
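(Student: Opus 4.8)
The plan is to reduce Proposition \ref{noisy-loo} to the already-established closed-form expression of Theorem \ref{closed-form-loo} by carefully bookkeeping which label is used for \emph{training} versus \emph{evaluation}. The key observation is that the quantity $\hat{f}^{-i}$ depends only on the training labels $\tilde{\bm{Y}}$, so the predictor on the left-out point is exactly the one analyzed in Theorem \ref{closed-form-loo} applied to the dataset $\tilde{\mathcal{S}}$; the only thing that changes is that the loss incurred on the left-out point is now measured against the clean label $\bm{y}_i$ rather than $\tilde{\bm{y}}_i$. First I would recall the standard leave-one-out identity for kernel ridge regression, namely that for $\lambda\to 0$ with $\operatorname{rank}(\bm{K})=n$,
\[
\hat{f}^{-i}_{k}(\bm{x}_i) \;=\; \tilde{Y}_{ik} - \tilde{\Delta}_{ik},
\]
where $\tilde{\Delta}_{ik} = \Delta_{ik}(\tilde{\bm{Y}})$ is the residual from Corollary \ref{loo-no-reg} built from the training labels $\tilde{\bm{Y}}$. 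This is exactly the content of Theorem \ref{closed-form-loo} in the limit $\lambda \to 0$ (using $A_{ii}\to 1$ is handled via the full-rank formula of Corollary \ref{loo-no-reg}), so it can be cited rather than re-derived.

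Given this identity, the rest is immediate substitution. For the loss, by definition $L_{\text{LOO}}(\tilde{\bm{Y}};\bm{Y}) = \frac1n \sum_i \sum_k \tfrac12\big(\hat{f}^{-i}_k(\bm{x}_i) - Y_{ik}\big)^2$ — but note the paper's loss has the $\tfrac12$ absorbed, matching the form in Theorem \ref{closed-form-loo}, so I would just write
\[
\hat{f}^{-i}_k(\bm{x}_i) - Y_{ik} \;=\; \big(\tilde{Y}_{ik} - \tilde{\Delta}_{ik}\big) - Y_{ik} \;=\; -\big(\tilde{\Delta}_{ik} + Y_{ik} - \tilde{Y}_{ik}\big),
\]
and squaring gives the claimed expression $\big(\tilde{\Delta}_{ik} + Y_{ik} - \tilde{Y}_{ik}\big)^2$. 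For the accuracy, $a_{\hat{f}^{-i}}(\bm{x}_i,\bm{y}_i) = \mathds{1}_{\{(\hat{f}^{-i}(\bm{x}_i))^* = y_i^*\}}$, and substituting $\hat{f}^{-i}_{\bullet}(\bm{x}_i) = \tilde{\bm{y}}_i - \tilde{\bm{\Delta}}_{i\bullet}$ yields $\mathds{1}_{\{(\tilde{\bm{y}}_i - \tilde{\bm{\Delta}}_{i\bullet})^* = \bm{y}_i^*\}}$, which is the stated formula. Averaging over $i$ completes both identities.

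The only genuine obstacle is justifying the leave-one-out identity $\hat{f}^{-i}_k(\bm{x}_i) = \tilde{Y}_{ik} - \tilde{\Delta}_{ik}$ cleanly in the unregularized, full-rank regime, since Theorem \ref{closed-form-loo} is stated for $\lambda > 0$ and the naive $\lambda\to 0$ limit produces the $\tfrac00$ issue flagged in the remark after Corollary \ref{loo-no-reg}. I would handle this by invoking Corollary \ref{loo-no-reg} directly: it already supplies the well-defined limiting residual $\Delta_{ik}(n) = \sum_l Y_{lk}\,\big(\sum_j \omega_j^{-1} V_{ij}V_{lj}\big)/\big(\sum_j \omega_j^{-1} V_{ij}^2\big)$, and the same algebra that derives it shows $\hat{f}^{-i}_k(\bm{x}_i) = Y_{ik} - \Delta_{ik}(n)$ with training labels substituted for $\bm{Y}$. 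So the proof is essentially: cite Corollary \ref{loo-no-reg} with $\bm{Y}\leftarrow\tilde{\bm{Y}}$ to identify $\hat{f}^{-i}$ on the left-out point, then substitute into the definitions of $L_{\text{LOO}}$ and $A_{\text{LOO}}$ measured against $\bm{Y}$. Everything else is a one-line expansion of a square and a one-line rewrite of an indicator.
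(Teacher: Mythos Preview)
Your proposal is correct and follows essentially the same approach as the paper: both arguments start from the leave-one-out prediction identity $\tilde{f}^{-i}_{\lambda,k}(\bm{x}_i)=\tilde{Y}_{ik}-\tilde{\Delta}^{\lambda}_{ik}$ (equivalently, the paper's intermediate step $Y_{ik}-\tilde{f}^{-i}_{\lambda,k}(\bm{x}_i)=(Y_{ik}-\tilde{Y}_{ik})+\tilde{\Delta}^{\lambda}_{ik}$) established in the proof of Theorem~\ref{closed-form-loo}, and then simply substitute the clean label $Y_{ik}$ in place of $\tilde{Y}_{ik}$ when forming the loss and the accuracy indicator. The paper likewise invokes Corollary~\ref{loo-no-reg} to pass to the unregularized full-rank limit, so your handling of the $\lambda\to 0$ issue matches theirs.
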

\end{tcolorbox}

We defer the proof to the Appendix \ref{proof:noisy-loo}. Attentive readers will notice that we indeed recover Theorem \ref{closed-form-loo} when $\tilde{\bm{Y}} = \bm{Y}$. Using this generalization of LOO, we can study the setup in \citet{zhang2017understanding} to test whether LOO indeed correctly reflects the behaviour of the generalization error. To this end, we perform a random label experiment on \textit{MNIST} and \textit{CIFAR10} with $n=20000$ for a NTK model of depth $5$, where we track test and LOO losses while increasingly perturbing the targets with noise. We display the results in Figure~\ref{fig:noisy-exp}. We see an almost perfect match between test and LOO loss as well test and LOO accuracy, highlighting how precisely LOO reflects the true error across all settings.
\begin{figure}
    \centering
    \begin{subfigure}{0.23\textwidth}
        \includegraphics[width=\textwidth]{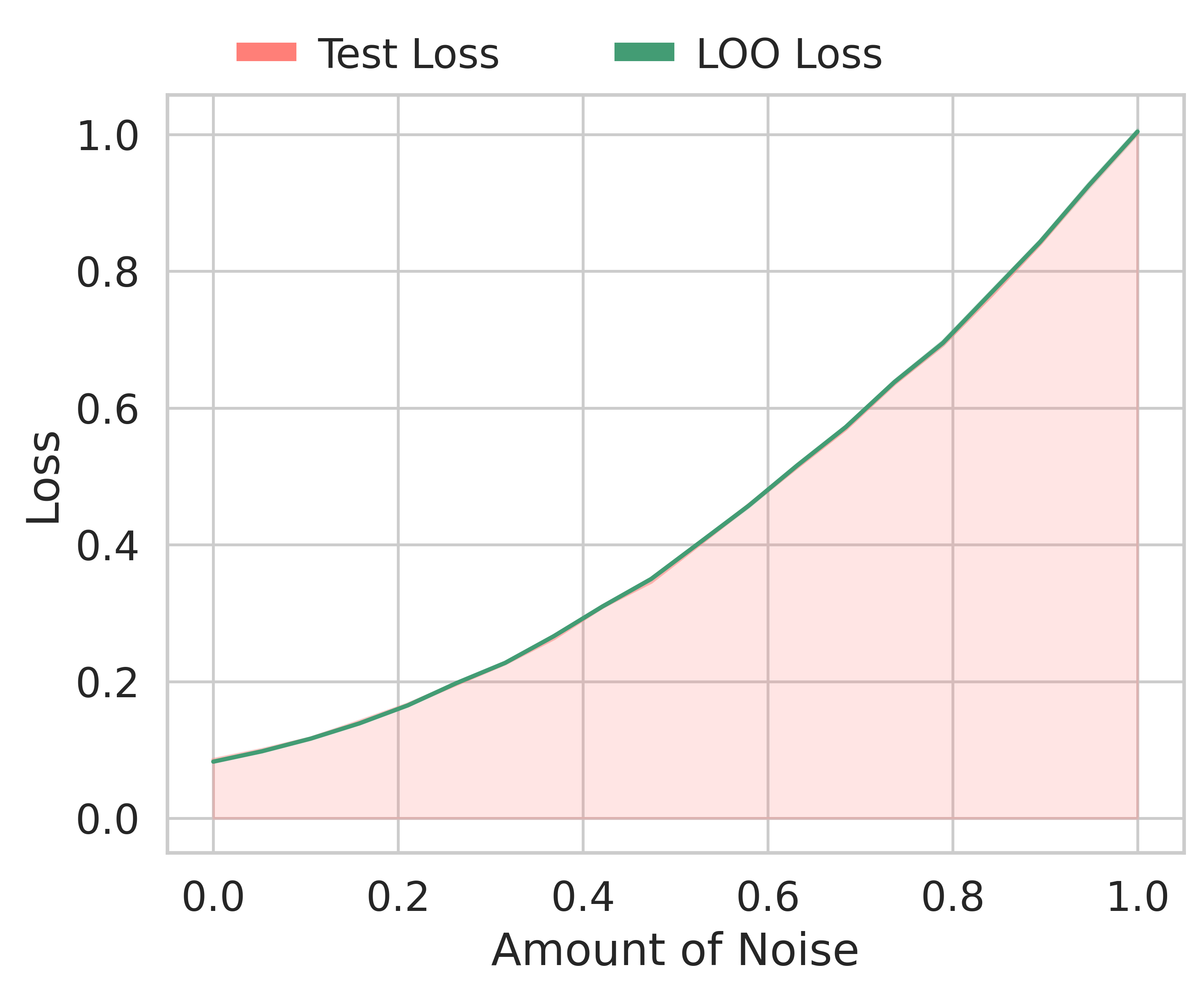}
        \caption{$L_{\text{LOO}}$, \textit{MNIST}}
    \end{subfigure}
    \begin{subfigure}{0.23\textwidth}
        \includegraphics[width=\textwidth]{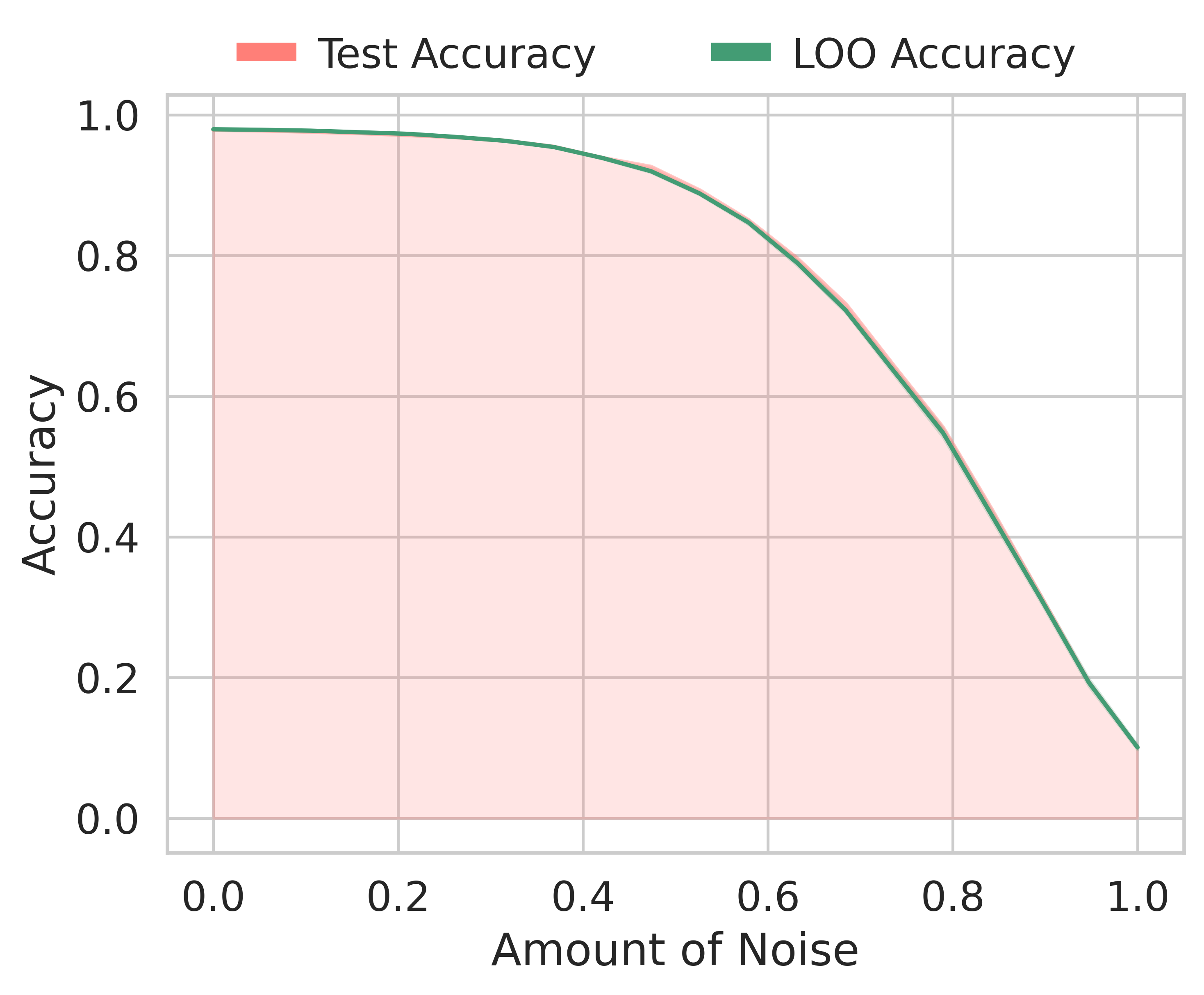}
        \caption{$A_{\text{LOO}}$, \textit{MNIST}}
    \end{subfigure}
    \begin{subfigure}{0.23\textwidth}
        \includegraphics[width=\textwidth]{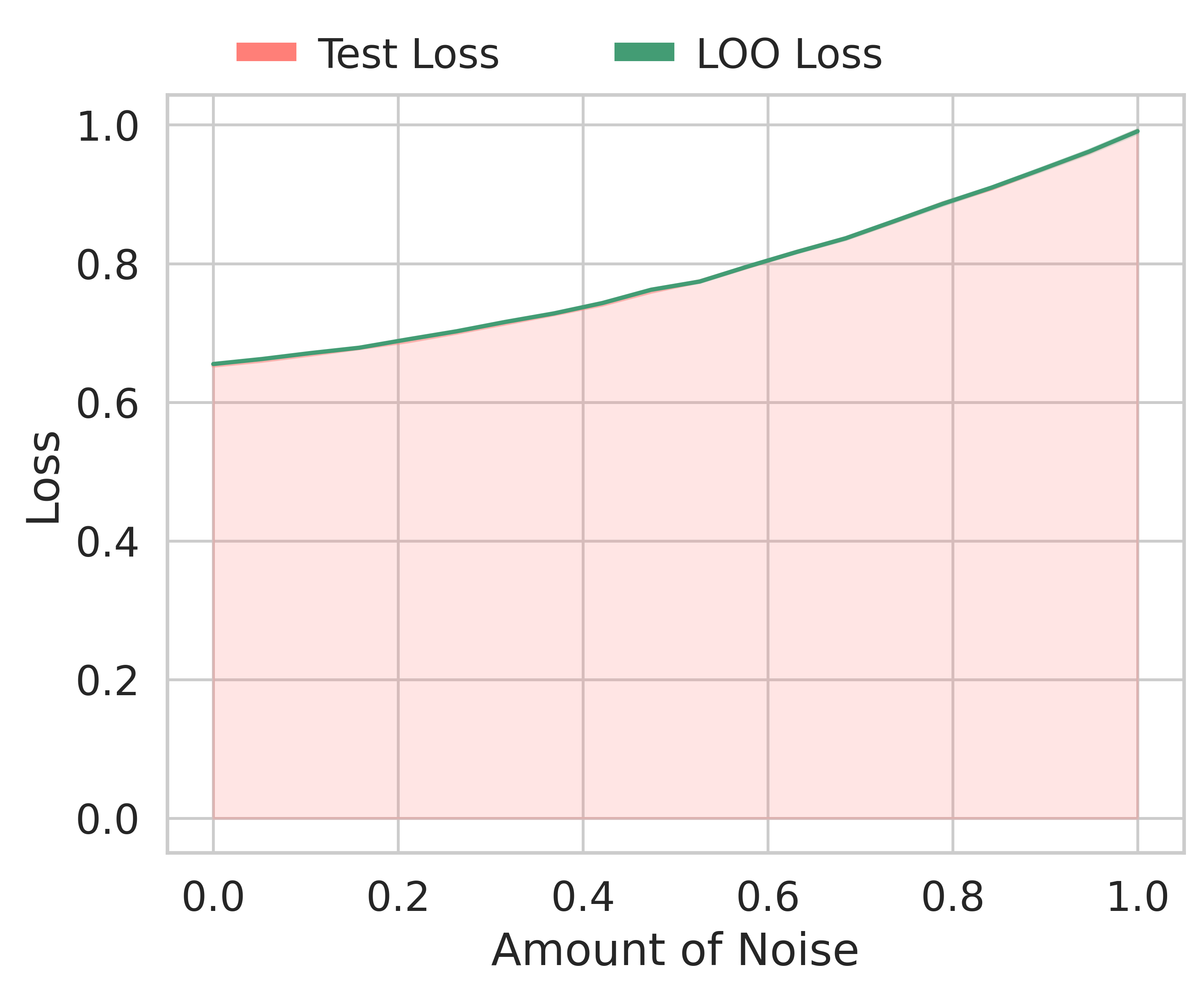}
        \caption{$L_{\text{LOO}}$, \textit{CIFAR10}}
    \end{subfigure}
    \begin{subfigure}{0.23\textwidth}
        \includegraphics[width=\textwidth]{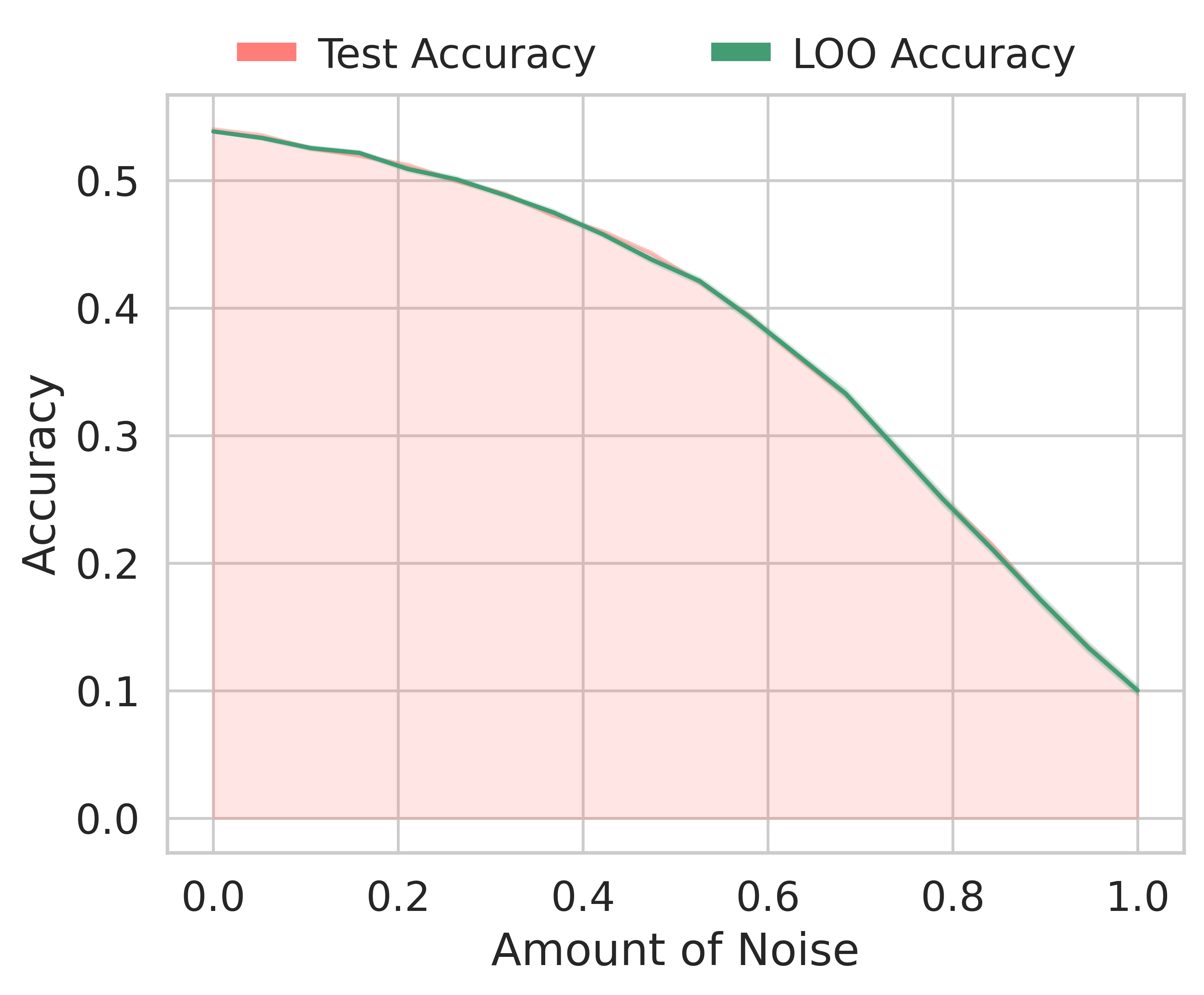}
        \caption{$A_{\text{LOO}}$, \textit{CIFAR10}}
    \end{subfigure}
    \caption{Test and LOO losses (a, c) and accuracies (b, d) as a function of noise. We use a $5$-layer fully-connected NTK model on \textit{MNIST} and  \textit{CIFAR10} with $n=20000$.}
    \label{fig:noisy-exp}
\end{figure}

\subsection{Double Descent}
Originally introduced by \citet{belkin2019reconciling}, the double descent curve describes the peculiar shape of the generalization error as a function of the model complexity. The error first follows the classical $U$-shape, where an increase in complexity is beneficial until a specific threshold. Then, increasing the model complexity induces overfitting, which leads to worse performance and eventually a strong spike around the interpolation threshold. However, as found in \citet{belkin2019reconciling}, further increasing the complexity again reduces the error, often leading to the overall best performance. A large body of works provide insights into the double descent curve but most of them rely on very restrictive assumptions such as Gaussian input distributions \citep{harzli2021doubledescent, adlam2020understanding, dascoli2020double} or teacher models \citep{bartlett2020benign, harzli2021doubledescent, adlam2020understanding, mei2020generalization, hastie2020surprises} and require very elaborate proof techniques. Here we show under arguably weaker assumptions that the LOO loss also exhibits a spike around the interpolation threshold. Our proof is very intuitive, relying largely on Equation \ref{eq:loo} and standard probability tools, whereas prior work needed to leverage involved mathematical tools from random matrix theory. \\[2mm]

In order to reason about the effect of complexity, we have to consider models with a finite dimensional feature map $\phi_m : \mathbb{R}^{d} \xrightarrow[]{} \mathbb{R}^{m}$. Increasing the dimensionality of the feature space $\mathbb{R}^{m}$ naturally overparametrizes the model and induces a dynamic in the rank $r(m):= \operatorname{rank}(\bm{K}) = \operatorname{rank}\left(\phi_m(\bm{X})\right)$. The exact rank dynamics $r(m)$ are obviously specific to the employed feature map $\phi_m$, but by construction always monotonic. We restrict our attention to models that admit an interpolation point, i.e. $\exists m^{*} \in \mathbb{N}$ such that $r(m^{*}) = n$. While the developed theory holds for any kernel, we focus the empirical evaluations largely to the random feature model due to its aforementioned connection to neural networks. In the following, we will interpret the leave-one-out-error as a function of $m$ and $n$, i.e. $L_{\text{LOO}} = L_{\text{LOO}}^{n}(m)$. Notice that $L^{n}_{\text{LOO}}(m)$ measures the generalization error of a model trained on $n-1$ points, instead of $n$. The underlying interpolation point thus shifts, i.e. we denote $m^{*} \in \mathbb{N}$ such that $r(m^{*}) = n-1$. In the following we will show that  $L^{n}_{\text{LOO}}(m^{*}(n)) \xrightarrow[]{n \xrightarrow[]{}\infty} \infty$. %On the other hand, we identify a dampening effect for both $m << m^{*}$ and $m >> m^{*}$. In particular, for $m=1$ we show that $L^{n}_{\text{LOO}}(1) \xrightarrow[]{n \xrightarrow[]{}\infty} < 1$.
\paragraph{Intuition.}  We give a high level explanation for our theoretical insights in this paragraph. For $m < m^{*}$ the dominating term in $L_{\text{LOO}}^{n}(m)$ is 
$$g(r, \lambda) = \sum_{i=1}^{n}\frac{1}{\sum_{l=r+1}^{n}V_{il}^2 + \sum_{l=1}^{r}\frac{\lambda}{\lambda + \omega_l}V_{il}^2}$$
We observe a blow-up for small $\lambda$ due to the unit vector nature of both $\bm{V}_{i \bullet}$ and $\bm{V}_{\bullet j}$. For instance, considering the interpolation point where $r=n-1$ and $\lambda=0$, we get 
$g(n-1, 0) = \sum_{i=1}^{n}\frac{1}{V_{in}^2}$, the sum over the elements of $\bm{V}_{\bullet n}$, which likely has a small entry due to its unit norm nature. On the other hand, reducing $r$ or increasing $\lambda$ has a dampening effect as we combine multiple positive terms together, increasing the likelihood to move away from the singularity. A similar dampening behaviour in $L_{\text{LOO}}^{n}(m)$ can also be observed for $m>m^{*}$ when $\omega_n >>0$.

%For $m > m^{*}$, the singular term is given by 
%$$f_i(r) = \frac{1}{\sum_{l=1}^{n}\frac{1}{\omega_l}v_{il}^2}$$
%An increase in $m$ reduces the gap between $\omega_{\text{min}}$ and the remaining eigenvalues, again dampening the term more and more. 
In order to mathematically prove the spike at the interpolation threshold, we need an assumption on the spectral decomposition of $\bm{K}$:
\begin{assumption}
\label{assumption}
Consider again the eigendecomposition $\bm{K} = \bm{V}\operatorname{diag}\left(\bm{\omega}\right)\bm{V}^{T}$. We impose the following distributional assumption on the rotation matrix $\bm{V}$:
$$\bm{V} \sim \mathcal{U}\left(O_n\right)$$
where $\mathcal{U}\left(O_n\right)$ is the Haar measure on the orthogonal group $O_n$.
\end{assumption}
Under Assumption \ref{assumption}, we can prove that at the interpolation point, the leave-one-out error exhibits a spike, which worsens as we increase the amount of data:
\begin{tcolorbox}
\begin{theorem}
\label{blowup}
For large enough $n \in \mathbb{N}$ and $\lambda \xrightarrow[]{} 0$, it holds that
$${L}_{LOO}^{n}(m^{*}) \gtrapprox 2n A$$
where $A \sim \Gamma(\frac{1}{2}, 1)$ is independent of $n$. ${L}^{n}_{LOO}(m^{*})$ hence diverges a.s. with $n \xrightarrow[]{} \infty$.
\end{theorem}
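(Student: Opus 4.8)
The plan is to instantiate Corollary~\ref{loo-no-reg} at $r=n-1$ and let $\lambda\to0$. With the eigendecomposition $\bm{K}=\bm{V}\operatorname{diag}(\bm{\omega})\bm{V}^{T}$ arranged so that $\omega_{1},\dots,\omega_{n-1}>0$ and $\omega_{n}=0$, the sums $\sum_{j=r+1}^{n}$ in the Corollary retain only the $j=n$ term, so the zero-regularization residual collapses to the rank-one form
\[
\Delta_{ik}(n-1)=\sum_{l=1}^{n}Y_{lk}\,\frac{V_{in}V_{ln}}{V_{in}^{2}}=\frac{(\bm{Y}^{T}\bm{v})_{k}}{v_{i}},\qquad \bm{v}:=\bm{V}_{\bullet n}.
\]
Substituting this into Equation~\ref{eq:loo} at $\lambda=0$ and factoring the double sum over $i$ and $k$ gives the clean identity
\[
L_{\text{LOO}}^{n}(m^{*})=\frac{\|\bm{Y}^{T}\bm{v}\|^{2}}{n}\sum_{i=1}^{n}\frac{1}{v_{i}^{2}} .
\]
Since $\bm{v}$ spans the null space of $\bm{K}$, Assumption~\ref{assumption} makes $\bm{v}$ uniformly distributed on the unit sphere $S^{n-1}$, and the whole theorem reduces to controlling the two factors on the right.

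First I would bound the sum of reciprocals deterministically. By Cauchy--Schwarz, $n=\sum_{i}|v_{i}|\cdot|v_{i}|^{-1}\le\big(\sum_{i}v_{i}^{2}\big)^{1/2}\big(\sum_{i}v_{i}^{-2}\big)^{1/2}$, and $\sum_{i}v_{i}^{2}=\|\bm{v}\|^{2}=1$, so $\sum_{i=1}^{n}v_{i}^{-2}\ge n^{2}$ and hence $L_{\text{LOO}}^{n}(m^{*})\ge n\,\|\bm{Y}^{T}\bm{v}\|^{2}$. Next I would control $\|\bm{Y}^{T}\bm{v}\|^{2}=\bm{v}^{T}\bm{Y}\bm{Y}^{T}\bm{v}$ by keeping only one coordinate of $\bm{Y}^{T}\bm{v}$, say the one indexing a class of size $n_{1}$: $\|\bm{Y}^{T}\bm{v}\|^{2}\ge(\bm{Y}^{T}\bm{v})_{1}^{2}=n_{1}\langle\bm{u},\bm{v}\rangle^{2}$, where $\bm{u}$ is the normalized class indicator. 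Rotation invariance of the Haar measure gives $\langle\bm{u},\bm{v}\rangle^{2}\stackrel{d}{=}v_{1}^{2}\sim\mathrm{Beta}(1/2,(n-1)/2)$, and the classical Beta-to-Gamma scaling limit $\frac{n-1}{2}v_{1}^{2}\Rightarrow\Gamma(1/2,1)$ turns this into $\|\bm{Y}^{T}\bm{v}\|^{2}\gtrapprox\frac{2n_{1}}{n}A$, with $A\sim\Gamma(1/2,1)$ independent of $n$; the prefactor $2n_{1}/n$ equals $2$ in the scalar-output normalization $\|\bm{Y}\|^{2}=n$ and is a class-proportion constant in general. Combining the two estimates yields $L_{\text{LOO}}^{n}(m^{*})\gtrapprox 2nA$, and since $A>0$ almost surely with a law not depending on $n$ we get $2nA\to\infty$. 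To upgrade this to almost-sure (rather than in-probability) divergence with minimal fuss, I would instead retain the single largest reciprocal, $\sum_{i}v_{i}^{-2}\ge(\min_{i}v_{i}^{2})^{-1}$, and use that $\min_{i}v_{i}^{2}$ is of order $n^{-3}$ with exponentially thin upper tails (a union bound over the $\mathrm{Beta}$-distributed coordinates), which pushes the lower bound to order $n^{2}$ with tail probabilities summable in $n$, so Borel--Cantelli applies.

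The probabilistic content above is routine; I expect the delicate part to be the setup. First, the joint passage $\lambda\to0$, $n\to\infty$ together with the ``for large enough $n$'' clause needs care: one must check that under Assumption~\ref{assumption} the null space of $\bm{K}$ is almost surely exactly one-dimensional, that $v_{i}\neq0$ almost surely so that $\Delta_{ik}(n-1)$ carries no $\frac{0}{0}$ ambiguity, and that the convergence in Corollary~\ref{loo-no-reg} is uniform enough to be inserted into the bound. Second, making ``$\gtrapprox$'' precise and identifying $A$ is exactly where the weak convergence $\frac{n-1}{2}v_{1}^{2}\Rightarrow\Gamma(1/2,1)$ has to be converted into a genuine lower bound, and---if almost-sure divergence along the sample sequence is demanded rather than divergence in probability---where the faster polynomial rate with summable tails (or a nested coupling of the samples) must be supplied. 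The remaining multi-class bookkeeping, namely whether the constant is exactly $2$ or a class-proportion-weighted variant, is routine.
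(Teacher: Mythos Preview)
Your proposal is correct and follows essentially the same route as the paper: factor the LOO loss at $r=n-1$ as $\frac{1}{n}\|\bm{Y}^{T}\bm{v}\|^{2}\sum_{i}v_{i}^{-2}$, bound the reciprocal sum by $n^{2}$ deterministically, identify the other factor as $\mathrm{Beta}(1/2,(n-1)/2)$ via rotation invariance of the Haar measure, and apply the Beta-to-Gamma scaling limit. The paper works in the binary setting $y_{i}\in\{-1,1\}$ (hence the exact constant $2$), proves $\sum_{i}v_{i}^{-2}\ge n^{2}$ by a pairing argument rather than your cleaner Cauchy--Schwarz, and is in fact less careful than you about the almost-sure divergence, simply asserting it from the distributional lower bound.
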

\end{tcolorbox}
It is surprising that Assumption \ref{assumption} suffices to prove Theorem \ref{blowup}, highlighting the universal nature of double descent. We provide empirical evidence for the double descent behaviour of LOO in Figure \ref{fig:dd-exp}. We fit a $1$-layer random feature model of varying widths to the binary (labeled) \textit{MNIST} dataset with a sample size of $n=5000$. We observe that the LOO error indeed closely follows the test error, exhibiting the spike as predicted by Theorem \ref{blowup}. The location of the spike exactly matches the interpolation threshold which is situated around $m^{*} \approx n$, in perfect agreement with our theoretical prediction. We remark that the critical complexity $m^{*}$ heavily depends on the architecture of the random features model. For deeper models, more nuanced critical thresholds emerge as we show in Appendix~\ref{app:rank-dynamics}. Finally, we highlight the universal nature of double descent in Appendix~\ref{app:double-descent-random} by showing that $L_{\text{LOO}}$ and $A_{\text{LOO}}$ also display a spike when random labels are used. This is counter-intuitive result highlights the fact that Theorem~\ref{blowup} does not rely on any assumption regarding the distribution of the labels.

%\paragraph{Underparametrized ($m<m^{*}$).}
%To gain intuition, we first consider the extreme case where we limit the complexity of the model to $m \in \mathbb{N}$ such that $r(m) = 1$. This simplifies $L_{\text{LOO}}$ significantly and allows us to calculate its asymptotics:
%\begin{proposition}
%Consider the minimal model where $r(m)=1$. Then it holds with probability $1$ that
%$$L_{\text{LOO}}^{n}(m)=\frac{1}{n}\left(\sum_{i=1}^{n}\frac{V_{i1}^2}{\left(1-V_{i1}^2\right)^2}\right)\left(\sum_{i=1}^{n}y_i V_{i1}\right)^2 \xrightarrow[]{n \xrightarrow[]{} \infty}1$$
%\end{proposition}
%\paragraph{Remark.} Notice that $L_{\text{LOO}}^{n}(m) = 1$ corresponds to the error of the constant model $f=0$.
\begin{figure}
    \centering
    \begin{subfigure}{0.45\textwidth}
        \includegraphics[width=\textwidth]{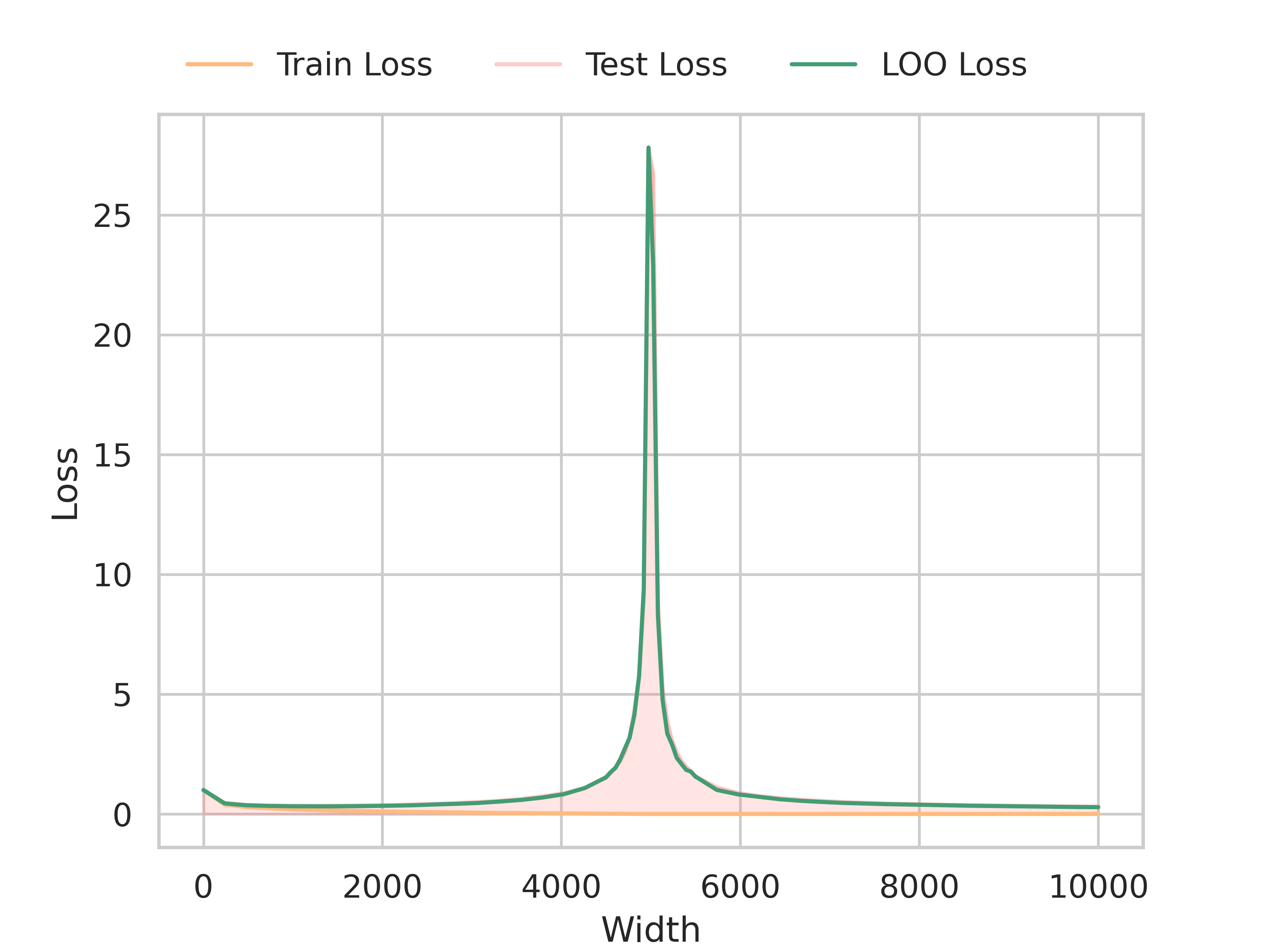}
        \caption{Loss}
    \end{subfigure}
    \begin{subfigure}{0.45\textwidth}
        \includegraphics[width=\textwidth]{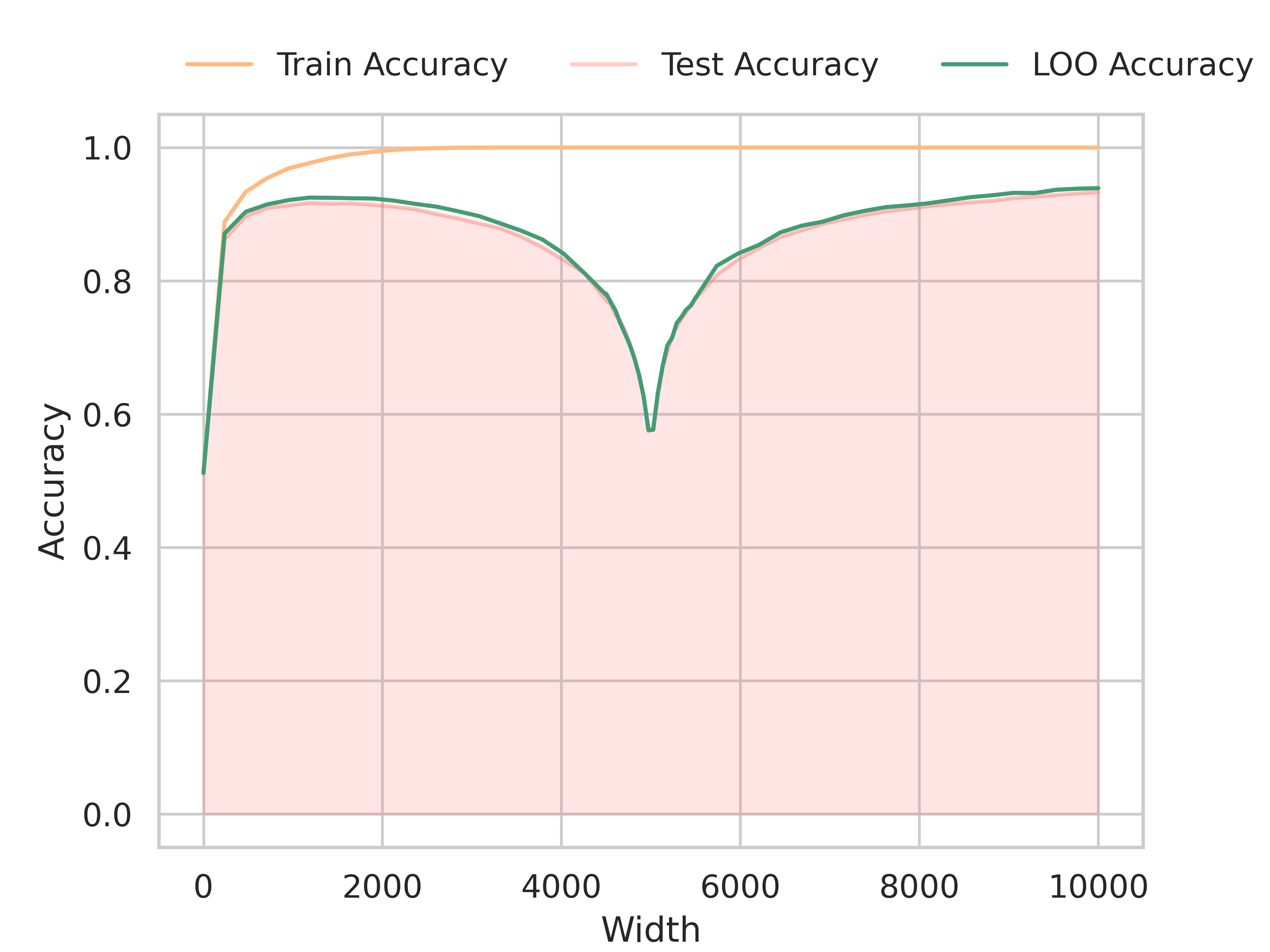}
        \caption{Accuracy}
    \end{subfigure}
    \caption{(a) Train, Test and LOO losses (a) and accuracies (b) as a function of width. We use a random feature model on binary \textit{MNIST} with $n=5000$.}
    \label{fig:dd-exp}
\end{figure}

\subsection{Transfer Learning}
Finally, we study more realistic networks often employed for large-scale computer vision tasks. We study the task of transfer learning \citep{yosinski2014transferable} by considering classifiers trained on \textit{ImageNet} \citep{NIPS2012_c399862d} and fine-tuning their top-layer to the \textit{CIFAR10} dataset. Notice that this corresponds to training a kernel with a data-dependent feature map $\phi_{\text{data}}$ induced by the non-output layers. For standard kernels however, the feature map is not dependent on any training data and consistency results such as \citet{pmlr-v130-patil21a} indeed rely on this fact. One could thus expect that the data-dependent feature map might lead to a too optimistic LOO error. Our experiments on \textit{ResNet18} \citep{he2015deep}, \textit{AlexNet} \citep{NIPS2012_c399862d}, \textit{VGG} \citep{simonyan2015deep} and \textit{DenseNet} \citep{huang2018densely} however demonstrate that transferring between tasks indeed preserves the stability of the networks and leads to a very precise estimation of the test accuracy. We display the results in Table \ref{tab:transfer}. We consider a small data regime where $n=10000$, as often encountered in practice for transfer learning tasks. In order to highlight the role of pre-training, we also evaluate random feature maps $\phi_{\text{rand}}$ where we use standard initialization for all the parameters in the network. Indeed we clearly observe the benefits of pre-training, leading to a high increase in performance for all considered models. For both settings, pre-trained and randomly initialized, we observe an excellent agreement between test and leave-one-out accuracies across all architectures. For space reasons, we defer the analogous results in terms of test and leave-one-out loss to the Appendix~\ref{app:transfer-loss}.
\begin{table}[H]
\vskip 0.15in
\begin{center}
\begin{small}
\begin{sc}
\begin{tabular}{l>{\columncolor{blue!6}}c>{\columncolor{blue!6}}c>{\columncolor{red!6}}c>{\columncolor{red!6}}c}
\toprule
Model & $A_{\text{test}}(\phi_{\text{data}})$ & $A_{\text{LOO}}(\phi_{\text{data}})$  & $A_{\text{test}}(\phi_{\text{rand}})$ & $A_{\text{LOO}}(\phi_{\text{rand}})$ \\
\midrule
ResNet18    & $67.2 \pm 0.1$ & $67.5 \pm 0.4$  & $37.7 \pm 0.2$ & $37.1 \pm 0.1$ \\
AlexNet    & $57.6 \pm 0.2$ & $58.2 \pm 0.2$ & $24.2 \pm 0.2$ & $23.7 \pm 0.2$ \\
VGG16    & $56.6 \pm 0.2$ & $56.8 \pm 0.5$ & $29.3 \pm 0.5$ & $29.2 \pm 0.8$ \\
DenseNet161 & $72.5 \pm 0.2$ & $71.3 \pm 0.3$ & $51.7 \pm 0.3 $ & $49.9 \pm 0.3$ \\ 
\bottomrule
\end{tabular}
\end{sc}
\end{small}
\end{center}
\vskip -0.1in
\caption{Test and LOO accuracies in percentage for models pre-trained on \textit{ImageNet} and transferred to \textit{CIFAR10} by re-training the top layer. We use $5$ runs, each with a different training set of size $n=10000$. We compare the \textcolor{blue!55}{pre-trained networks} with \textcolor{red!55}{random networks} to illustrate the benefits of transfer learning.}
\label{tab:transfer}
\end{table}

\section{Conclusion}
\label{sec:discussion}
We investigated the concept of leave-one-out error as a measure for generalization in the context of deep learning. Through the use of NTK, we derive new expressions for the LOO loss and accuracy of deep models in the kernel regime and observe that they capture the generalization behaviour for a variety of settings. Moreover, we mathematically prove that LOO exhibits a spike at the interpolation threshold, needing only a minimal set of theoretical tools. Finally, we demonstrated that LOO accurately predicts the predictive power of deep models when applied to the problem of transfer learning. \\[2mm]
Notably, the simple form of LOO might open the door to new types of theoretical analyses to better understand generalization, allowing for a disentanglement of the roles of various factors at play such as overparametrization, noise and architectural design. 

\bibliography{iclr2022_conference}
\bibliographystyle{iclr2022_conference}

\begin{appendices}
\appendix
\section{Omitted Proofs}
\label{proofs-main}
We list all the omitted proofs in the following section.
\subsection{Proof of Theorem \ref{closed-form-loo}}
\label{proof:closed-form}
\begin{customthm}{3.2}
Consider a kernel ${K}:\mathbb{R}^{d} \times \mathbb{R}^{d} \xrightarrow[]{} \mathbb{R}$ and the associated objective with regularization parameter $\lambda>0$ under mean squared loss. Define $\bm{A} = \bm{K}\left(\bm{K} + \lambda \bm{1}_n\right)^{-1}$. Then it holds that
\begin{equation*}
    L_{\text{LOO}}\left(\mathcal{Q}^{\text{ERM}}_{\lambda}\right) = \frac{1}{n}\sum_{i=1}^{n}\sum_{k=1}^{K}\left(\Delta_{ik}^{\lambda}\right)^2, \hspace{5mm} A_{\text{LOO}}\left(\mathcal{Q}^{\text{ERM}}_{\lambda}\right) = \frac{1}{n}\sum_{i=1}^{n}\mathds{1}_{\big{\{}\left(\bm{y}_{i}-\Delta_{i\bullet}\right)^{*}=\bm{y}_{i}^{*}\big{\}}}
\end{equation*}
where the residuals $\Delta_{ik}^{\lambda} \in \mathbb{R}$ for $i=1, \dots, n$, $k=1, \dots, K$ is given by
$\Delta_{ik}^{\lambda} = \frac{Y_{ik} - \hat{f}^{\lambda}_{k}(\bm{x}_i)}{1-A_{ii}}$.
\end{customthm}
\begin{proof}
Recall that $\hat{f}_{\mathcal{S}}^{\lambda}$ solves the optimization problem
$$\hat{f}_{\mathcal{S}}^{\lambda} = \operatorname{argmin}_{f \in \mathcal{F}}  L_{\mathcal{S}}^{\lambda}(f) := \operatorname{argmin}_{f \in \mathcal{F}} \Big{\{}\sum_{i=1}^{n}\sum_{k=1}^{K}\left(f_k(\bm{x}_i)-Y_{ik}\right)^2 + \lambda ||f||_{\mathcal{H}}^2 \Big{\}}$$ 
and predicting on the training data takes the form $\hat{f}_{\mathcal{S}}(\bm{X}) =\bm{A}\bm{Y}$ for some $\bm{A} \in \mathbb{R}^{n \times n}$. Now consider the model $f^{-i}_{\lambda} := \hat{f}_{\mathcal{S}_{-i}}^{\lambda}$ obtained from training on $\mathcal{S}_{-i}$. W.L.O.G. assume that $i=n$. We want to understand the quantity $\hat{f}_{\lambda, k}^{-n}(\bm{x}_n)$, i.e. the $k$-th component of the prediction on $\bm{x}_i$. To that end, consider the dataset $\mathcal{Z} := \mathcal{S}_{-n} \cup \Big{\{}\left(\bm{x}_n, \hat{f}_{\lambda}^{-n}(\bm{x}_n)\right)\Big{\}}$.  Notice that for any $f \in \mathcal{F}$, it holds
\begin{equation*}
    \begin{split}
        L^{\lambda}_{\mathcal{Z}}(\hat{f}^{-n}_{\lambda}) &= \sum_{k=1}^{C}\Big{\{}\sum_{i=1}^{n-1}\left(\hat{f}^{-n}_{\lambda, k}(\bm{x}_i)-Y_{ik}\right)^2 + \left(\hat{f}^{-n}_{\lambda, k}(\bm{x}_n)-\hat{f}_{\lambda, k}^{-n}(\bm{x}_n)\right)^2\Big{\}} + \lambda ||\hat{f}_{\lambda}^{-n}||_{\mathcal{H}}^2 \\ 
        &=  \sum_{k=1}^{C}\sum_{i=1}^{n-1}\left(\hat{f}^{-n}_{\lambda, k}(\bm{x}_i)-Y_{ik}\right)^2  + \lambda ||\hat{f}^{-n}_{\lambda}||_{\mathcal{H}}^2 \\
        &= L^{\lambda}_{\mathcal{S}_{-n}}\left(\hat{f}^{-n}_{\lambda}\right) \\
        &\leq L^{\lambda}_{\mathcal{S}_{-n}}(f)\\
        &\leq L^{\lambda}_{\mathcal{S}_{-n}}(f) + \sum_{k=1}^{C}\left(f_k(\bm{x}_n) - \hat{f}^{-n}_{\lambda, k}(\bm{x}_n)\right)^2 \\
        &= L^{\lambda}_{\mathcal{Z}}(f)
    \end{split}
\end{equation*}
where the first inequality follows because $\hat{f}^{-n}$ minimizes $L^{\lambda}_{\mathcal{S}_{-n}}$ by definition. Thus $\hat{f}^{-n}_{\lambda}$ also minimizes $L^{\lambda}_{\mathcal{Z}}$ and hence also takes the form 
$$\hat{f}^{-n}_{\lambda}(\bm{X}) = \bm{A}\tilde{\bm{Y}}$$
where $\tilde{\bm{Y}} \in \mathbb{R}^{n \times C}$ such that $\tilde{{Y}}_{ik} = \begin{dcases}
 Y_{ik} \hspace{13mm} \text{if } i \not= n \\
 \hat{f}^{-n}_{\lambda, k}(\bm{x}_n) \hspace{5mm} \text{else}
\end{dcases}$. Now we care about the $n$-th prediction which is
\begin{equation*}
    \begin{split}
        \hat{f}^{-n}_{\lambda, k}(\bm{x}_n) &= \sum_{j=1}^{n}A_{nj}\tilde{Y}_{jk} =  \sum_{j=1}^{n-1}A_{nj}Y_{jk} + A_{nn}\hat{f}^{-n}_{\lambda, k}(\bm{x}_n)\\ &= \sum_{j=1}^{n}A_{nj}Y_{jk} - A_{nn}Y_{nk} + A_{nn}\hat{f}^{-n}_{\lambda, k}(\bm{x}_n) \\
        &= \hat{f}^{\lambda}_{\mathcal{S}, k}(\bm{x}_n) - A_{nn}Y_{nk} + A_{nn}\hat{f}^{-n}_{\lambda, k}(\bm{x}_n)
    \end{split}
\end{equation*}

Solving for $\hat{f}^{-n}_{\lambda, k}(\bm{x}_n)$ gives
\begin{equation}
    \label{n-th-pred}
    \begin{split}
      \hat{f}_{\lambda, k}^{-n}(\bm{x}_n)  &= \frac{\hat{f}^{\lambda}_{\mathcal{S}, k}(\bm{x}_n)-A_{nn}Y_{nk} }{1-A_{nn}}
    \end{split}
\end{equation}
Then, subtracting $Y_{nk}$ leads to 
\begin{equation*}
    \begin{split}
      Y_{nk} - \hat{f}^{-n}_{\lambda, k}(\bm{x}_n)  &= Y_{nk} - \frac{\hat{f}^{\lambda}_{\mathcal{S}, k}(\bm{x}_n)-A_{nn}Y_{nk} }{1-A_{nn}} = \frac{Y_{nk} -Y_{nk} A_{nn} - \hat{f}_{\mathcal{S}, k}^{\lambda}(\bm{x}_n) + A_{nn}Y_{nk}}{1-A_{nn}} = \frac{Y_{nk} - \hat{f}_{\mathcal{S},k}^{\lambda}(\bm{x}_n) }{1-A_{nn}}
    \end{split}
\end{equation*}
Squaring and summing the expression over $n$ and $k$ results in the formula. \\[2mm]
For the accuracy, we know that we correctly predict if the maximal coordinate of $\hat{f}_{\lambda}^{-n}(\bm{x}_n)$ agrees with the maximal coordinate of $\bm{y}_n$, i.e.
\begin{equation*}
    \operatorname{argmax}_k\hat{f}^{-n}_{\lambda, k}(\bm{x}_n) = \operatorname{argmax}_k Y_{nk}
\end{equation*}
From equation \ref{n-th-pred}, we notice that 
\begin{equation*}
    \begin{split}
        \operatorname{argmax}_k\hat{f}^{-n}_{\lambda, k}(\bm{x}_n) &= \operatorname{argmax}_k \frac{\hat{f}^{\lambda}_{\mathcal{S}, k}(\bm{x}_n)-A_{nn}Y_{nk} }{1-A_{nn}} = \operatorname{argmax}_k \frac{\hat{f}^{\lambda}_{\mathcal{S}, k}(\bm{x}_n)- Y_{nk} + Y_{nk} - A_{nn}Y_{nk} }{1-A_{nn}}\\
        &= \operatorname{argmax}_k -\Delta_{nk}^{\lambda} + Y_{nk}\\
        &= \left(\bm{y}_n - \bm{\Delta}_{n\bullet}\right)^{*}
    \end{split}
\end{equation*}
We thus have to check the indicator $\mathds{1}_{\{\left(\bm{y}_n - \Delta_{n \bullet}\right)^{*} = \bm{y}_n^{*} \}}$ and sum it over $n$ to obtain the result.
\end{proof}
\subsection{Binary Classification}
Here we state the corresponding results in the case of binary classification. The formulation for the accuracy changes slightly as now the sign of the classifier serves as the prediction.
\begin{proposition}
Consider a kernel ${K}:\mathbb{R}^{d} \times \mathbb{R}^{d} \xrightarrow[]{} \mathbb{R}$ and the associated objective with regularization parameter $\lambda>0$ under mean squared loss. Define $\bm{A} = \bm{K}\left(\bm{K} + \lambda \bm{1}_n\right)^{-1}$. Then it holds that
\begin{equation*}
    L_{\text{LOO}}\left(\mathcal{Q}^{\text{ERM}}_{\lambda}\right) = \frac{1}{n}\sum_{i=1}^{n}\left(\Delta_{i}^{\lambda}\right)^2, \hspace{5mm} A_{\text{LOO}}\left(\mathcal{Q}^{\text{ERM}}_{\lambda}\right) = \frac{1}{n} \sum_{i=1}^{n}\mathds{1}_{\{y_i \Delta_i^{\lambda}<1\}}
\end{equation*}
where the residuals $\Delta_{i}^{\lambda} \in \mathbb{R}$ for $i=1, \dots, n$ is given by
$\Delta_{i}^{\lambda} = \frac{y_{i} - \hat{f}^{\lambda}(\bm{x}_i)}{1-A_{ii}}$.
\end{proposition}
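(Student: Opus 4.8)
The plan is to specialize the argument behind Theorem~\ref{closed-form-loo} to the scalar case $C=1$, for which $\bm{Y}$ collapses to the label vector $\bm{y}\in\{-1,+1\}^{n}$ and $\hat{f}^{\lambda}_{\mathcal{S}}(\bm{X})=\bm{A}\bm{y}$ with $\bm{A}=\bm{K}(\bm{K}+\lambda\bm{1}_n)^{-1}$. The loss part is then literally the $C=1$ instance of Theorem~\ref{closed-form-loo}: re-running the ``augment $\mathcal{S}_{-i}$ with the point $(\bm{x}_i,\hat{f}^{-i}_{\lambda}(\bm{x}_i))$'' trick shows that $\hat{f}^{-i}_{\lambda}$ also minimizes the regularized objective on the augmented dataset, hence equals $\bm{A}\tilde{\bm{y}}$ for the vector $\tilde{\bm{y}}$ that agrees with $\bm{y}$ except in the $i$-th coordinate, where it equals $\hat{f}^{-i}_{\lambda}(\bm{x}_i)$. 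Solving the resulting scalar fixed-point relation $\hat{f}^{-i}_{\lambda}(\bm{x}_i)=\hat{f}^{\lambda}_{\mathcal{S}}(\bm{x}_i)-A_{ii}y_i+A_{ii}\hat{f}^{-i}_{\lambda}(\bm{x}_i)$ gives the closed form $\hat{f}^{-i}_{\lambda}(\bm{x}_i)=\big(\hat{f}^{\lambda}_{\mathcal{S}}(\bm{x}_i)-A_{ii}y_i\big)/(1-A_{ii})$ (the scalar version of \eqref{n-th-pred}), and subtracting $y_i$ yields $y_i-\hat{f}^{-i}_{\lambda}(\bm{x}_i)=\Delta_i^{\lambda}$; squaring and averaging over $i$ produces the stated expression for $L_{\text{LOO}}$.

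For the accuracy I would use the same closed form rewritten as $\hat{f}^{-i}_{\lambda}(\bm{x}_i)=y_i-\Delta_i^{\lambda}$. In the binary setting the left-out point is classified correctly exactly when $\sign\!\big(\hat{f}^{-i}_{\lambda}(\bm{x}_i)\big)=y_i$, i.e. when $y_i\,\hat{f}^{-i}_{\lambda}(\bm{x}_i)>0$. Substituting the identity above and using $y_i^{2}=1$ turns this condition into $y_i(y_i-\Delta_i^{\lambda})=1-y_i\Delta_i^{\lambda}>0$, that is $y_i\Delta_i^{\lambda}<1$. Averaging the indicator $\mathds{1}_{\{y_i\Delta_i^{\lambda}<1\}}$ over $i$ then gives the claimed formula for $A_{\text{LOO}}$.

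There is no genuinely hard step here: the content is a specialization of an argument already carried out, not a new idea. The only two points that deserve a line of justification are (i) well-definedness of $\Delta_i^{\lambda}$, which holds because for $\lambda>0$ the matrix $\bm{A}$ is positive semidefinite with eigenvalues $\omega_j/(\omega_j+\lambda)<1$, so $A_{ii}=\bm{e}_i^{\top}\bm{A}\bm{e}_i\in[0,1)$ and $1-A_{ii}>0$; and (ii) the tie-breaking convention, namely that a prediction $\hat{f}^{-i}_{\lambda}(\bm{x}_i)=0$ (equivalently $y_i\Delta_i^{\lambda}=1$) is counted as a misclassification, which is exactly what the strict inequality in the indicator encodes.
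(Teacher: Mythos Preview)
Your proposal is correct and follows essentially the same route as the paper: both obtain $L_{\text{LOO}}$ as the $C=1$ specialization of Theorem~\ref{closed-form-loo}, and both derive the accuracy formula by computing $y_i\hat{f}^{-i}_{\lambda}(\bm{x}_i)=1-y_i\Delta_i^{\lambda}$ and applying the sign decision rule. Your additional remarks on $1-A_{ii}>0$ and the tie-breaking convention are not in the paper's proof but are welcome clarifications.
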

\begin{proof}
Notice that the result for $L_{\text{LOO}}$ is analogous to the proof for Theorem \ref{closed-form-loo} by setting $K=1$.
For binary classification we use the sign of the classifier as a decision rule, i.e. the classifier predicts correctly if $y\hat{f}^{\lambda}(\bm{x})>0$. We can thus calculate that
\begin{equation*}
    \begin{split}
        y_n \hat{f}^{-n}_{\lambda}(\bm{x}_n) &= \frac{y_n\hat{f}^{\lambda}_{\mathcal{S}}(\bm{x}_n)-A_{nn}y_n^2}{1-A_{nn}} = \frac{y_n\hat{f}^{\lambda}_{\mathcal{S}}(\bm{x}_n) - y_n^{2} +y_n^2-y_n^{2}A_{nn}}{1-A_{nn}}\\
        &= y_n^2 - y_n\frac{y_n-\hat{f}^{\lambda}_{\mathcal{S}}(\bm{x}_n) }{1-A_{nn}}\\
        &= 1 - y_n\frac{y_n-\hat{f}^{\lambda}_{\mathcal{S}}(\bm{x}_n) }{1-A_{nn}}\\
        &= 1- y_n \Delta_n^{\lambda}
    \end{split}
\end{equation*}
Thus, the $n$-th sample is correctly classified if and only if
$$1- y_n \Delta_n^{\lambda} > 0 \iff y_n \Delta_n^{\lambda} < 1$$
We now just count the correct predictions for the accuracy, i.e.
\begin{equation*}
    A_{\text{LOO}}\left(\mathcal{Q}^{\text{ERM}}_{\lambda}\right) = \frac{1}{n} \sum_{i=1}^{n}\mathds{1}_{\{y_i \Delta_i^{\lambda}<1\}}
\end{equation*}
\end{proof}
\subsection{Proof of Corollary \ref{loo-no-reg}}
\begin{customcor}{3.3}
Consider the eigendecomposition $\bm{K} = \bm{V}\operatorname{diag}(\bm{\omega})\bm{V}^{T}$ for $\bm{V} \in O(n)$ and $\bm{\omega} \in \mathbb{R}^{n}$. Denote its rank by $r=\operatorname{rank}(\bm{K})$. Then it holds that the residuals $\Delta_{ik}^{\lambda} \in \mathbb{R}$ can be expressed as
$$\Delta_{ik}^{\lambda}(r) = \sum_{l=1}^{n}Y_{lk}\frac{\sum_{k=r+1}^{n}V_{ik}V_{lk} + \sum_{k=1}^{r}\frac{\lambda}{\lambda + \omega_{k}}V_{ik}V_{lk}}{\sum_{k=r+1}^{n}V_{ik}^2 + \sum_{k=1}^{r}\frac{\lambda}{\lambda + \omega_{k}} V_{ik}^2} $$
Moreover for zero regularization, i.e. $\lambda \xrightarrow[]{}0$, it holds that
$$\Delta_{ik}^{\lambda}(r) \xrightarrow[]{}\Delta_{ik}(r)=
\begin{dcases}
\sum_{l=1}^{n}Y_{lk}\frac{\sum_{j=r+1}^{n}V_{ij}V_{lj}}{\sum_{j=r+1}^{n}V_{ij}^2}  \hspace{4mm} \text{ if } r < n \\[0mm]
\sum_{l=1}^{n}Y_{lk} \frac{\sum_{j=1}^{n}\frac{1}{\omega_j}V_{ij}V_{lj}}{ \sum_{j=1}^n\frac{1}{ \omega_j}V_{ij}^2} \hspace{3mm} \text{ if } r = n
\end{dcases}$$
\end{customcor}
\begin{proof}
Define $\bm{A} = \bm{K}\left(\bm{K} + \lambda \bm{1}_n\right)^{-1} \in \mathbb{R}^{n \times n}$. Recall that $\hat{f}_{\mathcal{S}}^{\lambda}(\bm{X}) = \bm{A}\bm{y}$ and thus $\hat{f}_{k}^{\lambda}(\bm{x}_i) = \sum_{j=1}^{n}A_{ij}Y_{jk}$. Let us first simplify $\bm{A}$:
\begin{equation*}
    \begin{split}
        \bm{A} &= \bm{K}\left(\bm{K} + \lambda \bm{1}_n\right)^{-1} = \bm{V}\operatorname{diag}\left(\bm{\omega}\right)\bm{V}^{T}\left(\bm{V}\operatorname{diag}\left(\bm{\omega}\right)\bm{V}^{T} + \lambda \bm{1}_n\right)^{-1} \\&= \bm{V}\operatorname{diag}\left(\frac{\bm{\omega}}{\bm{\omega}+\lambda}\right)\bm{V}^{T}
    \end{split}
\end{equation*}
We can characterize the off-diagonal elements for $i \not= j$ as follows:
\begin{equation*}
    \begin{split}
        A_{ij} &= \sum_{k=1}^{n} V_{ik} \frac{\omega_{i}}{\omega_{i} + \lambda}V_{jk} = \sum_{k=1}^{r} V_{ik}V_{jk} \frac{\omega_{i}}{\omega_{i} + \lambda} = \sum_{k=1}^{r}V_{ik}V_{jk} -  \sum_{k=1}^{r}\frac{\lambda}{\omega_k + \lambda}V_{ik}V_{jk} \\
        &= -\sum_{k=r+1}^{n}V_{ik}V_{jk} -  \sum_{k=1}^{r}\frac{\lambda}{\omega_k + \lambda}V_{ik}V_{jk}
    \end{split}
\end{equation*}
where we made use of the fact that $\bm{V}_{i \bullet} \perp \bm{V}_{j \bullet}$.
The diagonal elements $i=j$ on the other hand can be written as
\begin{equation*}
    \begin{split}
        A_{ii} &= \sum_{k=1}^{n} V_{ik}^2 \frac{\omega_{i}}{\omega_{i} + \lambda} = \sum_{k=1}^{r} V_{ik}^2 \frac{\omega_{k}}{\omega_{k} + \lambda} = \sum_{k=1}^{r} V_{ik}^2 - \sum_{k=1}^{r} V_{ik}^2 \frac{\lambda}{\omega_k+\lambda} \\
        &= 1 - \sum_{k=r+1}^{n} V_{ik}^2 - \sum_{k=1}^{r} V_{ik}^2 \frac{\lambda}{\omega_k+\lambda}
    \end{split}
\end{equation*}
where we have made use of the fact that $\bm{V}_{i \bullet}$ is a unit vector. Plugging-in the quantities into $L_{\text{LOO}}$ results in
\begin{equation*}
    \begin{split}
        \Delta_{ik}^{\lambda} &= \frac{Y_{ik} - \hat{f}^{\lambda}_{k}(\bm{x}_i)}{1-A_{ii}} = \frac{Y_{ik} - \sum_{l=1}^{n}A_{il}Y_{lk}}{1-A_{ii}} = \frac{Y_{ik} - A_{ii}y_i - \sum_{l\not= i}^{n}A_{il}Y_{lk}}{1-A_{ii}} \\
        &= \frac{Y_{ik}\left(\sum_{j=r+1}^{n}V_{ij}^2 + \sum_{j=1}^{r}V_{ij}^2\frac{\lambda}{\omega_j + \lambda}\right) + \sum_{l\not= i}^{n}Y_{lk}\left(\sum_{j=r+1}^{n}V_{ij}V_{lj} +  \sum_{j=1}^{r}\frac{\lambda}{\omega_j + \lambda}V_{ij}V_{lj}\right)}{\sum_{j=r+1}^{n} V_{ij}^2 + \sum_{j=1}^{r} V_{ij}^2 \frac{\lambda}{\omega_l+\lambda}} \\
        &=\frac{\sum_{l=1}^{n}Y_{lk}\left(\sum_{j=r+1}^{n}V_{ij}V_{lj} +  \sum_{j=1}^{r}\frac{\lambda}{\omega_j + \lambda}V_{ij}V_{lj}\right)}{\sum_{j=r+1}^{n} V_{ij}^2 + \sum_{j=1}^{r} V_{ij}^2 \frac{\lambda}{\omega_j+\lambda}}
    \end{split}
\end{equation*}
Now crucially, in the full rank case $r=n$, we have empty sums, i.e. $\sum_{l=r+1}^{n} = \sum_{l=n+1}^{n} = 0$ and we obtain  
\begin{equation*}
    \begin{split}
        \Delta_{ik}^{\lambda} &= \frac{\sum_{l=1}^{n}Y_{lk}\sum_{j=1}^{n}\frac{\lambda}{\omega_j + \lambda}V_{ij}V_{lj}}{\sum_{j=1}^{n} V_{ij}^2 \frac{\lambda}{\omega_j+\lambda}} = \frac{\sum_{l=1}^{n}Y_{lk}\sum_{j=1}^{n}\frac{1}{\omega_j + \lambda}V_{ij}V_{lj}}{\sum_{j=1}^{n} V_{ij}^2 \frac{1}{\omega_j+\lambda}}\\ &\xrightarrow[]{\lambda 
        \xrightarrow[]{}0}\sum_{l=1}^{n}Y_{lk}\frac{\sum_{j=1}^{n}\frac{1}{\omega_j}V_{ij}V_{lj}}{\sum_{j=1}^{n} \frac{1}{\omega_j}V_{ij}^2}  
    \end{split}
\end{equation*}
On the other hand, in the rank deficient case $r<n$ we can cancel the regularization term:
\begin{equation*}
    \begin{split}
        \Delta_{ik}^{\lambda} \xrightarrow[]{\lambda \xrightarrow[]{}0}  \sum_{l=1}^{n}Y_{lk}\frac{\sum_{j=r+1}^{n}V_{ij}V_{lj}  }{\sum_{j=r+1}^{n} V_{ij}^2} 
    \end{split}
\end{equation*}
Plugging this into the formulas for $L_{\text{LOO}}^{\lambda}$ and $A_{\text{LOO}}^{\lambda}$ concludes the proof.
\end{proof}

\subsection{Proof of Proposition \ref{noisy-loo}}
\label{proof:noisy-loo}
\begin{customprop}{4.1}
Consider a kernel with spectral decomposition $\bm{K} = \bm{V}\operatorname{diag}(\bm{\omega})\bm{V}^{T}$ for $\bm{V} \in \mathbb{R}^{n \times n}$ orthogonal and $\bm{\omega} \in \mathbb{R}^{n}$. Assume that $\operatorname{rank}(\bm{K}) = n$. Then it holds that the leave-one-out error $L_{\text{LOO}}(\tilde{\bm{Y}}; \bm{Y})$ for a model trained on $\tilde{\mathcal{S}} = \{\left(\bm{x}_i, \tilde{\bm{y}}_i\right)\}_{i=1}^{n}$ but evaluated on ${\mathcal{S}} = \{\left(\bm{x}_i, \bm{y}_i\right)\}_{i=1}^{n}$ is given by
$$L_{\text{LOO}}(\tilde{\bm{Y}};\bm{Y}) = \frac{1}{n}\sum_{i=1}^{n}\sum_{k=1}^{C}\left(\tilde{\Delta}_{ik} + Y_{ik} - \tilde{Y}_{ik}\right)^2, \hspace{5mm} A_{\text{LOO}}(\tilde{\bm{Y}};\bm{Y}) = \frac{1}{n}\sum_{i=1}^{n}\mathds{1}_{\big{\{}\left( \tilde{\bm{y}}_{i}-\tilde{\bm{{\Delta}}}_{i\bullet}\right)^{*}= \bm{y}_{i}^{*}\big{\}}}$$
where $\tilde{\Delta}_{ik} = {\Delta}_{ik}(\tilde{\bm{Y}}) \in \mathbb{R}$ is defined as in Corollary \ref{loo-no-reg}.
\end{customprop}
\begin{proof}
Denote by $\tilde{\mathcal{S}}_{-i}$ the dataset $\{\left(\bm{x}_j, \tilde{y}_j\right)\}_{j\not=i}^{n}$. Denote by $\tilde{f}^{-i}_{\lambda}$ the model trained on $\tilde{\mathcal{S}}_{-i}$. Recall from the proof of Theorem \ref{closed-form-loo} that $$\tilde{f}_{\lambda, k}^{-n}(\bm{x}_n)  = \frac{\tilde{f}^{\lambda}_{ k}(\bm{x}_n)-A_{nn}\tilde{Y}_{nk} }{1-A_{nn}}$$
Instead of subtracting the same label $\tilde{Y}_{nk}$, we now subtract the evaluation label $Y_{nk}$:
\begin{equation*}
    \begin{split}
        Y_{nk} - \tilde{f}^{-n}_{\lambda,k}(\bm{x}_n)  &=   \frac{Y_{nk} - A_{nn}Y_{nk}- \tilde{f}^{\lambda}_k(\bm{x}_n) + A_{nn}\tilde{Y}_{nk}}{1-A_{nn}} = \frac{(1-A_{nn})(Y_{nk} - \tilde{Y}_{nk}) + \tilde{Y}_{nk} - \tilde{f}_k^{\lambda}(\bm{x}_n) }{1-A_{nn}}  \\
        &= (Y_{nk} - \tilde{Y}_{nk}) + \frac{\tilde{Y}_{nk} - \tilde{f}_k^{\lambda}(\bm{x}_n) }{1-A_{nn}} \\
        &= (Y_{nk} - \tilde{Y}_{nk}) + \tilde{\Delta}_{ik}^{\lambda}
    \end{split}
\end{equation*}
The second term $\tilde{\Delta}_{ik}^{\lambda}$ is now the summand of the standard leave-one-out error where we evaluate on $\tilde{\bm{Y}}$. We can hence re-use Theorem \ref{loo-no-reg} to decompose it. Squaring and summing over $n$ concludes the LOO loss result. For the accuracy, we notice that a similar derivation as for Theorem \ref{closed-form-loo} applies:
\begin{equation*}
    \begin{split}
        \operatorname{argmax}_k\tilde{f}^{-n}_{\lambda, k}(\bm{x}_n) &= \operatorname{argmax}_k \frac{\tilde{f}^{\lambda}_{k}(\bm{x}_n)-A_{nn}\tilde{Y}_{nk} }{1-A_{nn}} = \operatorname{argmax}_k \frac{\hat{f}^{\lambda}_{k}(\bm{x}_n)- \tilde{Y}_{nk} + \tilde{Y}_{nk} - A_{nn}\tilde{Y}_{nk} }{1-A_{nn}}\\
        &= \operatorname{argmax}_k -\tilde{\Delta}_{nk}^{\lambda} + \tilde{Y}_{nk}\\
        &= \left(\tilde{\bm{y}}_n - \tilde{\bm{\Delta}}_{n\bullet}\right)^{*}
    \end{split}
\end{equation*}
We thus have to check the indicator against the true label $\bm{y}_n$, i.e. $\mathds{1}_{\{\left(\tilde{\bm{y}}_n - \tilde{\Delta}_{n \bullet}\right)^{*} = \bm{y}_n^{*} \}}$ and sum it over $n$ to obtain the result.
\end{proof}

\subsection{Proof of Theorem \ref{blowup}}
\begin{customthm}{4.3}
For large enough $n \in \mathbb{N}$, we can estimate as
$${L}_{LOO}^{n}(m^{*}) \gtrapprox 2n A$$
where $A \sim \Gamma(\frac{1}{2}, 1)$ is independent of $n$. ${L}^{n}_{LOO}(m^{*})$ hence diverges a.s. with $n \xrightarrow[]{} \infty$.
\end{customthm}
\begin{proof}
First we notice that for $m=m^{*}$, by definition it holds that $r=n-1$, which simplifies the LOO expression to
$$ L_{\text{LOO}}^{n}(m^{*}) \xrightarrow[]{\lambda \xrightarrow[]{}0}\frac{1}{n}\left(\sum_{i=1}^{n}\frac{1}{V_{in}^2}\right)\left(\sum_{i=1}^{n}y_i V_{in}\right)^2$$
For notational convenience, we will introduce $\bm{v} \in \mathbb{R}^{n}$ such that $v_i := V_{in}$. We will now bound the both factors one-by-one.
The first part is a simple application of Proposition \ref{inv_big} and Proposition \ref{dist-closed}:
\begin{equation*}
\begin{split}
    {L}_{LOO}^{n}(m^{*}) = \frac{1}{n}\left(\sum_{i=1}^n v_i\right)^2\sum_{i=1}^{n}\frac{1}{v_i^2} \geq n^2\frac{1}{n}\left(\sum_{i=1}^n v_i\right)^2 \stackrel{(d)}{=} n^2 B
\end{split}
\end{equation*}
Now for large enough $n$, we can use Lemma \ref{beta-limit} to make the following approximation in distribution:
\begin{equation*}
    \begin{split}
        n B \approx 2\frac{n-1}{2}B \xrightarrow[]{(d)}2 A
    \end{split}
\end{equation*}
where $A \sim \Gamma(\frac{1}{2}, 1)$. Thus for large enough $n$, it holds that 
$${L}_{LOO}^{n}(m^{*}) \gtrapprox 2n A$$
As the approximation becomes exact for larger and larger $n$, we conclude that 
$${L}_{LOO}^{n}(m^{*}) \xrightarrow[]{n \xrightarrow[]{}\infty}\infty \hspace{3mm} \text{a.s.}$$
\end{proof}

\section{Additional Lemmas}
In this section we present the additional technical Lemmas needed for the proofs of the main claims in \ref{proofs-main}. 
\begin{lemma}
\label{inv_big}
Consider a unit vector $\bm{v} \in \mathbb{S}^{n-1}$. Then it holds that
$$\sum_{i=1}^{n}\frac{1}{v_{i}^2} \geq n^2$$
\end{lemma}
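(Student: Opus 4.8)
This is a direct consequence of the Cauchy--Schwarz inequality applied to the vectors with entries $|v_i|$ and $1/|v_i|$. First I would dispose of the degenerate case: if some coordinate $v_i = 0$, then the left-hand side is $+\infty$ and the claimed bound holds vacuously, so we may assume $v_i \neq 0$ for all $i$.

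\textbf{Main step.} With all $v_i \neq 0$, apply Cauchy--Schwarz (equivalently, the arithmetic--harmonic mean inequality) to obtain
\begin{equation*}
n^2 = \left(\sum_{i=1}^{n} |v_i| \cdot \frac{1}{|v_i|}\right)^2 \leq \left(\sum_{i=1}^{n} v_i^2\right)\left(\sum_{i=1}^{n} \frac{1}{v_i^2}\right).
\end{equation*}
Since $\bm{v} \in \mathbb{S}^{n-1}$ means $\sum_{i=1}^{n} v_i^2 = 1$, the first factor on the right equals $1$, and we conclude $\sum_{i=1}^{n} \frac{1}{v_i^2} \geq n^2$, as desired.

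\textbf{Obstacle.} There is essentially no obstacle here; the only point requiring any care is the handling of vanishing coordinates, which is why the degenerate case is treated separately up front. One may additionally remark that equality holds precisely when $|v_i| = 1/\sqrt{n}$ for all $i$, i.e.\ when the singular/eigenvector is maximally spread out — this is the regime that is \emph{avoided} in the blow-up argument of Theorem~\ref{blowup}, where a typical Haar-distributed $\bm{v}$ instead has a small coordinate, making $\sum_i 1/v_i^2$ far larger than $n^2$.
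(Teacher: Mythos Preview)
Your proof is correct. It is, however, a different (and shorter) route than the paper's. The paper parametrizes $v_i = z_i/\|\bm{z}\|$ and expands
\[
\sum_{i=1}^{n}\frac{1}{v_i^2} \;=\; \sum_{i=1}^{n}\sum_{j=1}^{n}\frac{z_j^2}{z_i^2}
\;=\; n \;+\; \sum_{i\neq j}\frac{z_j^2}{z_i^2},
\]
then groups the off-diagonal terms into $\tfrac{n(n-1)}{2}$ reciprocal pairs and uses $a^2 + 1/a^2 \geq 2$ on each pair to get $n + n(n-1) = n^2$. Your single application of Cauchy--Schwarz to the vectors $(|v_i|)$ and $(1/|v_i|)$ collapses all of this into one line, and also makes the equality case $|v_i|\equiv 1/\sqrt{n}$ immediate. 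The paper's pairing argument is more elementary in the sense that it avoids naming any classical inequality beyond $x+1/x\ge 2$, but your version is cleaner and handles the degenerate $v_i=0$ case explicitly, which the paper leaves implicit.
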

\begin{proof}
Let's parametrize each $v_i$ as 
$$v_i = \frac{z_i}{\sqrt{\sum_{i=1}^{n}z_i^2}}$$
for $i=1, \dots, n$ and $\bm{z} \in \mathbb{R}^{n}$. One can easily check that $||\bm{v}||_2 = 1$ and hence $\bm{v} \in \mathbb{S}^{n-1}$. Plugging this in, we arrive at
\begin{equation*}
    \begin{split}
         \sum_{i=1}^{n}\frac{1}{v_i^2} = \sum_{i=1}^{n}\frac{\sum_{j=1}^{n}z_j^2}{z_i^2} = \sum_{i=1}^{n}\sum_{j=1}^{n} \frac{z_j^2}{z_i^2} = n + \sum_{i=1}^{n}\sum_{j \not=i}^{n} \frac{z_j^2}{z_i^2}
    \end{split}
\end{equation*}
We can re-arrange the sum into pairs 
$$\frac{z_j^2}{z_i^2} + \frac{z_i^2}{z_j^2} = a^2 + \frac{1}{a^2} \geq 2$$
for $a^2 = \frac{z_j^2}{z_i^2} >0$ and using the fact that $x + \frac{1}{x} \geq 2$ for $x \geq 0$. We can find $\frac{n(n-1)}{2}$ such summands, and thus 
$$\sum_{i=1}^{n}\frac{1}{v_i^2} \geq n + 2 \frac{n(n-1)}{2} = n^2$$
\end{proof}
\begin{lemma}
\label{rotation-invariant}
Consider $\bm{v} \sim \mathcal{U}\left(\mathbb{S}^{n-1}\right)$ and any fixed orthogonal matrix $\bm{U} \in O(n)$. Then it holds that
$$\bm{U}\bm{v} \stackrel{(d)}{=}\bm{v}$$
\end{lemma}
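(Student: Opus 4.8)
The plan is to realize the uniform law on the sphere as a normalized isotropic Gaussian and then invoke the orthogonal invariance of that Gaussian. First I would introduce $\bm{g} \sim \mathcal{N}(\bm{0}, \bm{I}_n)$ and recall the standard fact that $\bm{g}/\|\bm{g}\|_2 \stackrel{(d)}{=} \bm{v}$: since the density of $\bm{g}$ depends on $\bm{g}$ only through $\|\bm{g}\|_2$, its law is invariant under $O(n)$, and decomposing into radial and angular parts shows the angular part is exactly $\mathcal{U}(\mathbb{S}^{n-1})$. This reduces everything to a statement about Gaussians.

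Next I would use that for $\bm{U} \in O(n)$ the pushforward of $\mathcal{N}(\bm{0}, \bm{I}_n)$ under $\bm{x} \mapsto \bm{U}\bm{x}$ is $\mathcal{N}(\bm{0}, \bm{U}\bm{U}^{T}) = \mathcal{N}(\bm{0}, \bm{I}_n)$, hence $\bm{U}\bm{g} \stackrel{(d)}{=} \bm{g}$. Applying the measurable (a.s.-defined) map $\bm{x} \mapsto \bm{x}/\|\bm{x}\|_2$ to both sides preserves equality in distribution, so $(\bm{U}\bm{g})/\|\bm{U}\bm{g}\|_2 \stackrel{(d)}{=} \bm{g}/\|\bm{g}\|_2$. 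Finally, since orthogonal matrices preserve the Euclidean norm, $\|\bm{U}\bm{g}\|_2 = \|\bm{g}\|_2$, so the left-hand side equals $\bm{U}\bigl(\bm{g}/\|\bm{g}\|_2\bigr)$. Chaining the three distributional identities gives $\bm{U}\bm{v} \stackrel{(d)}{=} \bm{U}\bigl(\bm{g}/\|\bm{g}\|_2\bigr) \stackrel{(d)}{=} \bm{g}/\|\bm{g}\|_2 \stackrel{(d)}{=} \bm{v}$.

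There is no genuine obstacle; the only points deserving an explicit sentence are the identification of $\bm{g}/\|\bm{g}\|_2$ with $\mathcal{U}(\mathbb{S}^{n-1})$ and the fact that equality in distribution is stable under the a.s.-defined normalization map. (Alternatively, one could skip the Gaussian construction and simply note that $\mathcal{U}(\mathbb{S}^{n-1})$ is, by definition or by uniqueness of the $O(n)$-invariant probability measure on the sphere, rotation invariant; but the Gaussian argument is the most self-contained and is the one I would write out.)
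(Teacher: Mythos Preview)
Your argument is correct. The Gaussian realization $\bm{v} \stackrel{(d)}{=} \bm{g}/\|\bm{g}\|_2$, the orthogonal invariance $\bm{U}\bm{g} \stackrel{(d)}{=} \bm{g}$, and the norm preservation $\|\bm{U}\bm{g}\|_2 = \|\bm{g}\|_2$ chain together exactly as you describe, and the side remarks about the a.s.-defined normalization map are the right caveats to mention.

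Regarding comparison with the paper: the paper does not actually give a proof of this lemma. It simply states that this is a standard result and points the reader to Vershynin's high-dimensional probability text. So your write-up is strictly more self-contained than what appears in the paper. It is also worth noting that the Gaussian representation you invoke is precisely the content of the paper's subsequent Lemma (the one asserting $\bm{w}/\|\bm{w}\|_2 \sim \mathcal{U}(\mathbb{S}^{n-1})$ for $\bm{w} \sim \mathcal{N}(\bm{0},\bm{I}_n)$), which is likewise dispatched by citation; your proof effectively folds the two lemmas together. The alternative one-line route you mention at the end---that $\mathcal{U}(\mathbb{S}^{n-1})$ is by construction the unique $O(n)$-invariant probability on $\mathbb{S}^{n-1}$---is arguably the cleanest, since rotation invariance is then immediate from the definition, but the Gaussian argument is equally valid and, as you say, more explicit.
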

\begin{proof}
This is a standard result and can for instance be found in \citet{vershynin_2018}.
\end{proof}
\begin{lemma}
\label{gaussian-representation}
Consider $\bm{w} \sim \mathcal{N}(\bm{0}, \mathds{1}_{n})$. Then it holds that
$$\bm{v} = \frac{\bm{w}}{||\bm{w}||_2} \sim \mathcal{U}\left(\mathbb{S}^{n-1}\right)$$
\end{lemma}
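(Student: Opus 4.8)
The plan is to exploit the rotational symmetry of the standard Gaussian together with the fact that the uniform measure on $\mathbb{S}^{n-1}$ is the \emph{unique} $O(n)$-invariant Borel probability measure on the sphere. First I would note that $\bm{w}\neq\bm{0}$ almost surely, so $\bm{v}=\bm{w}/\|\bm{w}\|_2$ is almost surely well-defined and takes values in $\mathbb{S}^{n-1}$; thus the law $\mu$ of $\bm{v}$ is a well-defined Borel probability measure supported on $\mathbb{S}^{n-1}$.

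Next I would check that $\mu$ is invariant under $O(n)$. The density of $\bm{w}$ is $(2\pi)^{-n/2}\exp(-\|\bm{x}\|_2^2/2)$, which depends on $\bm{x}$ only through $\|\bm{x}\|_2$; hence for any $\bm{U}\in O(n)$ a change of variables (with $|\det\bm{U}|=1$) gives $\bm{U}\bm{w}\stackrel{(d)}{=}\bm{w}$. Since $\|\bm{U}\bm{w}\|_2=\|\bm{w}\|_2$, it follows that
$$\bm{U}\bm{v}=\frac{\bm{U}\bm{w}}{\|\bm{w}\|_2}=\frac{\bm{U}\bm{w}}{\|\bm{U}\bm{w}\|_2}\stackrel{(d)}{=}\frac{\bm{w}}{\|\bm{w}\|_2}=\bm{v},$$
so $\mu$ is $O(n)$-invariant.

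Finally, I would invoke uniqueness: because $O(n)$ acts transitively on $\mathbb{S}^{n-1}$, there is exactly one $O(n)$-invariant Borel probability measure on the sphere, namely $\mathcal{U}(\mathbb{S}^{n-1})$ (the normalized invariant measure of the homogeneous space $O(n)/O(n-1)$, equivalently the pushforward of the Haar measure on $O(n)$ under $\bm{U}\mapsto\bm{U}\bm{e}_1$). Hence $\mu=\mathcal{U}(\mathbb{S}^{n-1})$, which is the claim.

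The only genuinely nontrivial ingredient is this last uniqueness step; the rest is a routine change of variables. If one prefers to avoid citing Haar-measure theory, the same conclusion follows directly: for bounded measurable $h$ on $\mathbb{S}^{n-1}$, integrating the identity $\mathbb{E}[h(\bm{U}\bm{v})]=\mathbb{E}[h(\bm{v})]$ against the Haar measure on $O(n)$ and applying Fubini yields $\mathbb{E}[h(\bm{v})]=\int_{O(n)}h(\bm{U}\bm{v}_0)\,d\bm{U}$ for any fixed $\bm{v}_0\in\mathbb{S}^{n-1}$, and the right-hand side is by definition $\int h\,d\mathcal{U}(\mathbb{S}^{n-1})$ (using transitivity of the action). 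Either route closes the argument, and one could additionally quote Lemma~\ref{rotation-invariant} to phrase the symmetry input more compactly.
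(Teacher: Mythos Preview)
Your argument is correct and is in fact the standard proof of this classical fact. The paper itself does not give any argument at all: its ``proof'' consists of the single sentence that this is a standard result and can be found in \citet{vershynin_2018}. So your proposal is strictly more detailed than what the paper provides, and the route you take (rotation-invariance of the standard Gaussian plus uniqueness of the $O(n)$-invariant probability measure on $\mathbb{S}^{n-1}$) is exactly the argument one finds in that reference.
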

\begin{proof}
This is a standard result and can for instance be found in \citet{vershynin_2018}.
\end{proof}
\begin{lemma}
\label{ratio_gamma}
Consider two independent Gamma variables $X \sim \operatorname{Gamma}(\alpha, \nu)$ and $Y \sim \operatorname{Gamma}(\beta, \nu)$. Then it holds that
$$\frac{X}{X+Y} \sim \operatorname{Beta}\left(\alpha, \beta\right)$$
\end{lemma}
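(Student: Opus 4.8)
The plan is to prove this by a standard change of variables in the joint density of $(X,Y)$. Since $X$ and $Y$ are independent, their joint density on $(0,\infty)^2$ is
\[
f_{X,Y}(x,y) = \frac{\nu^{\alpha}}{\Gamma(\alpha)}\,x^{\alpha-1}e^{-\nu x}\cdot\frac{\nu^{\beta}}{\Gamma(\beta)}\,y^{\beta-1}e^{-\nu y}.
\]
First I would introduce the new coordinates $U = \frac{X}{X+Y}$ and $V = X+Y$, whose inverse map is $x = uv$, $y = v(1-u)$ for $u \in (0,1)$ and $v>0$. A direct computation shows that the Jacobian determinant of $(u,v)\mapsto(x,y)$ equals $v$, so the joint density of $(U,V)$ is $v\cdot f_{X,Y}\bigl(uv,\,v(1-u)\bigr)$.

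Substituting and collecting the powers of $u$, $1-u$ and $v$, I expect the joint density to factor as
\[
f_{U,V}(u,v) = \frac{u^{\alpha-1}(1-u)^{\beta-1}}{B(\alpha,\beta)}\cdot\frac{\nu^{\alpha+\beta}}{\Gamma(\alpha+\beta)}\,v^{\alpha+\beta-1}e^{-\nu v},
\]
where $B(\alpha,\beta) = \Gamma(\alpha)\Gamma(\beta)/\Gamma(\alpha+\beta)$ is the Beta function. The key algebraic step is to recognize this identity, which is precisely what makes the $u$-part and the $v$-part each integrate to $1$ on their own. The first factor is the density of $\operatorname{Beta}(\alpha,\beta)$ and the second is the density of $\operatorname{Gamma}(\alpha+\beta,\nu)$; hence $U$ and $V$ are independent, and marginalizing over $v$ yields $U = \frac{X}{X+Y}\sim\operatorname{Beta}(\alpha,\beta)$, which is the claim.

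There is no genuine obstacle here: the only points requiring care are computing the Jacobian correctly and invoking the Beta--Gamma identity so that the two factors normalize separately. As an alternative one could bypass densities and argue via Mellin transforms (matching $\mathbb{E}[U^{s}]$ to the moments of a Beta variable), but the change-of-variables argument is the most transparent and is the one I would write out.
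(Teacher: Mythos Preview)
Your proposal is correct and complete: the change-of-variables $(U,V)=(X/(X+Y),X+Y)$ with Jacobian $v$ factors the joint density into a $\operatorname{Beta}(\alpha,\beta)$ part and a $\operatorname{Gamma}(\alpha+\beta,\nu)$ part, exactly as you describe. The paper does not actually prove this lemma; it simply cites it as a standard result, so your write-up is in fact more detailed than what the paper provides, and it is precisely the classical argument one finds in the referenced source.
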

\begin{proof}
This is a standard result and can for instance be found in \citet{ratio_gamma}.
\end{proof}
\begin{lemma}
\label{dist-closed}
Consider $\bm{v} \sim \mathcal{U}\left(\mathbb{S}^{n-1}\right)$. Then it holds that
$$\frac{1}{n}\left(\sum_{i=1}^{n}y_iv_i\right)^2 \sim \operatorname{Beta}\left(\frac{1}{2}, \frac{n-1}{2}\right)$$
\end{lemma}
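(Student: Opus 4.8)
The plan is to reduce the quadratic form $\langle \bm{y}, \bm{v}\rangle^2$ to a single squared coordinate of a uniformly random unit vector, and then pass through the Gaussian representation and the Gamma--Beta identity. The structural point that makes the stated normalization natural is that in the binary classification setting the labels satisfy $y_i \in \{-1,+1\}$, so $\|\bm{y}\|_2^2 = n$ and $\bm{y}/\sqrt{n} \in \mathbb{S}^{n-1}$; this is exactly why a factor $\tfrac1n$ appears in the statement.

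First I would choose an orthogonal matrix $\bm{U} \in O(n)$ whose first column equals $\bm{y}/\sqrt{n}$ (which exists since $\bm{y}/\sqrt{n}$ is a unit vector and can be completed to an orthonormal basis). Then
$$\tfrac{1}{n}\langle \bm{y}, \bm{v}\rangle^2 = \langle \bm{y}/\sqrt{n}, \bm{v}\rangle^2 = \langle \bm{U}\bm{e}_1, \bm{v}\rangle^2 = \langle \bm{e}_1, \bm{U}^{T}\bm{v}\rangle^2 = (\bm{U}^{T}\bm{v})_1^2.$$
By Lemma \ref{rotation-invariant} we have $\bm{U}^{T}\bm{v} \stackrel{(d)}{=} \bm{v}$, and hence $\tfrac1n\langle\bm{y},\bm{v}\rangle^2 \stackrel{(d)}{=} v_1^2$.

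Next I would invoke Lemma \ref{gaussian-representation} to write $\bm{v} = \bm{w}/\|\bm{w}\|_2$ with $\bm{w} \sim \mathcal{N}(\bm{0}, \bm{1}_n)$, so that
$$v_1^2 = \frac{w_1^2}{\sum_{j=1}^{n} w_j^2} = \frac{w_1^2}{w_1^2 + \sum_{j=2}^{n} w_j^2}.$$
Since the $w_j$ are i.i.d. standard Gaussians, $w_1^2$ is chi-squared with one degree of freedom, i.e. $w_1^2 \sim \operatorname{Gamma}(\tfrac12,\tfrac12)$, while $\sum_{j=2}^{n} w_j^2 \sim \operatorname{Gamma}(\tfrac{n-1}{2},\tfrac12)$, and these are independent as they involve disjoint coordinates. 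Applying Lemma \ref{ratio_gamma} with $\alpha = \tfrac12$, $\beta = \tfrac{n-1}{2}$, $\nu = \tfrac12$ gives $v_1^2 \sim \operatorname{Beta}(\tfrac12, \tfrac{n-1}{2})$, which combined with the distributional equality above yields the claim.

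The argument is essentially mechanical once these lemmas are assembled; the only point that warrants any care is the opening reduction, namely making explicit that $\bm{y}/\sqrt{n}$ is a genuine unit vector (so the orthogonal $\bm{U}$ exists), i.e. that we are in the $\pm1$-label regime in which the binary LOO formula was derived. I do not anticipate a genuine obstacle.
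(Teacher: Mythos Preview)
Your proposal is correct and follows essentially the same approach as the paper: reduce $\tfrac{1}{n}\langle \bm{y},\bm{v}\rangle^2$ to $v_1^2$ via rotation invariance of the uniform distribution on the sphere, then use the Gaussian representation together with the Gamma--Beta identity. The only cosmetic difference is that the paper first absorbs $\bm{y}$ through the sign-flip invariance $\bm{v}\odot\bm{y}\stackrel{(d)}{=}\bm{v}$ (reducing to the all-ones direction $\tilde{\bm{1}}_n$) before rotating to $\bm{e}_1$, whereas you rotate $\bm{y}/\sqrt{n}$ directly to $\bm{e}_1$; both are equivalent applications of Lemma~\ref{rotation-invariant}.
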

\begin{proof}
First realize that we can write
\begin{equation*}
    \begin{split}
        \frac{1}{\sqrt{n}}\sum_{i=1}^{n}y_iv_i &= \tilde{\bm{1}}_n^{T}\left(\bm{v} \odot \bm{y}\right) \stackrel{(d)}{=}\tilde{\bm{1}}_n^{T} \bm{v}
    \end{split}
\end{equation*}
where $\tilde{\bm{1}}_n = \left(\frac{1}{\sqrt{n}}, \dots,\frac{1}{\sqrt{n}}\right)$ with $||\tilde{\bm{1}}_n{||}_2 = 1$ and the fact that $\bm{v} \odot \bm{y} \stackrel{(d)}{=}\bm{v}$ for fixed $\bm{y} \in \{-1, 1\}^{n}$. The idea is now to choose $\bm{U} \in O(n)$ such that $\bm{U}^{T}\tilde{\bm{1}}_n = \bm{e}_1 = (1, 0, \dots, 0)$. Then by using Lemma \ref{rotation-invariant}, it holds
\begin{equation*}
    \begin{split}
        \tilde{\bm{1}}_n^{T}\bm{v} \stackrel{(d)}{=}  \tilde{\bm{1}}_n^{T}\bm{U}\bm{v}\stackrel{(d)}{=} \left(\bm{U}\tilde{\bm{1}}_n\right)^{T}\bm{v} 
        \stackrel{(d)}{=} \bm{e}_1^{T}\bm{v} 
        \stackrel{(d)}{=} v_1
    \end{split}
\end{equation*}
Thus, surprisingly, it suffices to understand the distribution of $v_1$. By Lemma \ref{gaussian-representation}, we know that
$$v_1 \stackrel{(d)}{=} \frac{z_1}{\sqrt{z_1^2+ \dots+ z_n^2}}$$
where $\bm{z} \sim \mathcal{N}(\bm{0}, \mathds{1}_n)$. We are interested in the square of this expression,
\begin{equation*}
    \begin{split}
        \frac{1}{n}\left(\sum_{i=1}^{n}v_i\right)^2 \stackrel{(d)}{=} v_1^2 \stackrel{(d)}{=} \frac{z_1^2}{z_1^2 + \dots + z_n^2} \stackrel{(d)}{=} \frac{z_1^2}{z_1^2 + w}
    \end{split}
\end{equation*}
where we define $w = \sum_{i=2}^{n}z_i^2$, clearly independent of $z_1^2$. Moreover, it holds that $z_1^2 \sim \operatorname{Gamma}\left(\frac{1}{2}, \frac{1}{2}\right)$ and $w \sim \operatorname{Gamma}\left(\frac{n-1}{2}, \frac{1}{2}\right)$. Thus, by Lemma \ref{ratio_gamma} we can conclude that
$$\frac{1}{n}\left(\sum_{i=1}^{n}v_i\right)^2 \sim \operatorname{Beta}\left(\frac{1}{2}, \frac{n-1}{2}\right)$$
\end{proof}

\begin{lemma}
\label{beta-limit}
Consider the sequence of Beta distributions $X_n \sim \operatorname{Beta}(k,n)$. Then it holds that
$$nX_n \xrightarrow[]{(d)}\operatorname{Gamma}(k,1)$$
\end{lemma}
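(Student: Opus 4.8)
The plan is to exploit the Gamma representation of the Beta distribution --- already recorded as Lemma~\ref{ratio_gamma} --- together with the strong law of large numbers, thereby reducing the convergence claim to an elementary almost-sure statement.

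First I would construct a convenient coupling that places all the $X_n$ on one probability space. Let $U \sim \operatorname{Gamma}(k,1)$ and let $E_1, E_2, \dots$ be i.i.d.\ $\operatorname{Exp}(1) = \operatorname{Gamma}(1,1)$ random variables, all mutually independent, and set $W_n := \sum_{j=1}^{n} E_j \sim \operatorname{Gamma}(n,1)$. By Lemma~\ref{ratio_gamma} (with scale parameter $\nu = 1$), the variable $X_n := U/(U + W_n)$ has distribution $\operatorname{Beta}(k,n)$, so it suffices to analyze $nX_n$ on this space.

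Next I would rewrite
$$ n X_n = \frac{nU}{U + W_n} = \frac{U}{\tfrac{U}{n} + \tfrac{W_n}{n}}. $$
Since $U$ is a.s.\ finite, $U/n \to 0$ a.s.; and by the strong law of large numbers applied to $(E_j)$ with mean $1$, $W_n/n \to 1$ a.s. Hence the denominator tends to $1$ a.s., so $nX_n \to U$ almost surely. Because almost-sure convergence implies convergence in distribution, and $nX_n$ has exactly the law of $n$ times a $\operatorname{Beta}(k,n)$ variable, this gives $nX_n \xrightarrow[]{(d)} \operatorname{Gamma}(k,1)$, which is the fact invoked (with $k = \tfrac12$) in the proof of Theorem~\ref{blowup}.

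There is no genuine obstacle here; the only points needing care are setting up the coupling so that the a.s.\ argument is legitimate, and noting that a.s.\ convergence implies convergence in distribution. As an alternative route that avoids the coupling, one can argue directly with densities: the density of $nX_n$ at $y \in (0,n)$ equals $\frac{\Gamma(n+k)}{\Gamma(k)\,\Gamma(n)\, n^{k}}\, y^{k-1}(1 - y/n)^{n-1}$, and combining $(1-y/n)^{n-1} \to e^{-y}$ with the Stirling asymptotic $\Gamma(n+k)/\Gamma(n) \sim n^{k}$ shows this converges pointwise to $\frac{1}{\Gamma(k)}\, y^{k-1} e^{-y}$, the $\operatorname{Gamma}(k,1)$ density; Scheff\'e's lemma then upgrades pointwise density convergence to convergence in distribution.
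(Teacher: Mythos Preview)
Your proposal is correct. Both routes you outline are valid: the coupling argument via the Gamma representation (Lemma~\ref{ratio_gamma}) together with the strong law of large numbers is clean and self-contained within the paper's own toolkit, and the density argument with Stirling and Scheff\'e is an equally standard alternative.

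The paper itself does not actually prove this lemma; it simply records it as a standard fact and points to a handbook reference. So your write-up is strictly more detailed than what appears in the paper. The first of your two arguments is particularly well suited here, since it reuses Lemma~\ref{ratio_gamma} rather than introducing any new machinery, and it yields almost-sure convergence on the coupled space as a bonus before passing to convergence in distribution.
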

\begin{proof}
This is a standard result and can for instance be found in \citet{Walck1996HandbookOS}.
\end{proof}
\section{Further Experiments}
\label{more-exps}
In this section we present additional experimental results on the leave-one-out error.
\subsection{Rank Dynamics for Different Kernels}
\label{app:rank-dynamics}
In this section, we illustrate how the rank $\operatorname{rank}(\bm{K})$ of the kernel evolves as a function of the underlying complexity. As observed in various works, the spike in the double descent curve for neural networks usually occurs around the interpolation threshold, i.e. the point in complexity where the model is able to achieve zero training loss. Naturally, for the kernel formulation, interpolation is achieved when the kernel matrix $\bm{K}$ has full rank. We illustrate in the following how the dynamics of the rank change for different architectures, displaying a similar behaviour as finite width neural networks. In Fig.~\ref{rank_rf} (a) we show a depth $1$ random feature model with feature maps $\sigma(\bm{W}\bm{x})$ where $\bm{W}\in\mathbb{R}^{m \times d}$ and $\sigma$ is the ReLU non-linearity. We use \textit{MNIST} with $n=100$. We measure the complexity through the width $m$ of the model. We observe that in this special case, the critical complexity exactly coincides with the number of samples, i.e. $\operatorname{rank}(\bm{K}) = n$ as soon as $m=n$. This seems counter-intuitive at first as an exact match of the complexity with the sample size is usually not observed for standard neural networks. We show however in Fig.~\ref{rank_rf} (b) and (c) that the rank dynamics indeed become more involved for larger depth. In Fig.~\ref{rank_rf} (b) and (c), we use a $2$-layer random feature model with feature map $\sigma(\bm{V}\sigma(\bm{W}\bm{x}))$ for weights $\bm{W}\in\mathbb{R}^{m_1 \times d}$ and $\bm{V}\in\mathbb{R}^{m_2 \times m_1}$. In (b) we show the change in rank as $m_1$ increases, while $m_2=120$ is fixed. We observe that indeed the dynamics change and the critical complexity is not at $m_1 = n$. Similarly in (c) we fix $m_1 = 10$ and vary $m_2$. Also there we observe that the critical threshold is not located at $n$ but rather a bigger width is needed to reach interpolation. \\[3mm]
\begin{figure}
    \centering
    \begin{subfigure}{0.3\textwidth}
        \includegraphics[width=\textwidth]{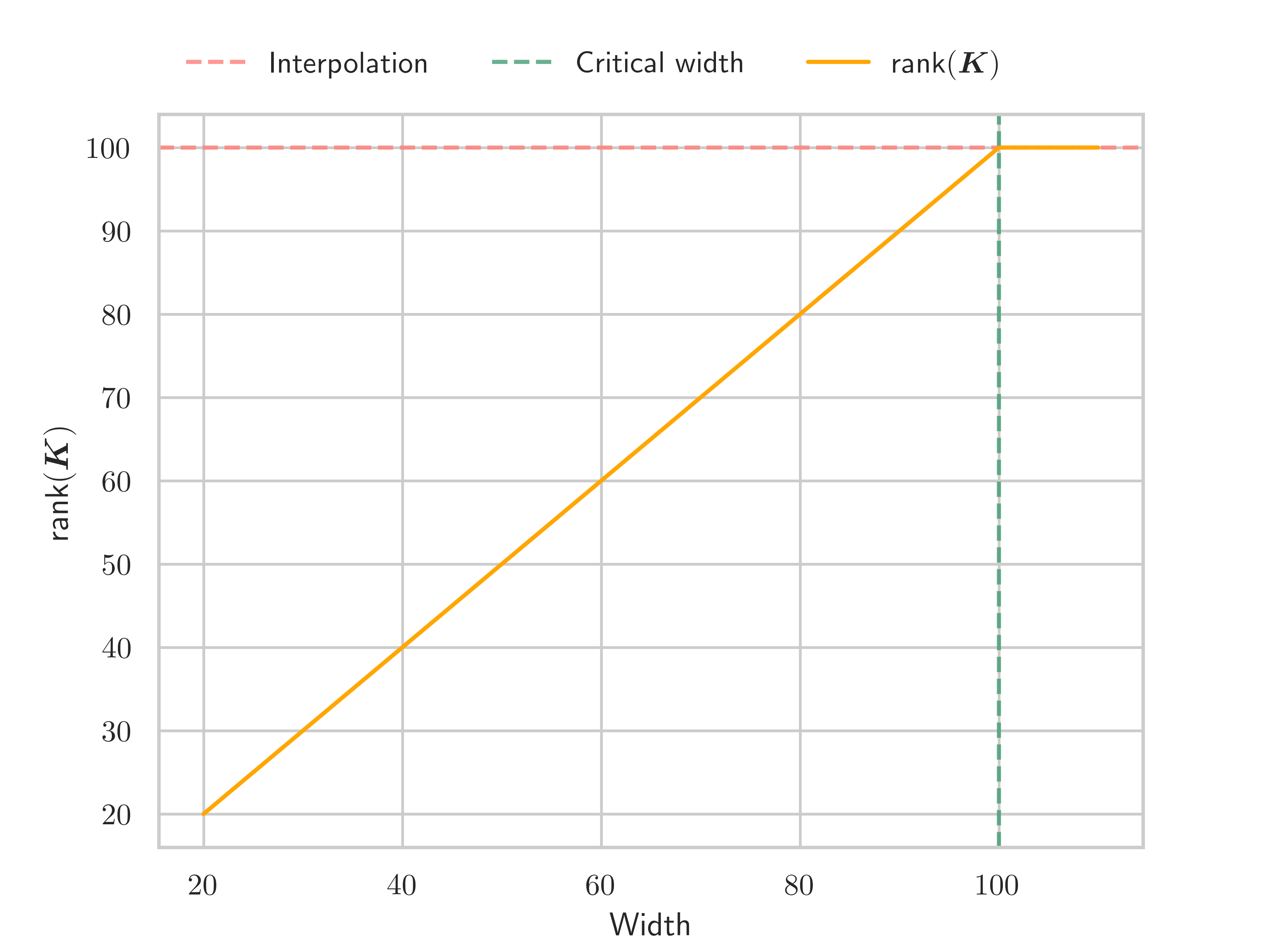}
        \caption{Depth 1}
    \end{subfigure}
    \begin{subfigure}{0.3\textwidth}
        \includegraphics[width=\textwidth]{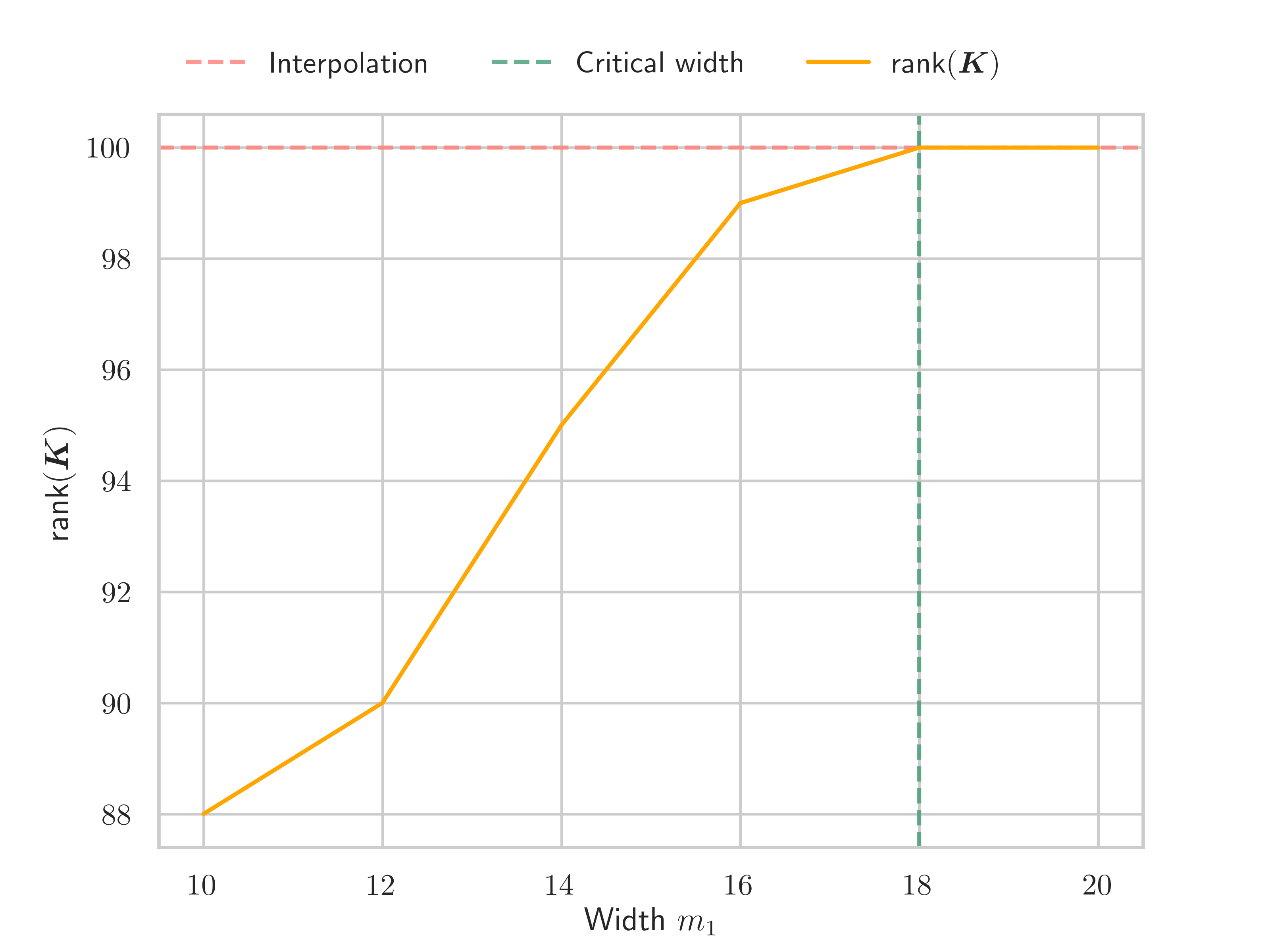}
        \caption{Depth 2, $m_1$}
    \end{subfigure}
    \begin{subfigure}{0.3\textwidth}
        \includegraphics[width=\textwidth]{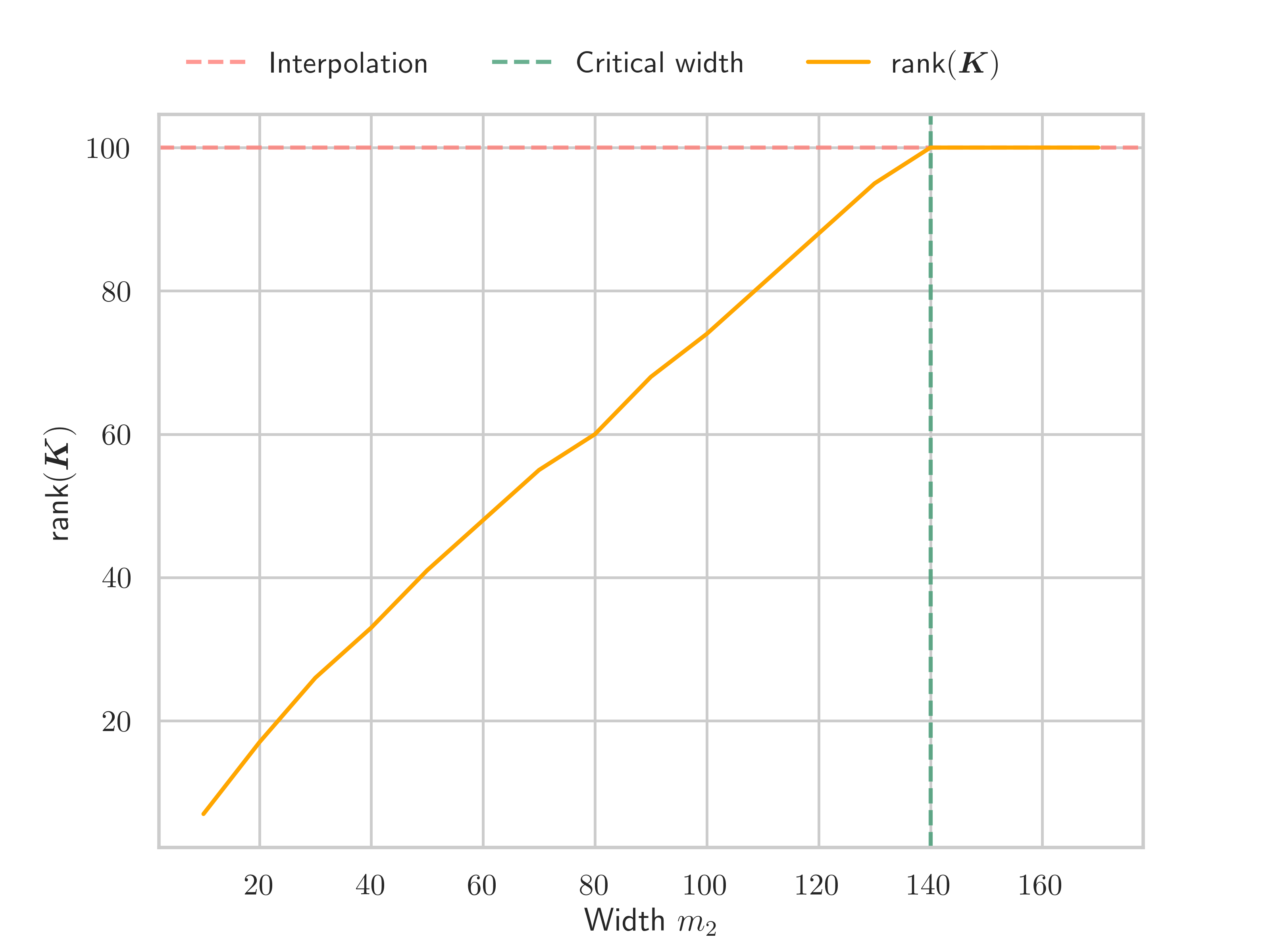}
         \caption{Depth 2, $m_2$}
    \end{subfigure}
    \caption{Kernel rank $\operatorname{rank}(\bm{K})$ as a function of complexity. For (a) we use a depth $1$ random feature model $\sigma(\bm{W}\bm{x})$, $\bm{W}\in\mathbb{R}^{m \times d}$, where complexity is measured through width $m$. In (b) and (c) we use a random feature model of depth $2$, $\sigma(\bm{V}\sigma(\bm{W}\bm{x}))$, $\bm{W}\in\mathbb{R}^{m_1 \times d}$ and $\bm{V}\in\mathbb{R}^{m_2 \times m_1}$. For (b) we visualize the rank as a function of $m_1$, where $m_2=120$ fixed and in (c) as a function of $m_2$, where $m_1=10$ fixed. We use \textit{MNIST} with $n=100$ samples.}
    \label{rank_rf}
\end{figure}
Finally we also study the linearizations of networks that also give rise to kernels. More concretely, we consider feature maps of the form
$$\phi(\bm{x}) = \nabla_{\bm{\theta}}f_{\bm{\theta}}(\bm{x})$$ 
where $f_{\bm{\theta}}$ is a fully-connected network with parameters $\bm{\theta} \in \mathbb{R}^{p}$, as introduced before. Our double-descent framework also captures this scenario. In Fig.~\ref{rank-linearization}, we display the rank dynamics in case of a $2$ layer network $f_{\bm{\theta}}(\bm{x}) = \bm{w}^{T}\sigma(\bm{U}\sigma(\bm{V}\bm{x}))$ where $\bm{w} \in \mathbb{R}^{m_2}$, $\bm{U} \in \mathbb{R}^{m_2 \times m_1}$ , $\bm{V} \in \mathbb{R}^{m_1 \times d}$ and $\sigma$ is the ReLU non-linearity. Again we observe that the rank dynamics change and do not coincide with the number of samples $n$.
\begin{figure}
    \centering
    \includegraphics[width=0.5\textwidth]{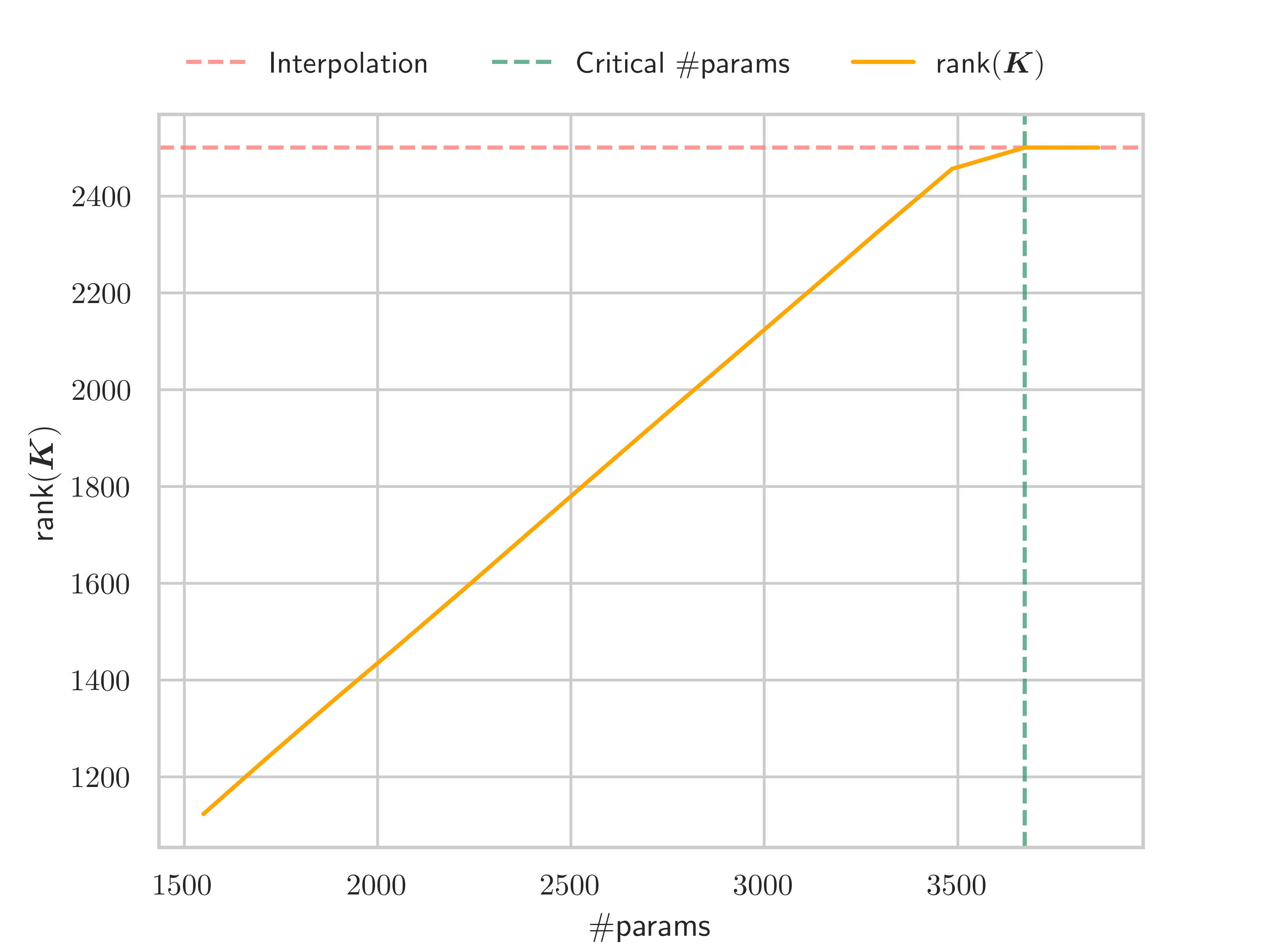}
    \caption{Rank dynamics of $\bm{K}$ for the linearization kernel of $f_{\bm{\theta}}(\bm{x}) = \bm{w}^{T}\sigma(\bm{U}\sigma(\bm{V}\bm{x}))$ where $\bm{w} \in \mathbb{R}^{m_2}$, $\bm{U} \in \mathbb{R}^{m_2 \times m_1}$, $\bm{V} \in \mathbb{R}^{m_1 \times d}$ and $\sigma$ is the ReLU non-linearity. }
    \label{rank-linearization}
\end{figure}
\subsection{LOO as function of depth $L$}
We study how the depth $L$ of the NTK kernel $\Theta^{(L)}$ affects the performance of LOO loss and accuracy. We use the datasets \textit{MNIST} and \textit{CIFAR10} with $n=5000$ and evaluate NTK models with depth ranging from $3$ to $20$. We present our findings in Figure \ref{fig:depth}. Again we see a very close match between LOO and the corresponding test quantity for \textit{CIFAR10}. Interestingly the performance is slightly worse for very shallow models. For \textit{MNIST} we see a gap between LOO loss and test loss, which is due to the very zoomed-in nature of the plot (the gap is actually only $0.015$) as the loss values are very small in general. Indeed we observe an excellent match between the test and LOO accuracy.
\begin{figure}[H]
    \centering
    \begin{subfigure}{0.23\textwidth}
        \includegraphics[width=\textwidth]{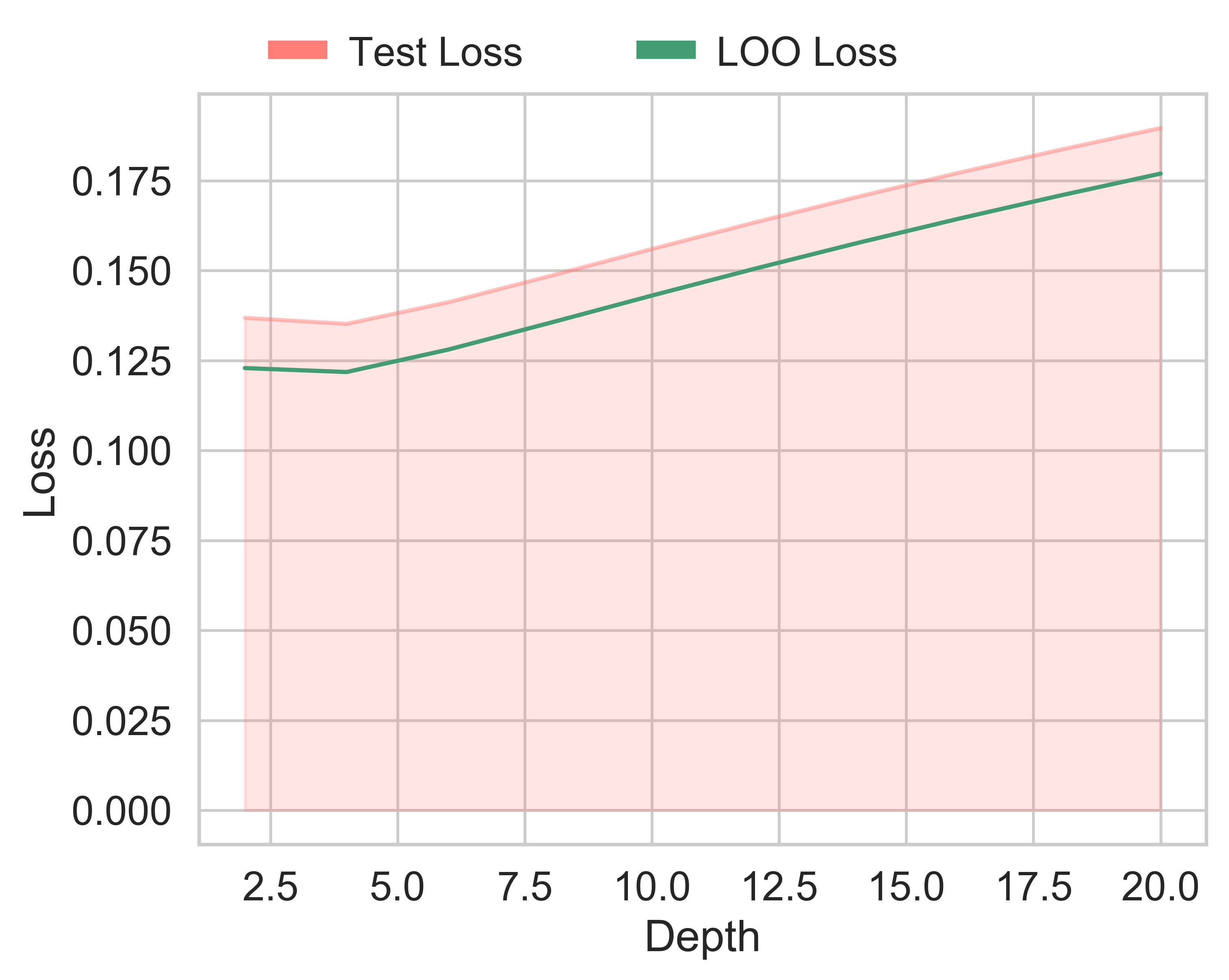}
        \caption{$L_{\text{LOO}}$, \textit{MNIST}}
    \end{subfigure}
    \begin{subfigure}{0.23\textwidth}
        \includegraphics[width=\textwidth]{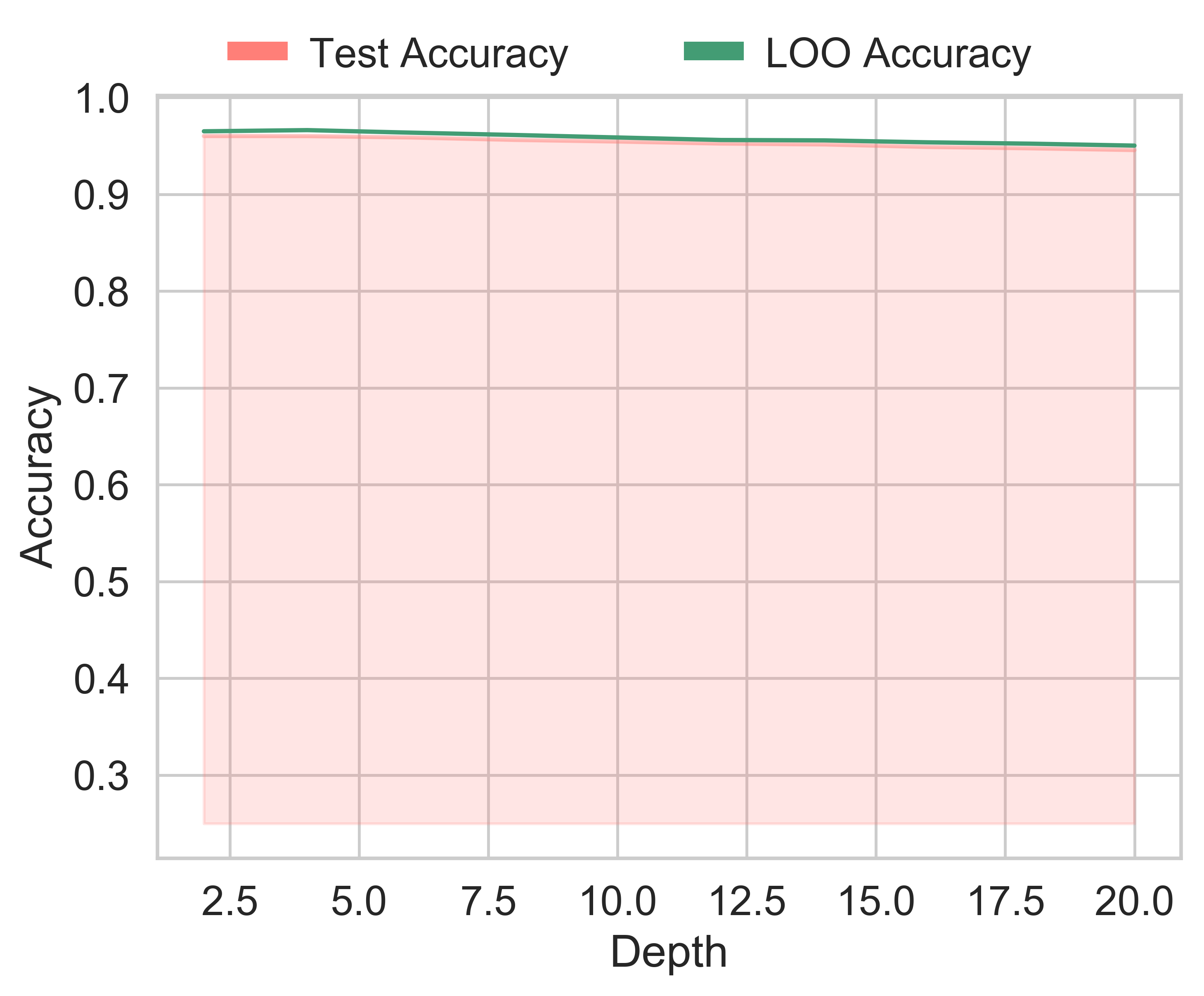}
        \caption{$A_{\text{LOO}}$, \textit{MNIST}}
    \end{subfigure}
    \begin{subfigure}{0.23\textwidth}
        \includegraphics[width=\textwidth]{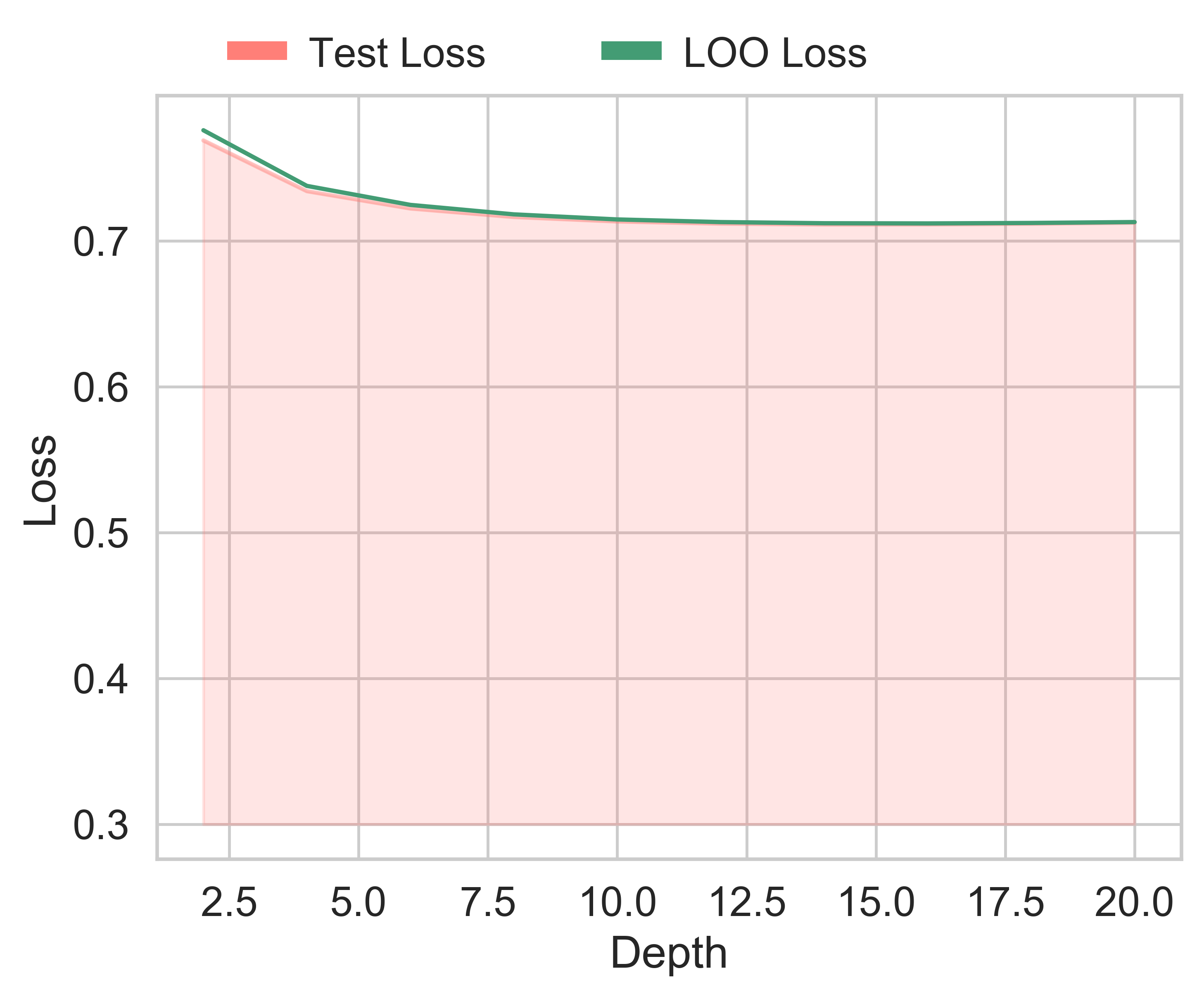}
        \caption{$L_{\text{LOO}}$, \textit{CIFAR10}}
    \end{subfigure}
    \begin{subfigure}{0.23\textwidth}
        \includegraphics[width=\textwidth]{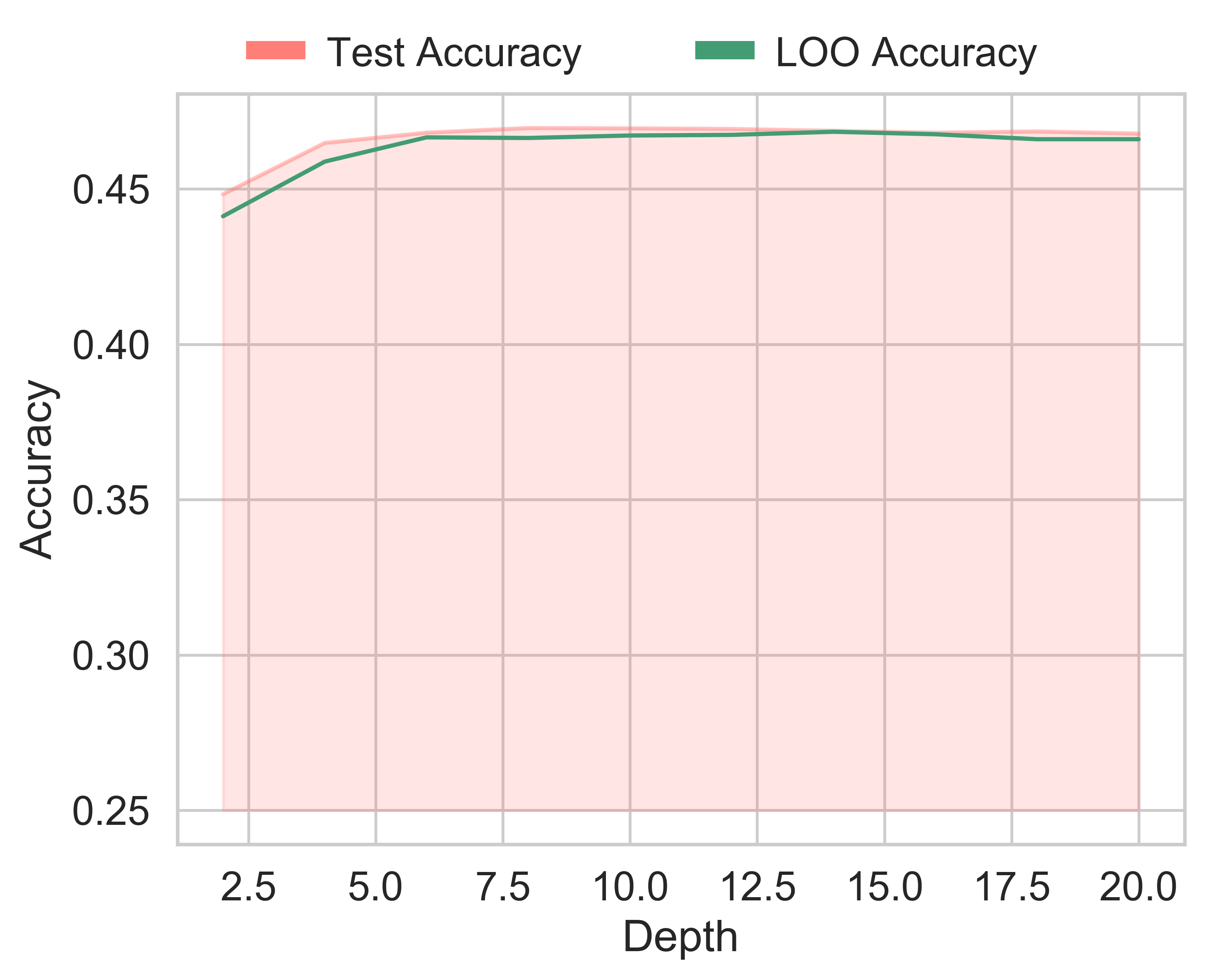}
        \caption{$A_{\text{LOO}}$, \textit{CIFAR10}}
    \end{subfigure}
    \caption{Test and LOO losses (a, c) and accuracies (b, d) as a function of depth $L$. We use fully-connected NTK model on \textit{MNIST} and \textit{CIFAR10}.}
    \label{fig:depth}
\end{figure}
\subsection{Double Descent with Random Labels}
\label{app:double-descent-random}
Here we demonstrate how the spike in double descent is a very universal phenomenon as demonstrated by Theorem \ref{blowup}. We consider a random feature model of varying width $m$ on binary \textit{MNIST} with $n=2000$, where the labels are fully randomized ($p=1$), destroying thus any relationship between the inputs and targets. Of course, there will be no double descent behaviour in the test accuracy as the network has to perform random guessing at any width. We display this in Figure \ref{fig:dd_random}. We observe that indeed the model is randomly guessing throughout all the regimes of overparametrization. Both the test and LOO loss however, exhibit a strong spike around the interpolation threshold.   This underlines the universal nature of the phenomenon, connecting with the fact that Theorem \ref{blowup} does not need any assumptions on the targets.
\begin{figure}
    \centering
    \begin{subfigure}{0.43\textwidth}
        \includegraphics[width=\textwidth]{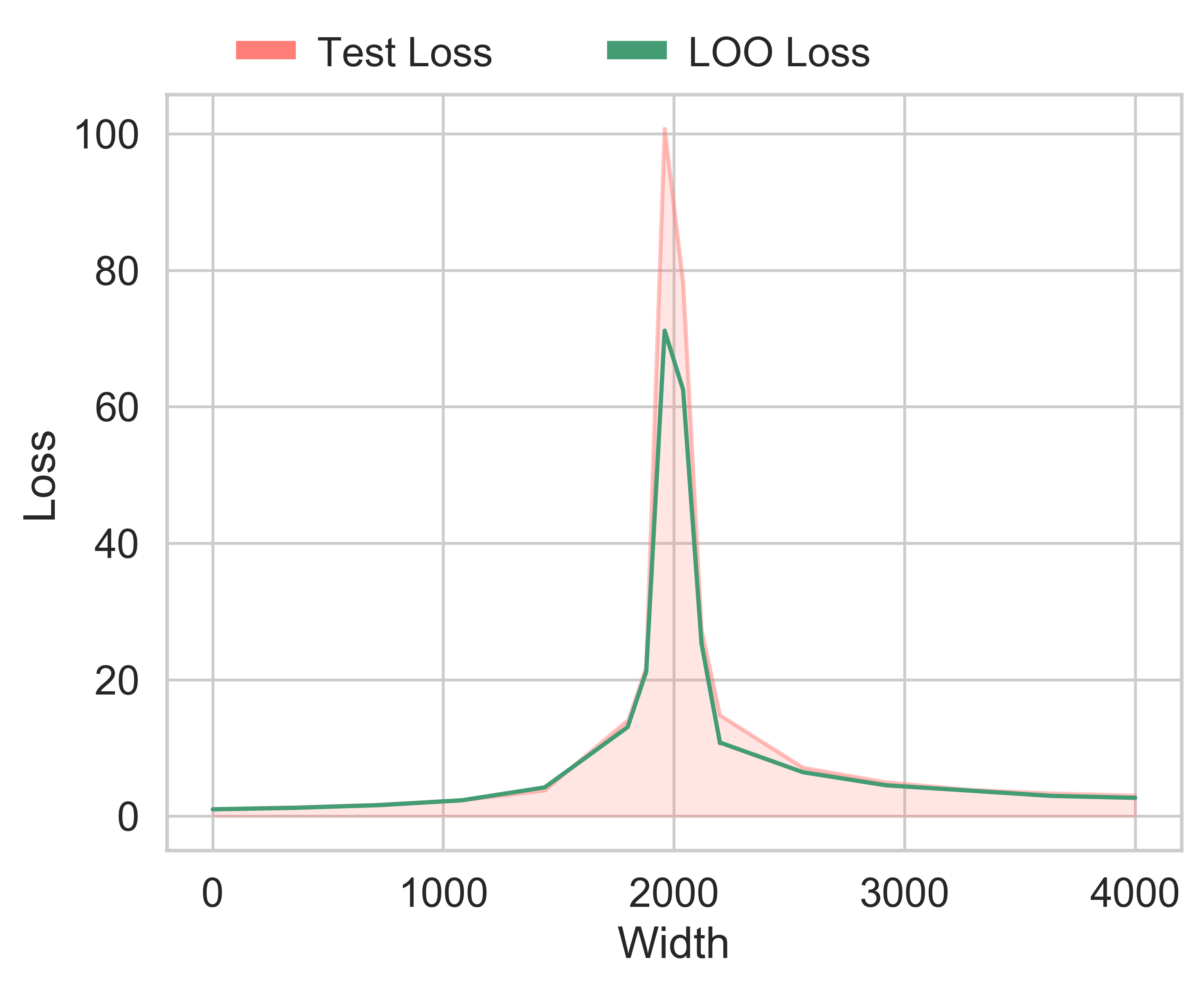}
        \caption{Loss}
    \end{subfigure}
    \begin{subfigure}{0.43\textwidth}
        \includegraphics[width=\textwidth]{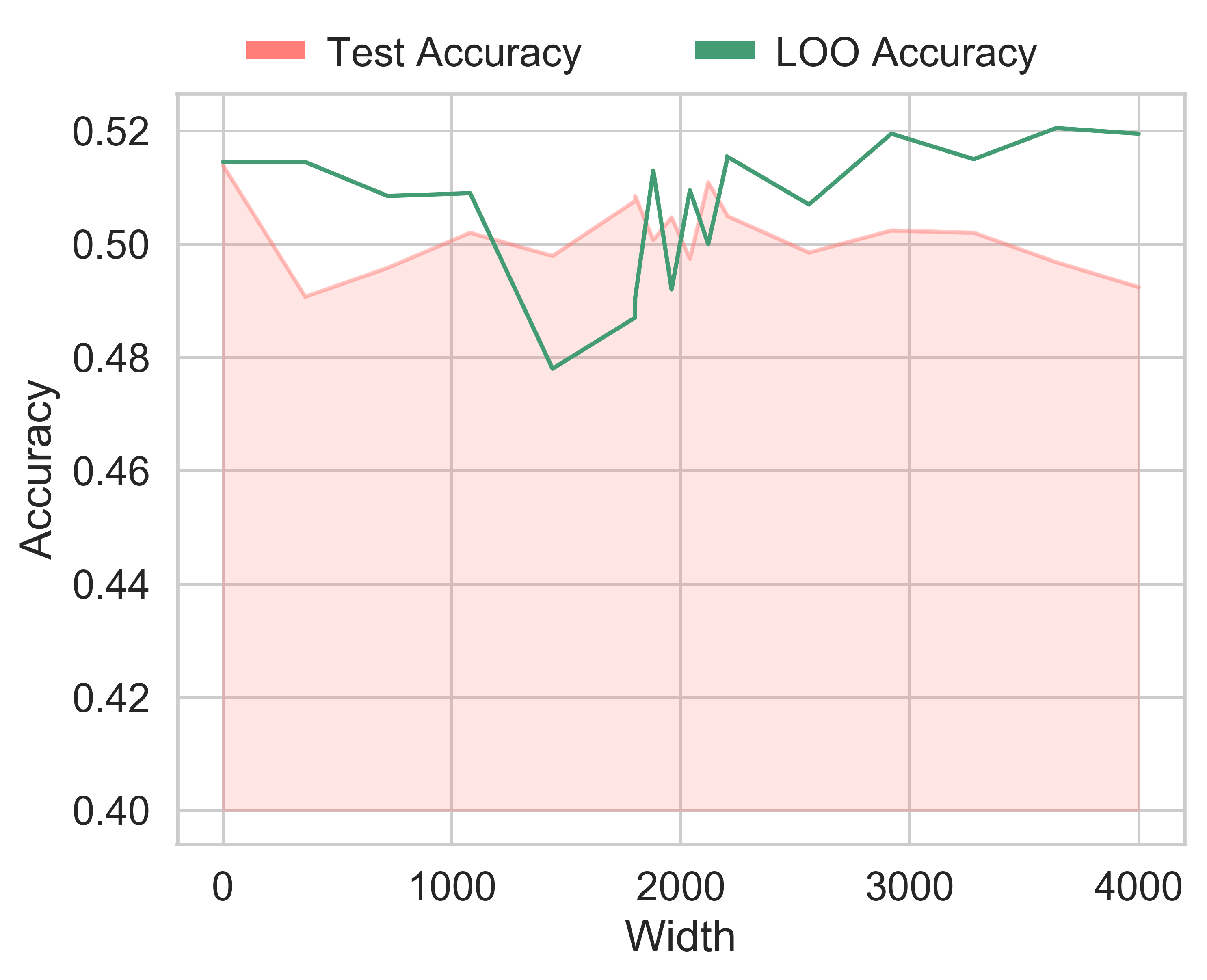}
        \caption{Accuracy}
    \end{subfigure}
    \caption{Test and LOO losses (a) and accuracies (b) as a function of sample width $m$. We use a random feature model on binary \textit{MNIST} with random labels.}
    \label{fig:dd_random}
\end{figure}
\subsection{Transfer Learning Loss}
\label{app:transfer-loss}
We report the corresponding test and leave-one-out losses, moved to the appendix due to space constraints. We display the scores in Table \ref{tab:transfer_loss}. Again we observe a very good match between the test and LOO losses. Moreover, we again find that pre-training on \textit{ImageNet} is beneficial in terms of the achieved loss values. 
\begin{table}[H]
\vskip 0.15in
\begin{center}
\begin{small}
\begin{sc}
\begin{tabular}{l>{\columncolor{blue!6}}c>{\columncolor{blue!6}}c>{\columncolor{red!6}}c>{\columncolor{red!6}}c}
\toprule
Model & $L_{\text{test}}(\phi_{\text{data}})$ & $L_{\text{LOO}}(\phi_{\text{data}})$  & $L_{\text{test}}(\phi_{\text{rand}})$ & $L_{\text{LOO}}(\phi_{\text{rand}})$ \\
\midrule
ResNet18    & $0.602 \pm 0.0053$ & $0.619 \pm 0.009$  & $0.77 \pm 0.003$ & $0.758 \pm 0.0051$ \\
AlexNet    & $0.638 \pm 0.0021$ & $0.639 \pm 0.0036$ & $0.837 \pm 0.002$ & $0.835 \pm 0.004$ \\
VGG16    & $0.657 \pm 0.0062$ & $0.66 \pm 0.0091$ & $0.852 \pm 0.0026$ & $0.834 \pm 0.0049$ \\
DenseNet161 & $0.599 \pm 0.0044$ & $0.613 \pm 0.0097$ & $0.718 \pm 0.0043 $ & $0.731 \pm 0.0081$ \\ 
\bottomrule
\end{tabular}
\end{sc}
\end{small}
\end{center}
\vskip -0.1in
\caption{Test and LOO losses for models pre-trained on \textit{ImageNet} and transferred to \textit{CIFAR10} by re-training the top layer. We use $5$ runs, each with a different training set of size $n=10000$. We compare the \textcolor{blue!55}{pre-trained networks} with \textcolor{red!55}{random networks} to illustrate the benefits of transfer learning.}
\label{tab:transfer_loss}
\end{table}
\end{appendices}

\end{document}